\documentclass[a4paper,english,10pt]{article}

\usepackage{./style/arxiv}

\setcitestyle{square,numbers,sort&compress}

\usepackage[toc,page,header]{appendix}
\usepackage{minitoc}
\usepackage{mathtools} %
\usepackage{xcolor}
\usepackage{titletoc}
\usepackage{subcaption}
\usepackage{booktabs}
\usepackage{cancel}
\usepackage{graphicx}
\usepackage{wrapfig}
\usepackage{pifont}
\usepackage{xspace} %

\newcommand{\cmark}{\ding{51}}
\newcommand{\xmark}{\ding{55}}

\usepackage{amsmath,amsfonts,bm,amssymb,nicefrac,algorithm}
\usepackage{amsthm}
\usepackage{booktabs}          %
\usepackage{graphicx}          %
\usepackage{subcaption}        %
\usepackage{multirow}        %
\usepackage{stmaryrd}        %
\usepackage{yhmath}        %

\def\eqref#1{Eq.~\ref{#1}}   %

\newcommand{\E}{\mathbb{E}}

\newtheoremstyle{mythmstyle} 
{\topsep}    %
{\topsep}    %
{\itshape}   %
{0pt}        %
{\bfseries}  %
{}           %
{ }          %
{}           %
\theoremstyle{mythmstyle}
\newtheorem{theorem}{Theorem}

\newtheorem{lemma}[theorem]{Lemma}

\newtheorem{proposition}[theorem]{Proposition}

 \newtheorem*{proposition*}{Proposition}
 \newtheorem*{theorem*}{Theorem}
  \newtheorem*{lemma*}{Lemma}

\usepackage{physics}
\usepackage{hyperref}
\usepackage[capitalize,noabbrev]{cleveref}

\newcommand{\tnce}{tNCE\xspace}
\newcommand{\tcnce}{tCNCE\xspace}
\newcommand{\stnce}{stNCE\xspace} %
\newcommand{\tstnceMixture}{stNCE-m\xspace}
\newcommand{\tstnceWhite}{stNCE-w\xspace}
\newcommand{\tstnceSelf}{stNCE-s\xspace}
\newcommand{\tstnceOracle}{stNCE-o\xspace}
\newcommand{\ctsm}{CTSM\xspace}
\newcommand{\dsm}{DSM\xspace}
\newcommand{\dualSm}{Dual SM\xspace}
\newcommand{\rneSelf}{RNE-s\xspace}
\newcommand{\rneOracle}{RNE-o\xspace}
\newcommand{\tstnceSelfDsm}{stNCE-s$+$DSM\xspace}
\newcommand{\tnceDsm}{tNCE$+$DSM\xspace}

\begin{document}

\title{Learning Energy-Based Models from Stochastic Interpolants using Spatiotemporal Differences}

\date{}

\author{
    \normalsize
    \textbf{Hanlin Yu}$^{1}$,
    \quad
    \textbf{RuiKang OuYang}$^{2}$,
    \quad
    \textbf{Partha Kaushik}$^{3}$,
    \\
    \normalsize
    \textbf{Arto Klami}$^{1}$,
    \quad
    \textbf{Michael U. Gutmann}$^{4}$,
    \quad
    \textbf{Omar Chehab}$^{3}$
    \\[0.6em]
    \normalsize
    $^{1}$University of Helsinki,
    $^{2}$University of Cambridge
    \\
    \normalsize
    $^{3}$Carnegie Mellon University,
    $^{4}$University of Edinburgh
}

\addtocontents{toc}{\protect\setcounter{tocdepth}{0}}

\maketitle

\begin{abstract}
    
Learning an energy-based model from data samples is a central problem in machine learning. Many recent and popular methods, such as denoising score matching for training energy-based diffusion models, use stochastic interpolants to corrupt data samples at different noise levels indexed by a time variable. This defines a joint density over both the data space and time, and most methods learn its energy through either spatial or temporal differences. We identify distinct failure modes for both of these approaches. To solve them, we propose Spatiotemporal Noise-Contrastive Estimation (\stnce), a framework for learning the energy through joint spatiotemporal differences. \stnce unifies many existing methods and leads to new training objectives. Experiments on images and molecules demonstrate performance competitive with state-of-the-art density estimation methods.

\end{abstract}

\section{Introduction}
\label{sec:introduction}

\paragraph{}
A central problem in machine learning is to learn the underlying probability density of a data distribution from finite samples. Accurate density models are useful in a wide range of applications. 
In computational chemistry, for example, they can be used to correct samples from a proposal distribution to the equilibrium distribution \citep{noe2019bg}. They also enable sampling from compositions of densities (e.g., products or tempered variants) which is used to combine generative models in a modular way~\citep{du2023rrr,skreta2025feynmankac}. Moreover, they formalize the alignment of an AI system with human preferences in terms of sampling from a \textit{tilted} data density, obtained by reweighting the data distribution with a reward function that scores human preferences~\citep{thornton2025ebmdiffusion,he2026rne}.
To support these applications efficiently, recent work has focused on learning likelihoods that can be evaluated in \textit{one shot}, i.e., with a single neural network evaluation~\citep{aggarwal2025boltznce,guth2025dual,ai2026f2d2}. A natural approach is to parameterize the likelihood as an Energy-based Model (EBM), where a neural network defines an unnormalized density.

\paragraph{}
Over the past two decades, a wide range of methods has been proposed for training energy-based models~\citep{gutmann2012nce_jmlr,ceylan2018cnce,hinton2002contrastivedivergence,hyvarinen2005sm,rhodes2020telescoping,choi2022densityratio,gao2020flowcontrastive}. Despite their apparent diversity, we show that they can be organized into a simple taxonomy based on whether they estimate \textit{temporal} or \textit{spatial} variations of the density.

Most approaches proceed in two steps. First, they define corrupted versions of the data by interpolating between data samples and samples from a reference distribution (e.g., a Gaussian)~\citep{rhodes2020telescoping, albergo2025stochasticinterpolant}. This introduces a \textit{time} variable $t$ indexing corruption levels and defines a joint density $p(x,t)$ over the data space and time. Second, the model is trained by estimating how this density varies, either across time or across space. This viewpoint yields our taxonomy. Temporal methods estimate variations across corruption levels. For instance, Noise-Contrastive Estimation (NCE) learns large temporal differences between data and reference distributions~\citep{gutmann2012nce_jmlr}, while subsequent work improves performance by focusing on smaller temporal differences, as in telescoping density ratio estimation and time score matching~\citep{rhodes2020telescoping,choi2022densityratio,yu2025dre,chen2025diffusiondre}. Spatial methods, in turn, estimate variations across data points. For instance, Conditional NCE~\citep{ceylan2018cnce} learns spatial difference between nearby data points. When these points are infinitesimally close, this becomes score matching~\citep{hyvarinen2005sm} that is used widely used to train  diffusion-based EBMs~\citep{du2023rrr,thornton2025ebmdiffusion}.

\paragraph{}
While both approaches for learning energy-based models have been widely applied~\citep{hyvarinen2005sm,song2020ssm,vincent2011dsm,rhodes2020telescoping,choi2022densityratio,yu2025dre,chen2025diffusiondre}, they are known to suffer from catastrophic failure modes that remain poorly understood. 
For example, spatial methods such as score matching~\citep{hyvarinen2005sm} can incur large estimation errors in multi-modal settings~\citep{wenliang2021blindness,koehler2023scorematching}. Conversely, temporal methods based on large differences, such as Noise-Contrastive Estimation, can fail when the high-density regions of the data distribution (its ``support'') do not overlap with those of the reference distribution.
However, these failure modes are rarely stated explicitly or analyzed in a unified manner in the literature. Often the analysis is limited to behavior of a specific method, instead of the broader approach.

A widely held view is that these challenges can be alleviated by learning across multiple corruption levels (or times). The intuition is that some levels are inherently easier than others, and that \emph{amortization}, through parameter sharing across levels, allows harder tasks to benefit from the easier ones. Empirically, this strategy has led to improvements in both temporal and spatial methods.
On the temporal side, multi-level NCE~\citep{rhodes2020telescoping} has been shown to improve sample efficiency, with some theoretical support~\citep{rhodes2020telescoping,chehab2023provable}. On the spatial side, multi-level denoising score matching for energy-based diffusion models~\citep{thornton2025ebmdiffusion} enjoys emerging theoretical justification~\citep{qin2024ebmdiffusion,chewi2025ebmdiffusion}.
Yet, this perspective is not fully satisfactory. Recent work indicates that such approaches do not fundamentally resolve the underlying issues: \citet{srivastava2023multi} show that multi-level NCE does not eliminate support mismatch, while \citet{yu2025cnce} provides empirical evidence that multi-level score matching continues to struggle in very simple multimodal settings.

Overall, our understanding of the failure modes of existing methods remains limited. Progress in the field has largely emphasized the development of new techniques, rather than a principled analysis of the limitations they aim to address. As a result, although recent approaches report strong empirical gains~\citep{he2026rne,guth2025dual,plainer2025consistent,aggarwal2025boltznce,ouyang2026diffusiveclassificationlosslearning}, we still lack a clear understanding of when and why they outperform prior methods. New perspectives are needed, and we provide one by focusing specificaly on the key characteristics of the spatial and temporal differences.

\paragraph{Contributions}
We revisit the problem of learning EBMs through the lens of spatial and temporal variations. We first introduce a unifying taxonomy that organizes existing methods along these two axes. We then explicitly identify their limitations when used in isolation: temporal methods struggle under support mismatch, while spatial methods fail in multi-modal settings, \textit{even when learning across different times} (corruption levels), contrary to popular belief.

Motivated by these observations, we propose to jointly learn spatial and temporal variations. We formalize this task \textit{Spatiotemporal Noise-Contrastive Estimation} (\stnce), a general framework that unifies existing approaches and solves the previous limitations. While some very recent works empirically combine spatial and temporal variations, they lack a clear conceptual justification and a unified framework~\citep{he2026rne,guth2025dual,plainer2025consistent,aggarwal2025boltznce,ouyang2026diffusiveclassificationlosslearning}. Our framework unifies known methods and shows, perhaps remarkably, that they are \textit{all} instances of a \textit{single} binary classification task. Our framework leads also to new training objectives that are competitive with the state-of-the-art across synthetic, image, and molecular benchmarks, while providing a clearer understanding of when and why they succeed.

\begin{figure}[t]
    \centering
    \includegraphics[width=\textwidth]{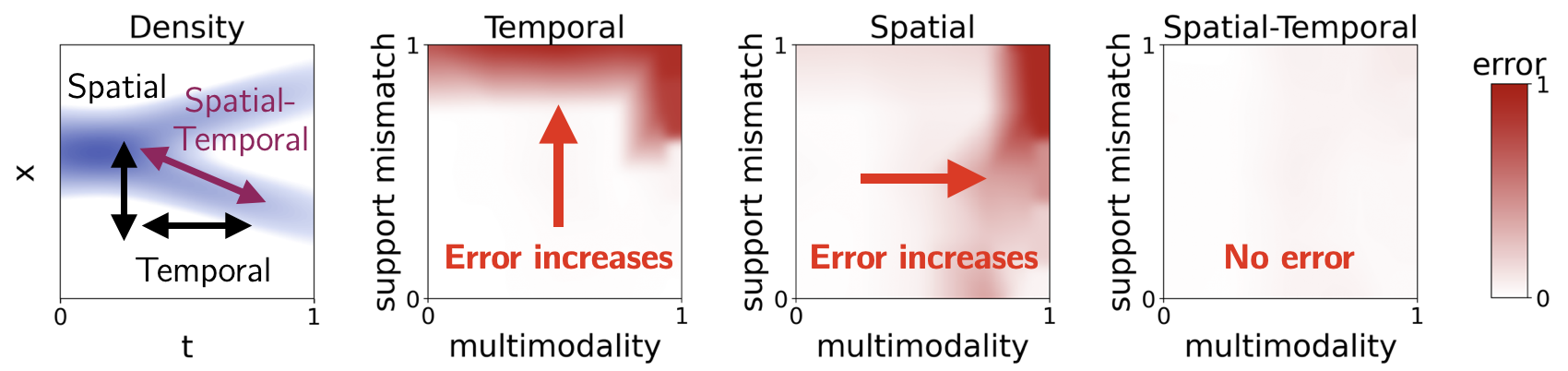}
    \hfill
    \caption{\textit{Left: interpolant density $p_{d}(x, t)$}. We learn it via spatial differences in $x$, temporal differences $t$, or spatiotemporal differences in both $x$ and $t$.
\textit{Right: empirical error of different learning methods}. Temporal methods incur high estimation error when the data distribution has high density in areas of low density for the noise distribution (abusively, ``support mismatch"). Spatial methods incur high estimation error when the data distribution is multi-modal. Our spatiotemporal method  achieves here virtually zero error in both setups.}
\label{fig:teaser_figure}
\end{figure}

\section{Background}
\label{sec:background}

\paragraph{Stochastic Interpolants}
Stochastic interpolants~\citep{rhodes2020telescoping,albergo2025stochasticinterpolant} can be used to sample from a \textit{joint} density $p_d(x, t)$
over data points $x$ and their corruption levels $t$ ($0$ is fully corrupted, $1$ is clean). 
To do so, we can interpolate between clean data samples $x_1 \sim p_1$ and samples $x_0 \sim p_0$ from a reference distribution $p_0$, which is often a standard Gaussian. A simple interpolation is \citep{lipman2023fm}
\begin{align}
    \label{eq:stochastic_interpolant}
    x_t = (1-t) x_0 + t x_1,
\end{align}
but other choices are possible too~\citep{rhodes2020telescoping}.
The joint density can be modelled as $p_{\theta}(x, t) = p(t) p_{\theta}(x | t)$, where $p(t)$ is a distribution over corruption levels chosen by the user, for example uniform between $0$ (fully corrupt data) and $1$ (clean data). The time-dependent density $p_{\theta}(x | t)$ is an energy-based
model
\begin{align}
    p_{\theta}(x, t) = p(t) \, \exp(-E_{\theta}(x, t) - \log Z_t),
\end{align}
where $E_{\theta}(x,t)$ is typically a neural network and $Z_t = \int \exp(-E_{\theta}(x,t))\,dx$ is the normalizing constant, which is generally intractable. The goal is to learn a parameter $\theta$ such that the model of the clean data density $p_{\theta}(x, 1)$ is accurate. A popular way to estimate $\theta$ without having to compute the intractable $Z$ is to model variations of the joint density.

\paragraph{Binary classification for learning variations}
A seminal approach for learning the difference between two log densities $\log p_A(x)$ and $\log p_B(x)$ is to solve a binary classification task. This is core to many  algorithms~\citep{gutmann2012nce_jmlr,ceylan2018cnce,goodfellow2020gan}, including the ones in this paper. Consider data from two classes $p_A(x)$ and $p_B(x)$ and a binary classification task minimizing the logistic loss
\begin{align}
\label{eq:binary_classification}
\mathcal{L}_{\mathrm{logistic}}(F)
= 
- \mathbb{E}_{x \sim p_{A}} \left[ 
\log \mathrm{sigmoid}(F(x)) 
\right]
- 
\mathbb{E}_{x \sim p_{B}} \left[ 
\log (1 - \mathrm{sigmoid}(F(x))) 
\right].
\end{align}
The optimal classifier is the \textit{difference} between log densities $F(x) = \log p_A(x) - \log p_B(x)$~\citep{sugiyama2012density}. 

\paragraph{Learning the energy by temporal differences.}
One class of methods learns the energy by estimating \textit{temporal} differences $E(x, t') - E(x, t) + \log Z(t') - \log Z(t)$.
A canonical approach is Noise-Contrastive Estimation (NCE)~\citep{gutmann2012nce_jmlr}, using a binary classification task as above. First, generate data $x$ from two classes $A$ and $B$ with equal probability,
\begin{align}
    p_{A}(x)
    = 
    p_d(x | t)
    , \quad
    p_{B}(x)
    = 
    p_d(x | t').
\end{align}
For example, generating data from class $A$ is done by corrupting a clean data sample $x_1$ with level $t$ following~\eqref{eq:stochastic_interpolant}. Then, minimizing the logistic loss yields the optimal classifier
\begin{align}
F(x, t, t')
=
E(x, t) - E(x, t') + \log Z(t) - \log Z(t'),
\end{align}
that equals the temporal energy difference. In practice, we jointly parameterize $E_{\theta}(\cdot, \cdot)$ and $\log Z(\cdot)$ using a single neural network and learn them.
In the limit where $t$ and $t'$ are infinitesimally close, that energy difference approaches the \emph{time score} $\partial_t \log p_t(x)$~\citep{choi2022densityratio,yu2025dre}. \citet{rhodes2020telescoping} observed that the performances become stronger when $t$ and $t'$ are sampled between $0$ and $1$ and close to each other; this was later generalized to continuous time \citep{aggarwal2025boltznce,ouyang2026diffusiveclassificationlosslearning}, where one define a specific $p(t)$ and sample $t' \sim p(t'|t)$, where $p(t'|t)$ can be e.g. close to $\mathcal{N}(t';t,\sigma^{2})$ for some small $\sigma$. We provide further details on $p(t)$ and $p(t'|t)$ in Appendix~\ref{app:sec:sampling}. We refer to this as temporal NCE (\tnce).

\paragraph{Learning the energy by spatial differences.}
Another class of methods learns the energy by estimating \textit{spatial} differences $E(x',t) - E(x,t) + \log Z(t) - \log Z(t)$. 
A fundamental example is Conditional Noise-Contrastive Estimation (CNCE)~\citep{ceylan2018cnce}, which also solves a binary classification task. First, generate data $(x, x')$ from two classes A and B
\begin{align}
    p_A(x, x') = p_d(x | t) p_n(x' \mid x)
    ,\quad
    p_B(x, x') = p_d(x' | t) p_n(x' \mid x)
\end{align}
To generate data from class $A$, 
first generate a data point $x$ with corruption level $t$ following~\eqref{eq:stochastic_interpolant}, and then perturb it into $x'$, using a perturbation kernel $p_n$ chosen by the user. Switching the positions of $x$ and $x'$ yields data from class $B$. Then, minimizing the logistic loss yields the optimal classifier
\begin{align}
F(x, x', t)
=
- E(x', t) + E(x, t)
+ \log p_n(x' \mid x) - \log p_n(x \mid x').
\end{align}
It equals the spatial energy difference up to the known transition densities.
In practice, we parameterize $E(\cdot, \cdot)$ using a neural network and learn it. In the limit where $x$ and $x'$ are infinitesimally close, that energy difference approaches the \emph{space score} $\nabla_x \log p_t(x)$~\citep{hyvarinen2005sm,ceylan2018cnce}. A natural continuous time extension of CNCE is to sample $t$ between $0$ and $1$ and form the loss as an expectation over $p(t)$; we refer to this algorithm as temporal CNCE (\tcnce).

\section{Limitations of current methods}
\label{sec:limitations}

\begin{figure}[t]
\centering
\includegraphics[width=\textwidth]{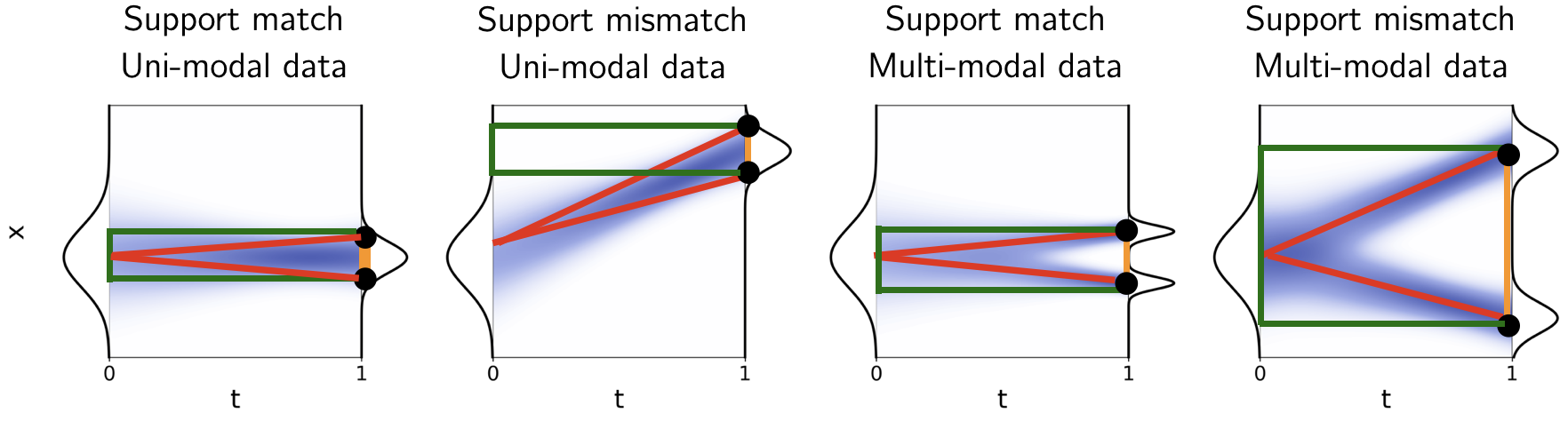}
\caption{
Ability to compute log-likelihood differences between two data points using learned spatial (orange), temporal (green), or spatiotemporal (red) differences. Methods based purely on spatial or temporal differences \textit{fail when traversing low-density regions} (white), where these differences are poorly estimated. In contrast, spatiotemporal methods remain within high-density regions (blue), enabling accurate estimation.
}
\label{fig:limitations}
\end{figure}

Despite their widespread success, current methods can suffer from failure modes. It is well known that methods based on both spatial differences~\citep{ceylan2017cncethesis,vincent2011dsm} and temporal differences~\citep{gutmann2012nce_jmlr} can fail even in simple settings. A common belief is that these issues can be mitigated by \textit{learning across multiple corruption levels}, whether through temporal schemes~\citep{rhodes2020telescoping,choi2022densityratio,yu2025dre,chen2025diffusiondre} or spatial ones~\citep{du2023rrr,thornton2025ebmdiffusion,song2021sde}. In this section, we argue—and empirically demonstrate—that this is \textit{not} the case: such approaches still fail in the same simple regimes.

\paragraph{A unifying heuristic.}
To analyze these failures, we study the ability of a model to relate the log-density of two points. For any two points, $(x,t)$ and $(x',t')$, we decompose the difference into spatial and/or temporal differences that need to be learnt by the model. This viewpoint captures both local properties (e.g., mode geometry for nearby points) and global properties (e.g., relative mode weights for distant points). Crucially, we will show that purely spatial or temporal decompositions inevitably require evaluating $p(x,t)$ in low-density regions, where samples are scarce and estimation is therefore worse. Figure~\ref{fig:teaser_figure} empirically validates these claims to hold even for a bivariate density.

\paragraph{Spatial differences struggle with multimodality.}
Spatial methods learn $p_\theta(x,t)$ via differences in $x$. Using our heuristic, we decompose the log-density difference between $x$ and $y$ as
\begin{align}
\log p(x,1)-\log p(y,1)
=\sum_{i=0}^{K-1}\!\log p(x_{i+1},1)-\log p(x_i,1).
\end{align}
When $p_1$ is multimodal (Figure~\ref{fig:limitations}, two right panels), these spatial differences in orange necessarily traverse low-density regions (white). Since such regions are poorly sampled, the corresponding spatial differences are poorly estimated. As a result, spatial methods—including energy-based diffusion models—struggle to capture global structure such as mode weights~\citep{yu2025cnce}.

\paragraph{Temporal differences suffer from support mismatch.}
Temporal methods learn $p_\theta(x,t)$ via differences in $t$. A simple algorithm, which we term \tnce, works by sampling $t$ and $t'$, and constructing binary classification problems; this was explored in \citep{aggarwal2025boltznce,ouyang2026diffusiveclassificationlosslearning}. Again, using our heuristic, we decompose the log-density difference between $x$ and $y$ as
\begin{align}
\log p(x,1)-\log p(y,1)
=\log p(x,0)-\log p(y,0)+\Delta,
\end{align}
where $p(\cdot,0)$ is a known reference distribution, and
\begin{align}
\Delta
=\sum_{i=0}^{K-1}\!\bigl(\log p(x,t_{i+1})-\log p(x,t_i)\bigr)
-\sum_{i=0}^{K-1}\!\bigl(\log p(y,t_{i+1})-\log p(y,t_i)\bigr).
\end{align}
When the supports of $p_0$ and $p_1$ differ (Figure~\ref{fig:limitations}, second and fourth panels), these temporal differences in green traverse low-density regions in white. As before, this degrades estimation. 

\paragraph{Spatiotemporal differences avoid both issues.}
We now consider a joint discretization in space and time. Let $\{(x_i,y_i,t_i)\}_{i=0}^K$ with $x_i,y_i \sim p_{t_i}$. Then
\begin{equation}
\log p(x,1)-\log p(y,1)
=\log p(x_0,0)-\log p(y_0,0)+\Delta,
\end{equation}
where
$
\Delta
=\sum_{i=0}^{K-1}\!\bigl(\log p(x_{i+1},t_{i+1})-\log p(x_i,t_i)\bigr)
-\sum_{i=0}^{K-1}\!\bigl(\log p(y_{i+1},t_{i+1})-\log p(y_i,t_i)\bigr).
$

As illustrated in Figure~\ref{fig:limitations}, these spatiotemporal differences in red can be chosen to remain within high-density regions in blue. Since such regions are well sampled, the corresponding spatiotemporal differences are accurately estimated.

This observation may explain the success of recent spatiotemporal methods~\citep{guth2025dual,he2026rne,aggarwal2025boltznce,ouyang2026diffusiveclassificationlosslearning,plainer2025consistent}, as they simultaneously address multimodality and support mismatch. We empirically validate this conjecture in Figure~\ref{fig:teaser_figure}. This motivates the following unifying framework for spatiotemporal methods. 

\section{Learning the energy by spatiotemporal differences}

In this section, we introduce a framework for estimating the parameters of an energy-based model by learning its variation jointly in data space and time. Our approach reduces to a \textit{single} binary classification problem that unifies many popular methods~\citep{hyvarinen2005sm,vincent2011dsm,gutmann2012nce_jmlr,ceylan2018cnce,rhodes2020telescoping,choi2022densityratio,yu2025dre,chen2025diffusiondre,guth2025dual,aggarwal2025boltznce,ouyang2026diffusiveclassificationlosslearning}; see Table~\ref{table:stnce_special_cases}.

\paragraph{Spatiotemporal Noise-Contrastive Estimation (\stnce)}
We propose to learn the energy by estimating its \textit{spatiotemporal} differences, $E(x’, t’) - E(x, t)$, by solving a binary classification task. We call this approach Spatiotemporal Noise-Contrastive Estimation (\stnce). First, generate data $(x, t, x', t')$ from two classes $A$ and $B$
\begin{align}
    p_A(x, t, x', t') 
    = p_d(x, t) p_n(x', t' | x, t), \quad
    p_B(x, t, x', t') 
    = p_d(x', t') p_n(x', t' | x, t).
\end{align}
To generate data from class $A$, first  sample a data point and its corruption level $(x, t)$ following~\eqref{eq:stochastic_interpolant}, and then perturb them into $(x', t')$ using a perturbation kernel $p_n$ that is chosen by the user. Switching the positions of $(x, t)$ and $(x', t')$ in the tuple yields data from class $B$.
Efficient strategies for sampling $(x,t,x',t')$ are discussed in Appendix~\ref{app:sec:sampling}.
Then, minimizing the logistic loss yields the optimal classifier
\begin{equation}
\label{eq:stnce_classifier}
F(x, t, x', t')
=
E(x',t')\!-\!E(x,t)
+\!\log Z(t')\!-\!\log Z(t)
+\!\log p_n(x',t'\mid x,t)\!-\!\log p_n(x,t\mid x',t'),
\end{equation}
which equals the spatiotemporal energy difference up to known terms. In practice, we parameterize $E(x,t)$ and $\log Z(t)$ using neural networks and learn them jointly. Note that this can be viewed as the CNCE from Section~\ref{sec:background} but on the augmented space $(x, t)$, which we use to establish Theorem \ref{theorem:sample_efficiency}.

\paragraph{Statistical guarantees.}
We next present statistical guarantees on the \stnce estimator: it is consistent and we can quantify its sample-efficiency.

\begin{theorem}[Non-parametric consistency]
\label{theorem:fisher_consistency}
Let $p_\theta(x,t) = \exp(-E_\theta(x,t))/Z_\theta(t)$ be a parametric model, and suppose the data is generated with parameter $\theta^*$. Then the \stnce estimator $\hat\theta$, obtained by minimizing the empirical logistic loss with $N$ samples, leads to a pointwise consistent estimator:
\begin{align}
    p_{\hat\theta} \to p_{\theta^*} \quad \text{as } N \to \infty.
\end{align}
\end{theorem}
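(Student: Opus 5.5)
The argument has two parts. First I identify the population minimizer of the \stnce logistic loss. Then I pass from population to sample by the standard M-estimation argument, as in the consistency proofs for NCE~\citep{gutmann2012nce_jmlr} and CNCE~\citep{ceylan2018cnce}. As remarked after \eqref{eq:stnce_classifier}, \stnce is CNCE on the augmented space $z=(x,t)$. So I will write $p_A(z,z')=p_d(z)p_n(z'\mid z)$ and $p_B(z,z')=p_d(z')p_n(z'\mid z)$, and treat the classifier $F_\theta(z,z')$ as the function of $\theta$ given by \eqref{eq:stnce_classifier}, with $Z$ replaced by the learned $\log Z_\theta(t)$.

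\textbf{Step 1 (population optimum).} By the result recalled in the Background, the logistic loss \eqref{eq:binary_classification} over all measurable $F$ is minimized exactly when
\[
F^\star=\log p_A-\log p_B=\log p_d(z)-\log p_d(z')+\log p_n(z'\mid z)-\log p_n(z\mid z'),
\]
and this minimizer is unique almost everywhere on the union of the supports of $p_A$ and $p_B$, by strict convexity of the logistic loss in $F$. With $\theta^*$ the model's $F_{\theta}$ attains $F^\star$, so $\theta^*$ minimizes the population loss $J(\theta)$. For any minimizer $\theta$, the known kernel terms cancel and we get
\[
\log p_\theta(z')-\log p_\theta(z)=\log p_{\theta^*}(z')-\log p_{\theta^*}(z)
\]
for $p_A$-almost every pair. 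I then need the kernel $p_n$ to be positive and to connect the support of $p_d$, for instance a Gaussian in $x$ together with a kernel in $t$ of full support on $[0,1]$. Under that assumption the ratio $p_\theta/p_{\theta^*}$ is constant along every transition, hence constant on the support. Both densities are normalized in $x$ for each $t$, and $p(t)$ is fixed, so the constant is $1$. This gives $p_\theta=p_{\theta^*}$ wherever $p_d>0$. This identifiability step is where the temporal part of the kernel matters. If $p_n$ never changed $t$, the conditional normalizations $\log Z(t)$ would not be tied across different times. Even then they are fixed by normalization in $x$ at each $t$, so I would state the connectivity assumption in a form that covers both cases.

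\textbf{Step 2 (empirical to population).} Let $J_N(\theta)$ be the empirical loss built from $N$ i.i.d.\ tuples. I will argue that $\sup_{\theta\in\Theta}|J_N(\theta)-J(\theta)|\to 0$ in probability, using a uniform law of large numbers. This requires $\Theta$ compact, $F_\theta$ continuous in $\theta$, and an integrable envelope; the envelope is available because $|\log\mathrm{sigmoid}(F)|\le |F|+\log 2$, so it suffices to have $\sup_\theta|F_\theta|$ integrable. Combined with Step 1, the minimizer is unique in the density sense: each $\theta$ with $p_\theta\neq p_{\theta^*}$ has $J(\theta)>J(\theta^*)$. By compactness and continuity this gap is bounded away from zero outside any neighbourhood of the set of minimizers. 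The standard argmin consistency theorem then gives $\hat\theta$ converging in probability to that set. Continuity of $\theta\mapsto p_\theta(z)$ then yields $p_{\hat\theta}(z)\to p_{\theta^*}(z)$ pointwise.

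\textbf{Main obstacle.} I expect the identifiability part of Step 1 to be the hardest. It requires turning equality of pairwise log-ratios on the support of $p_n$ into global equality of the densities. This needs the positivity and connectivity assumption on $p_n$, which the theorem statement leaves implicit, and I would make it explicit. The rest is routine; the theorem is labelled non-parametric, so I could also stop after Step 1 and Step 2's uniform convergence.
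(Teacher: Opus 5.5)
Your proof is correct and more complete than the paper's, though it is organized differently. The paper (Appendix~\ref{app:ssec:fisher_consistency}) uses the Bregman-divergence form of the logistic loss (Lemma~\ref{lemma:logistic_loss_as_bregman_divergence}) to get $F_{\theta_{\min}}=F^*$, which is equivalent to your strict-convexity step. It then asserts the log-ratio identity for all pairs in the support and concludes $\log p_{\theta_{\min}}=\log p_d+C$ with a single constant $C$. It fixes $C$ by imposing $p_\theta(\cdot\mid 0)=p_0$ and stops there, so it proves only Fisher consistency of the population minimizer.

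Your version adds three things. First, it makes explicit the positivity/connectivity assumption on $p_n$ that the paper uses silently: the identity holds only on pairs charged by the kernel. Second, your normalization argument covers kernels that never move $t$. For such kernels the constant may depend on $t$, so the paper's $t=0$ anchor would fix only $C(0)$. Third, your Step~2 actually proves the $N\to\infty$ claim, and does so without parametric identifiability. It uses a uniform LLN with the envelope $|\log\mathrm{sigmoid}(F)|\le|F|+\log 2$, a well-separated minimum by compactness, convergence of $\hat\theta$ to the minimizing set, and then continuity of $\theta\mapsto p_\theta(z)$.

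What the paper's anchor buys in return is that it still works when $\log Z_\theta(t)$ is a free learned function. That is the situation in the paper's implementation, and your own phrase ``the learned $\log Z_\theta(t)$'' suggests it too. In that setting $F_\theta$ is invariant under $\log Z_\theta(t)\mapsto\log Z_\theta(t)+c$, and $p_\theta(\cdot\mid t)$ is not normalized for general $\theta$. Your step ``both densities are normalized, so the constant is $1$'' is then unavailable. Only an anchor can pin the constant, and the anchor in turn needs the kernel to connect $t=0$ to other times.

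So state explicitly which assumption you use. Either $Z_\theta(t)$ is the true partition function, as the theorem's notation suggests, or the $t=0$ slice is fixed. The same choice underlies your Step~2 claim that $p_\theta\neq p_{\theta^*}$ implies $J(\theta)>J(\theta^*)$.

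One small correction: for your $F^\star$ to come out as written, class $B$ must be $p_d(z')\,p_n(z\mid z')$, the appendix's $p_-$. It should not be $p_d(z')\,p_n(z'\mid z)$ as copied from the main text.
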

We prove this in Appendix~\ref{app:ssec:fisher_consistency}, and next characterize the estimator’s asymptotic sample efficiency.

\begin{theorem}[Sample efficiency]
\label{theorem:sample_efficiency}
(Based on \citet{yu2025cnce})
Let
\begin{align}
F_\theta(x,t,x',t')
= \log p_\theta(x,t) - \log p_\theta(x',t')
+ \log p_n(x',t'\mid x,t)
- \log p_n(x,t\mid x',t').
\end{align}
Then the estimator satisfies
\begin{equation}
\|\hat\theta - \theta^*\|^2
= \frac{1}{N_d}\,\mathrm{Tr}\!\bigl(C_1^{-1} C_2 C_1^{-1}\bigr)
+ o\!\left(\frac{1}{N_d}\right),
\end{equation}
with the following, where expectations are taken under $p_d(x,t)p_n(x',t'\mid x,t)$:
\begin{align*}
C_1
= \E\!\left[
\nabla_\theta F_\theta \nabla_\theta F_\theta^\top
\,\sigma(-F_{\theta^*})\,\sigma(F_{\theta^*})
\right]
,\quad
C_2
= \E\!\left[
\nabla_\theta F_\theta \nabla_\theta F_\theta^\top
\,\sigma(-F_{\theta^*})^2
\right].
\end{align*}
\end{theorem}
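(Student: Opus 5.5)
The plan is to treat \stnce as CNCE on the augmented variable $z=(x,t)$, as noted after \eqref{eq:stnce_classifier}, and run the standard M-estimator argument. Write $u=(z,z')$ with $z\sim p_d(z)$ and $z'\sim p_n(z'\mid z)$. By the symmetry of the construction, class-$B$ samples are class-$A$ samples with the two slots swapped, so $F_\theta(z',z)=-F_\theta(z,z')$. The logistic loss then collapses to a single expectation under $p_d(z)p_n(z'\mid z)$:
\begin{align*}
J_N(\theta)=\frac{2}{N_d}\sum_{i=1}^{N_d}\log\bigl(1+\exp(-F_\theta(u_i))\bigr),
\end{align*}
with $u_i\stackrel{\text{iid}}{\sim}p_d(z)p_n(z'\mid z)$. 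Here $\log Z_\theta(t)$ is absorbed into $\theta$, since it is parameterized jointly. Its population counterpart $J(\theta)$ is minimized at $\theta^*$, because there $F_{\theta^*}$ is the Bayes-optimal log-odds (the binary classification fact from Section~\ref{sec:background}). By Theorem~\ref{theorem:fisher_consistency}, $\hat\theta\to\theta^*$, and I will assume identifiability so that this convergence is in parameter space.

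The key steps, in order, are as follows.

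\textbf{Step 1.} Compute the per-sample gradient. With $\ell(u)=\log(1+e^{-F_\theta})$, the gradient is $\nabla_\theta\ell=-\sigma(-F_\theta)\nabla_\theta F_\theta$.

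\textbf{Step 2.} Compute the Hessian at $\theta^*$:
\begin{align*}
\sigma(-F)\sigma(F)\nabla F\nabla F^\top-\sigma(-F)\nabla^2F.
\end{align*}
I then show that the second term has zero expectation. The swap symmetry gives $\nabla^2F(z',z)=-\nabla^2F(z,z')$. At the optimum, $p_d(z)p_n(z'|z)\,\sigma(-F_{\theta^*}(u))$ is symmetric under the swap: $\sigma(-F)=p_B/(p_A+p_B)$, so this product equals $p_Ap_B/(p_A+p_B)$. Hence $\E[\sigma(-F_{\theta^*})\nabla^2F]=0$, and the expected Hessian is exactly $C_1$.

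\textbf{Step 3.} Taylor-expand $\nabla J_N(\hat\theta)=0$ around $\theta^*$. This gives
\begin{align*}
\hat\theta-\theta^*=-\bigl(\nabla^2J_N(\theta^*)\bigr)^{-1}\nabla J_N(\theta^*)+o_p(N_d^{-1/2}).
\end{align*}
By the law of large numbers, $\nabla^2J_N\to 2C_1$. The same symmetry argument gives $\E[\sigma(-F)\nabla F]=0$ at $\theta^*$. This is also the first-order condition, and it makes $\mathrm{Cov}(\nabla\ell)=\E[\sigma(-F)^2\nabla F\nabla F^\top]=C_2$. The central limit theorem then gives $\sqrt{N_d}\,\nabla J_N(\theta^*)\Rightarrow\mathcal N(0,4C_2)$.

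\textbf{Step 4.} Combine. The asymptotic covariance is $C_1^{-1}C_2C_1^{-1}/N_d$, and taking the trace of the covariance gives the stated expansion of $\E\|\hat\theta-\theta^*\|^2$.

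The main obstacle is the symmetry bookkeeping in Step 2: verifying that the $\nabla^2F$ term and the mean of the score both vanish. This requires the swapped density $p_d(z')p_n(z|z')$ to be the class-$B$ law under the correctly specified model, so I will check it carefully with the $\log Z(t)$ terms included. If that fails, I would fall back on citing the CNCE result of \citet{yu2025cnce} verbatim on the augmented space. A second, technical point is upgrading convergence in distribution to the $o(1/N_d)$ statement for the mean squared error. For that I will assume the standard regularity conditions (uniform integrability, $C_1$ invertible, smoothness of $F_\theta$) under which \citet{yu2025cnce} state their result, and I will note that the result is meant in that asymptotic sense.
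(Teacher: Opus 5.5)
Your proposal is correct and follows the paper's route: the paper obtains this theorem by viewing \stnce as CNCE on the augmented variable $(x,t)$ and invoking the CNCE asymptotic result of \citet{yu2025cnce}. Your proof makes the same reduction and then spells out the sandwich argument that the paper leaves to the citation, using the swap-antisymmetry of $F_\theta$ to remove both the $\nabla^2 F$ term and the mean of the score, and you are right to read the left-hand side as $\E\|\hat\theta-\theta^*\|^2$ under the usual regularity and identifiability (invertible $C_1$) assumptions.
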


\paragraph{Design choices.}
The main design choice is the \textit{perturbation kernel} $p_n$. While our \stnce estimator is consistent for any choice of $p_n$, its sample efficiency depends critically on this choice. As shown in Table~\ref{table:stnce_special_cases}, specific choices recover many existing methods based on temporal differences~\citep{gutmann2012nce_jmlr,rhodes2020telescoping,choi2022densityratio,yu2025dre}, spatial differences~\citep{ceylan2018cnce,hyvarinen2005sm,song2020ssm}, and spatiotemporal differences~\citep{guth2025dual,aggarwal2025boltznce,he2026rne,ouyang2026diffusiveclassificationlosslearning}.

In principle, one could optimize the asymptotic variance expression with respect to $p_n$. As is often the case, this is analytically difficult. Instead, we adopt an empirical approach and propose choices of $p_n$ that are heuristically well suited to the problem.

\section{Choices of perturbation kernels}
\label{sec:perturbation_kernels}

In this section, we propose three choices for the perturbation kernels, called \textit{mixture}, \textit{white}, and \textit{forward-reverse}. We argue that the first two are sample-efficient when the perturbation is ``small enough", while the last is sample-efficient when a certain quantity called the ``space score" is well-approximated.
In both cases, the guiding principle is that the pairs $(x,t)$ and $(x', t')$ used to estimate energy difference should remain in the manifold of the target distribution $p_d(x,t)$ that we wish to estimate; because one of them is sampled from $p_d(x, t)$ and is therefore likely to be on the data manifold, the perturbation kernel $p_n$ used to obtain the second should avoid mapping samples \textit{outside} of the data manifold, see Figure \ref{fig:perturbation_kernels_illustration}.
We explain that certain recent methods in the literature~\citep{guth2025dual,he2026rne} can be viewed as special cases or variants on these perturbation kernels.

\begin{wrapfigure}{r}{0.45\textwidth}
    \vspace{-23pt}
    \centering
    \includegraphics[width=0.35\textwidth]{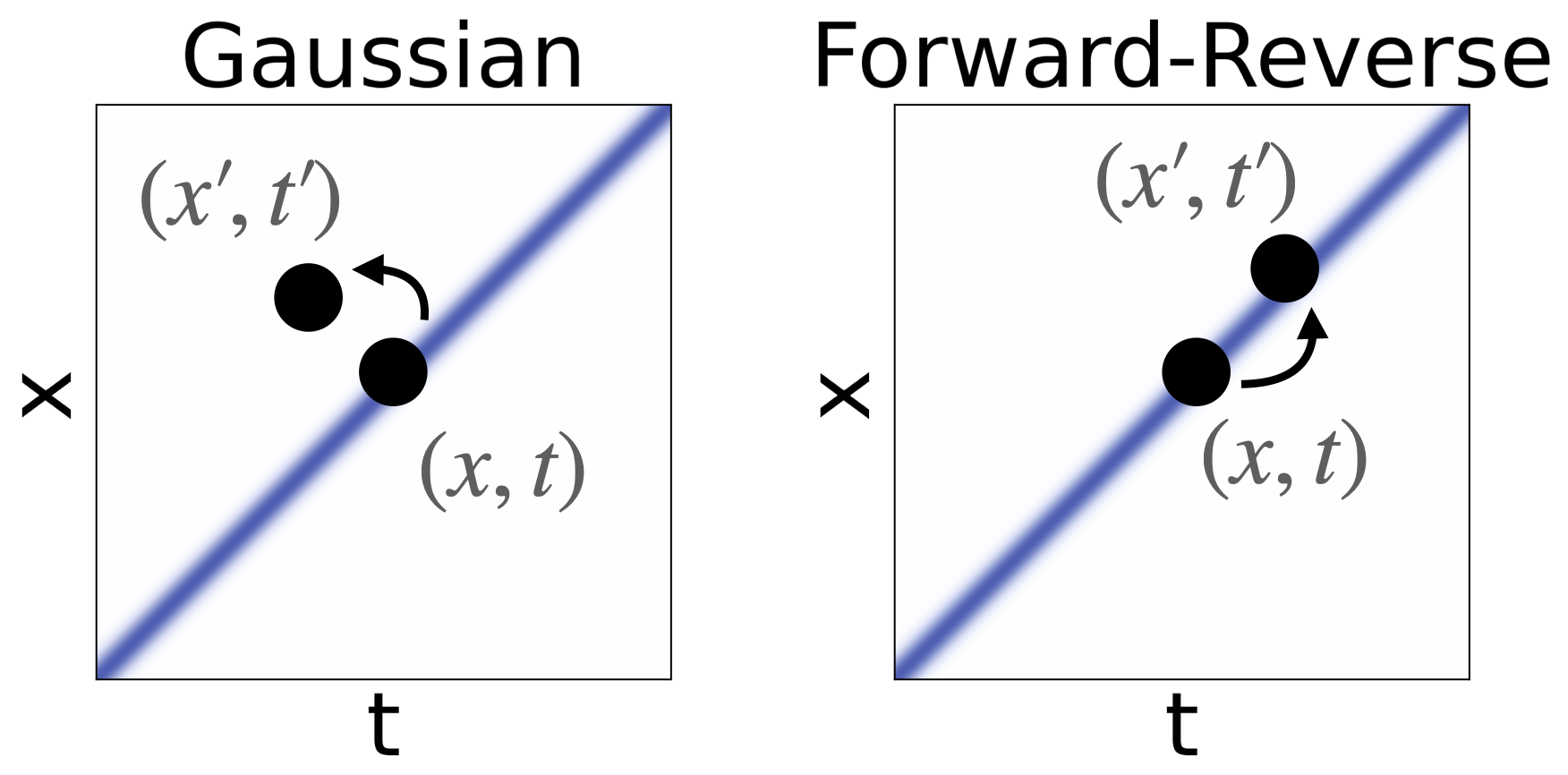}
    \caption{Choice of kernels. Left: departs from the data manifold (blue). Right: remains on the data manifold (blue).}
    \label{fig:perturbation_kernels_illustration}
    \vspace{-20pt}
\end{wrapfigure}

\paragraph{Mixture kernel.}
This kernel perturbs \textit{either} $x$ or $t$:
\begin{equation}
p_n(x',t'\mid x,t)
= \tfrac{1}{2}\, p_n(t'\mid t)\,\delta_{x}(x')
+ \tfrac{1}{2}\, p_n(x'\mid x)\,\delta_{t}(t').
\end{equation}
We refer to this variant as \tstnceMixture. It recovers a mixture of \tnce and \tcnce objectives, as we show in Appendix~\ref{app:sec:connection_to_other_methods}. 

\paragraph{White noise kernel.}
This kernel perturbs both $x$ and $t$, with the perturbation in terms of $x$ being isotropic Gaussian noise:
\begin{equation}
p_n(x',t'\mid x,t)
= \mathcal{N}(x'\mid x,\sigma^2 I)\,p(t'|t).
\end{equation}
We refer to the resulting algorithm as \tstnceWhite. The intuition is that sufficiently small perturbations keep $(x',t')$ in the data manifold of $p_d$. In the limit $\sigma \to 0$, this recovers Dual Score Matching~\citep{guth2025dual}, as we prove in Appendix~\ref{app:sec:dual_score_matching}.  

The two previous methods suffer from the same pathology: under the scenario where the perturbation is non-vanishing, $(x',t')$ could naturally be generated off the high density region induced by $p_{d}(x,t)$. The manifold hypothesis states that real world data often lies on a low dimensional Riemannian manifold, such that adding isotropic Gaussian noises often results in off-manifold samples. Furthermore, counter-intuitively, samples from Gaussian distribution concentrate near a sphere in high dimensions, resulting in non-smooth landscape even for noise-perturbed data. The same issues will affect \tcnce when used with a white noise kernel.

In contrast to the kernels above, the following kernel aims to stay within the manifold of $p_d(x,t)$.

\paragraph{Forward-reverse kernel.}
We propose perturbing $(x,t)$ along trajectories consistent with $p_d(x,t)$:
\begin{equation}
p_n(x',t'\mid x,t)
= p_n(x'\mid x,t,t')\,p_{n}(t'|x).
\end{equation}
We first perturb $t$ to $t'$, then apply a kernel on $x$: a \textit{noising} step if $t'<t$, and a \textit{denoising} step if $t'>t$. These kernels are Bayes inverses, as in diffusion-based methods~\citep{gao2021learning,doucet2022diffusionais,he2026rne}.

An explicit construction based on the stochastic interpolant~\eqref{eq:stochastic_interpolant} is:
\begin{equation}
\label{eq:forward-reverse-kernel}
\hspace*{-1em}
p_{n}(x'\mid x,t,t') =
\begin{cases}
\mathcal{N}\!\left(x'\middle| x/t,\; 1 - (1-t)^2/t^2 \right),
& t'<t,\\[0.3em]
\mathcal{N}\!\left(
x'\middle|
(t'/t)x + \tfrac{2(1-t)(t'-t)}{t}\,\nabla_x \log p_t(x),\;
\tfrac{(t'-t)(t+t'-2tt')}{t^2} I
\right),
& t'>t.
\end{cases}
\end{equation}

The noising kernel (top) is tractable, while the denoising kernel, obtained using the exponential integrator formula~\citep{gonzalez2023seeds},  depends on the space score $\nabla_x \log p_t(x)$, which is typically unknown and is the key quantity that diffusion models approximate~\citep{song2021sde}. We approximate it using either:
(i) \texttt{oracle} scores (when available), denoted as \tstnceOracle, or
(ii) \texttt{self} scores, obtained from the current model, denoted as \tstnceSelf.
The latter is reminiscent of consistency models and flow maps~\citep{song2023consistency,boffi2026flow_maps}.We further note that the recent RNE regularization~\citep{he2026rne} proposed to regularize energy-based diffusion models is related too, with the following differences: it uses a squared loss instead of the logistic loss, and sampling is performed directly using the forward process without relying on the scores. 
However, unlike \stnce, which remains consistent for \textbf{arbitrary} kernels, e.g. the ones induced by imperfect scores, and can be optimized independently, RNE depends heavily on the quality of the current score estimates: RNE loss based on inaccurate scores fail to remain consistent. Consequently, RNE mandates joint training with denoising score matching loss or an oracle access to the ground truth socre, whereas \stnce could function as a standalone training objective.

\paragraph{Infinitesimal limit.}
We now characterize the behavior of \stnce for certain \textit{infinitesimal} perturbation kernels. Specifically, we study perturbation kernels that are transition probabilities along a trajectory of a Stochastic Differential Equation (SDE) as in \eqref{eq:forward-reverse-kernel} or of a Ordinary Differential Equation (ODE), assuming oracle access to the ground truth scores.
We prove the following results in Appendix~\ref{app:sec:proofs}.

\begin{proposition}[Transition probability of an SDE]
\label{th:sm_mle}
Consider a stochastic interpolant as in~\eqref{eq:stochastic_interpolant} whose marginal laws $p_t$ are simulated by the time-reversal of an SDE, $\dd X_t = b(X_t, t) \dd t + g(t) \dd W_t$. Define the perturbation kernel to be the transition probability of that SDE. In the limit $\Delta t \to 0$, \stnce recovers the space score matching loss with maximum-likelihood weighting~\citep{song2021mle}:
\begin{align}
\mathcal{L}_{\mathrm{\stnce}}(\theta)
= 2\log 2
+ \frac{\Delta t}{4}\,
\E_{p_t}\!\left[
g(t)^2 \|\nabla_x \log p_t(x) - \nabla_x \log p_t^{\theta}(x)\|^2
\right]
+ o(\Delta t).
\end{align}
\end{proposition}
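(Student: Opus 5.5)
We plan to expand the \stnce logistic loss to second order in the perturbation size $\Delta t$, treating the time $t$ as fixed: the kernel moves $(x,t)$ to $(x',t+\Delta t)$ along one step of the SDE. Write the optimal classifier from \eqref{eq:stnce_classifier} in the symmetrized form
\begin{align*}
F_\theta(x,t,x',t') = \log p^\theta_t(x) - \log p^\theta_{t'}(x') + \log p_n(x',t'\mid x,t) - \log p_n(x,t\mid x',t').
\end{align*}
When the model equals the truth, $F_{\theta}$ is the log-ratio of $p_A$ to $p_B$. Because the kernel is the exact transition of the SDE whose marginals are $p_t$, detailed balance in path space (the Bayes-inverse property) gives $p_t(x)p_n(x'\mid x)=p_{t'}(x')p_n(x\mid x')$. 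So at $\theta=\theta^*$ we get $F\equiv 0$, and the loss equals $2\log 2$. This produces the constant term.

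In general, define
\begin{align*}
F_\theta = \delta_\theta(x,t) - \delta_\theta(x',t'), \qquad \delta_\theta := \log p^\theta - \log p.
\end{align*}
The kernel terms cancel exactly against the true densities, so only model errors remain. Next, Taylor expand the logistic loss around $F=0$. Using $-\log\sigma(F) = \log 2 - F/2 + F^2/8 + O(F^3)$, the loss becomes
\begin{align*}
\mathcal{L} = 2\log 2 - \tfrac12\bigl(\E_A[F]-\E_B[F]\bigr) + \tfrac18\bigl(\E_A[F^2]+\E_B[F^2]\bigr) + \dots
\end{align*}

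\textbf{Step 1 (increment).} Expand $\delta_\theta(x',t')-\delta_\theta(x,t)$ using Itô's formula along the SDE step:
\begin{align*}
x' - x = b\,\Delta t + g\sqrt{\Delta t}\,\xi, \qquad \xi\sim\mathcal N(0,I).
\end{align*}
This gives $F = -g\sqrt{\Delta t}\,\nabla_x\delta_\theta\cdot\xi + O(\Delta t)$, so $\E[F^2] = \Delta t\, g^2\,\E_{p_t}\|\nabla_x\delta_\theta\|^2 + o(\Delta t)$, under both $A$ and $B$ to leading order.

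\textbf{Step 2 (first-order term).} The term $\E_A[F]-\E_B[F]$ must also be shown to be $O(\Delta t)$, and then combined with the quadratic term. Since $p_B$ is the swap of $p_A$ and $F$ is antisymmetric under the swap, $\E_B[F] = -\E_A[F]$. Hence the linear term equals $-\E_A[F]$. Now $\E_A[F] = \E_{p_t}[\delta_\theta] - \E_{p_{t'}}[\delta_\theta(\cdot,t')]$, which is $-\Delta t\,\partial_t\E_{p_t}[\delta_\theta]$. This is not obviously of definite sign.

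This step is the main obstacle. I would handle it with the normalization constraint $\int p^\theta_t=1$, which implies $\E_{p_t}[e^{\delta_\theta}]=1$. The cleaner route is to write the logistic loss exactly as an $f$-divergence-type quantity, using $\E_A[h(F)] + \E_A[h(-F)]\cdot$ symmetry, and then expand
\begin{align*}
\log\bigl(1+e^{-F}\bigr)+\log\bigl(1+e^{F}\bigr) = 2\log 2 + F^2/4 + O(F^4).
\end{align*}
Averaging over $A$ (the $B$ expectation is the $A$ expectation of $F\mapsto -F$) kills all odd terms. The result is $\mathcal L = 2\log 2 + \tfrac14\E_A[F^2]+\dots$. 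Here I would need to check carefully that the paper's loss weights both classes equally, so that this symmetrization is legitimate. With the conventional one-half weighting, the $\tfrac14$ combined with $\E[F^2]\approx\Delta t\,g^2\|\cdot\|^2$ gives exactly the stated coefficient $\Delta t/4$.

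\textbf{Step 3 (remainder).} Finally, bound the $O(F^4)$ and Itô remainder terms by $o(\Delta t)$, assuming smoothness and moment conditions on $\nabla_x\delta_\theta$, $b$ and $g$. Then identify
\begin{align*}
\nabla_x\delta_\theta = \nabla_x\log p_t^\theta-\nabla_x\log p_t,
\end{align*}
which yields the maximum-likelihood-weighted score matching loss.
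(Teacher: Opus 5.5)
Your overall route matches the paper's: expand the logistic loss around $F=0$, use the Bayes-inverse property of the kernel to reduce $F$ to a difference of model errors $e_\theta(x',t')-e_\theta(x,t)$, and extract $g(t)^2\|\nabla_x e_\theta\|^2\Delta t$ from the Brownian increment via It\^o. The gap is Step 2, and the fix you propose there is incorrect. Your own swap rule $\E_B[h(F)]=\E_A[h(-F)]$ gives $\E_B[\log(1+e^{F})]=\E_A[\log(1+e^{-F})]$. So the loss is $2\,\E_A[\log(1+e^{-F})]$, not $\E_A[\log(1+e^{-F})+\log(1+e^{F})]$. Since $\log(1+e^{F})-\log(1+e^{-F})=F$ exactly, these two expressions differ by precisely $\E_A[F]$. ``Killing the odd terms'' is therefore equivalent to assuming $\E_A[F]=0$, which is exactly what had to be shown. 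The normalization constraint does not rescue this either, since $\E_{p_t}[\delta_\theta]=-\mathrm{KL}(p_t\,\|\,p_t^\theta)$ generally depends on $t$.

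The missing idea is how the paper closes this step in one line. The identity you already wrote, $p_d(x,t)\,p_n(x',t'\mid x,t)=p_d(x',t')\,p_n(x,t\mid x',t')$, says more than $F\equiv 0$ at $\theta^*$: it says the two class densities coincide, $p_A=p_B$. Hence $\E_A[F]=\E_B[F]$, which together with your (correct) antisymmetry $\E_B[F]=-\E_A[F]$ forces $\E_A[F]=0$. Concretely, $(x,t)$ and $(x',t')$ both have law $p_d$ under $p_A$, so $\E_A[\delta_\theta(x,t)]=\E_A[\delta_\theta(x',t')]$. The loss is then $\E_A[-\log\sigma(F)-\log(1-\sigma(F))]$, which is genuinely even in $F$ and equals $2\log 2+\tfrac14\E_A[F^2]+\dots$.

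Be aware that $p_A=p_B$ requires time to be treated symmetrically:
\begin{itemize}
\item $t$ is drawn from $p(t)$;
\item steps are taken in both directions (noising for $t'<t$, denoising for $t'>t$), with $p(t)\,p_n(t'\mid t)$ symmetric;
\item the final expectation is taken over $p_d(x,t)$.
\end{itemize}
In your frozen-$t$, one-sided setup $t'=t+\Delta t$, your own computation is right. It shows the linear term is $\Delta t\,\partial_t\E_{p_t}[\delta_\theta]+o(\Delta t)=-\Delta t\,\partial_t\mathrm{KL}(p_t\,\|\,p_t^\theta)+o(\Delta t)$. This is of the same order as the quadratic term and nonzero in general, so there it is the setup, not the algebra, that has to change.

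Finally, the constants $2\log 2$ and $\Delta t/4$ correspond to the unweighted sum of the two class terms, as in the paper. A one-half weighting would give $\log 2+\Delta t/8$.
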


\begin{proposition}[Transition probability of an ODE]
\label{th:total_derivative_matching}
Consider a stochastic interpolant as in~\eqref{eq:stochastic_interpolant} whose marginal laws $p_t$ are simulated by an ODE, $\dd X_t = v(X_t) \dd t$. 
Define the perturbation kernel to be the transition probability of that ODE. Under this scenario, the transition kernels are Dirac deltas subject to Jacobian scalings. 
In the limit $\Delta t \to 0$, \stnce recovers a loss for matching the trajectory-wise rate of change of the true and model log densities:
\begin{align}
\mathcal{L}_{\mathrm{\stnce}}(\theta)
= 2\log 2
+ \frac{\Delta t^2}{4}\,
\E_{p_t}\!\left[
\| \dd_t \log p_t(x) - \dd_t \log p_t^\theta(x) \|^2
\right]
+ o(\Delta t^2).
\end{align}
where $\dd_t \log p_t(x) := \partial_t \log p_t(x) + \nabla_{x} \log p_t(x) \cdot v_t(x)$.
\end{proposition}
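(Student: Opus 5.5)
The plan is to exploit the fact that, for a deterministic flow kernel, the true \stnce classifier in \eqref{eq:stnce_classifier} vanishes identically. We then Taylor-expand the logistic loss around $F=0$, which reduces the problem to a second-order expansion in the small model error.

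Concretely, let $\Phi_{t\to t'}$ be the flow map of $\dd X_t = v(X_t)\dd t$, so that $(\Phi_{t\to t'})_\# p_t = p_{t'}$. Take $t' = t\pm\Delta t$ with symmetric probabilities, and take $p_n(x',t'\mid x,t) = p(t'\mid t)\,\delta\bigl(x'-\Phi_{t\to t'}(x)\bigr)$.

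\textbf{Step 1: the ratio of Dirac kernels.} I will first compute the ratio of the forward and reverse Dirac kernels. Using $\delta\bigl(x-\Phi_{t'\to t}(x')\bigr) = |\det\nabla\Phi_{t\to t'}(x)|\,\delta\bigl(x'-\Phi_{t\to t'}(x)\bigr)$, the known term in $F_\theta$ equals $-\log|\det\nabla\Phi_{t\to t'}(x)|$ on the support of the kernel.

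To avoid manipulating ratios of delta functions, I will phrase this rigorously as a Radon--Nikodym derivative between the pushforward measure and its reverse. The set where the pair is supported is the graph of $\Phi$, on which both class distributions live.

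\textbf{Step 2: the true classifier is zero.} By the change-of-variables formula expressing mass conservation,
\begin{align}
\log p_t(x) - \log p_{t'}\bigl(\Phi_{t\to t'}(x)\bigr) - \log|\det\nabla\Phi_{t\to t'}(x)| = 0 .
\end{align}
Hence $F_{\theta^*}\equiv 0$, and indeed $p_A=p_B$. Consequently, for any model,
\begin{align}
F_\theta = \bigl[\log p^\theta_t(x) - \log p^\theta_{t'}(x')\bigr] - \bigl[\log p_t(x) - \log p_{t'}(x')\bigr].
\end{align}
Here $\log p^\theta$ includes $-\log Z_\theta(t)$. Expanding along the trajectory $x'=\Phi_{t\to t'}(x)=x\pm\Delta t\,v(x)+O(\Delta t^2)$ and using the chain rule, I get
\begin{align}
F_\theta = \mp\Delta t\,\bigl(\dd_t\log p^\theta_t(x) - \dd_t\log p_t(x)\bigr) + O(\Delta t^2).
\end{align}

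\textbf{Step 3: expanding the logistic loss.} Since $p_A=p_B$, the loss is
\begin{align}
\mathcal{L} = \E\bigl[\mathrm{softplus}(-F_\theta)+\mathrm{softplus}(F_\theta)\bigr].
\end{align}
The function $u\mapsto \mathrm{softplus}(u)+\mathrm{softplus}(-u)$ is even, with value $2\log 2$ at $0$ and second derivative $1/2$ there. Its expansion is therefore $2\log2 + u^2/4 + O(u^4)$.

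Odd terms thus cancel automatically, and the $\pm$ sign disappears upon squaring. Substituting Step 2 then yields $2\log 2 + \tfrac{\Delta t^2}{4}\E_{p_t}\|\dd_t\log p_t - \dd_t\log p^\theta_t\|^2 + o(\Delta t^2)$.

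\textbf{Main obstacle.} I expect the main difficulty to lie in the rigor rather than the algebra: giving meaning to the Dirac ratio in Step 1, and controlling the remainders uniformly enough to exchange expectation and the Taylor expansion. For the latter, I would assume $v$, $\log p_t$ and $E_\theta$ are $C^2$ with bounded derivatives, or at least derivatives with moments under $p_t$. This gives an $O(\Delta t^2)$ remainder in Step 2 with integrable constants, and dominated convergence then justifies the $o(\Delta t^2)$ claim.
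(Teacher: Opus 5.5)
Your proposal is correct and follows essentially the same route as the paper. The forward-reverse identity $p_+=p_-$ kills the linear term of the second-order expansion of the logistic loss around $F=0$ (your evenness argument), and $F_\theta$ becomes a difference of model errors $e_\theta(x',t')-e_\theta(x,t)$ with $e_\theta=\log p_d-\log p_\theta$. A first-order expansion along $x'=x+v\,\Delta t+o(\Delta t)$ then yields the $\tfrac{\Delta t^2}{4}\E[(\mathrm{d}_t e_\theta)^2]$ term. Your explicit Jacobian/mass-conservation check of $p_+=p_-$, and your use of symmetric $t'=t\pm\Delta t$ (needed so that the reverse kernel exists), make precise what the paper simply attributes to Bayes' rule.
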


\section{Experiments}
\label{sec:experiments}

We empirically evaluate our framework for learning densities
of synthetic data, images and molecules. In particular, we report results of the different variants of our Spatiotemporal Noise-Contrastive Estimation (\tstnceMixture, \tstnceWhite, \tstnceSelf, \tstnceOracle) that are detailed in Section~\ref{sec:perturbation_kernels}. We also use another variant \tstnceSelfDsm which combines \tstnceSelf with a space score matching loss: the former estimates spatiotemporal differences while the latter estimates infinitesimal spatial differences. See Appendix~\ref{app:sec:score-velocity} for further details. Across datasets, our \tstnceSelf and variants are most promising: they are competitive with the state of the art. 

\subsection{Failure modes}
\label{sec:failure-modes-exp}

Many methods for learning energy-based models, including denoising score matching used in diffusion models~\citep{thornton2025ebmdiffusion,du2019implicit}, are typically evaluated on complex datasets. Yet, Figure~\ref{fig:teaser_figure} shows that they can already fail in a simple one-dimensional setting. This experiment highlights two key properties of the data distribution that drive these failures. First, multimodality increases the error of methods based on learning spatial differences. Second, a mismatch between the high-density regions (``supports'') of the data and reference distributions increases the error of methods based on learning temporal differences. Methods based on learning spatiotemporal difference are robust to both sources of error. Details on the simulations are provided in Appendix~\ref{app:sec:failure-modes-details}.

\subsection{Gaussian mixtures}
\label{sec:mnist-mixture-exp}

\paragraph{Setup and dataset}
Image data are high-dimensional and sparse. To emulate these properties, we build a Gaussian mixture from the MNIST dataset \citep{lecun2010mnist} by sampling $100$ images ($10$ per digit class) and using them as centroids with diagonal covariance $0.01 I$.
This synthetic yet realistic distribution exhibits the two pathologies identified in Section~\ref{sec:limitations}: strong multimodality and minimal overlap with a standard Gaussian reference distribution due to its thin support. We therefore expect purely spatial and purely temporal methods to struggle, while spatio-temporal methods with density-aware kernels should perform well, which is confirmed in Table~\ref{tbl:mnist-mixture-results}. 

\paragraph{Evaluation}
We use the following metrics to evaluate if model and ground-truth density agree. They are computed as averages over data samples and to the model density and also the data density (which is usually unavailable, but we know it for toy experiments). When we have access to both the ground truth data density and the sampling mechanism that generates samples from it, we can estimate the log constant of the model density at the time corresponding to clean data using importance sampling, denoted as $\widehat{\log Z_1}$, and evaluate the model density normalized using $\widehat{\log Z_1}$.
\begin{align}
\mathrm{MSE}
&:=
\E_{x \sim p_d(\cdot \mid 1)}
\!\left[
\bigl(
\log p_d(x \mid 1)
-
\log p_\theta(x \mid 1)
\bigr)^2
\right],
\\
\mathrm{Ratio}
&:=
\E_{x,x' \sim p_d(\cdot \mid 1)^{\otimes 2}}
\!\left[
\Bigl(
\log p_d(x \mid 1)
-
\log p_d(x' \mid 1)
-
\log p_\theta(x \mid 1)
+
\log p_\theta(x' \mid 1)
\Bigr)^2
\right],
\\
\mathrm{NormMSE}
&:=
\E_{x \sim p_d(\cdot \mid 1)}
\!\left[
\bigl(
\log p_d(x \mid 1)
-
\log p_\theta(x \mid 1)
+
\widehat{\log Z_1}
\bigr)^2
\right],
\\
\mathrm{NormNLL}
&:=
\E_{x \sim p_d(\cdot \mid 1)}
\!\left[
-
\log p_\theta(x \mid 1)
+
\widehat{\log Z_1}
\right].
\end{align}
The first is the squared error between the data and model log densities. The second is the squared error between \textit{spatial} differences of the data and model log densities --- this metric is well-suited for evaluating unnormalized densities given that it does not depend on normalizing constants. The third and the fourth evaluate how good the model is as an unnormalized EBM, though naturally depends on the accuracy of the estimated $\widehat{\log Z_1}$.

\paragraph{Results}
Among spatiotemporal methods, we study two perturbation kernels: white noise and the forward–reverse kernel from Section~\ref{sec:perturbation_kernels}. Results with white noise match the intuition of Figure~\ref{fig:perturbation_kernels_illustration}: finite Gaussian perturbations (\tstnceWhite) perform poorly because they push samples off the data manifold, whereas infinitesimal perturbations (\dualSm~\citep{guth2025dual}) preserve proximity to the manifold and yield accurate estimates, albeit at higher computational cost due to double differentiation through the network.
Results for the forward–reverse kernel further support this interpretation. Even with finite corruption, \tstnceSelf remains competitive because it stays close to the data manifold. By contrast, as discussed in Section~\ref{sec:limitations}, RNE~\citep{he2026rne} is consistent only when given access to the true scores $\nabla_{x_t} \log p_t(x_t)$ (\rneOracle), which are unavailable in practice; using estimated scores (\rneSelf) can lead to substantial degradation. Our implementation of \tstnceMixture naively adds up \tnce loss and \tcnce loss, leading to slower running time, as indicated using $^*$; this also applies to later experiments.

Overall, \tstnceSelf provides a favorable trade-off between accuracy and speed, and is the only NCE-based method that successfully learns this challenging Gaussian mixture.

\begin{table*}[!h]
\centering
\footnotesize
\setlength{\tabcolsep}{3pt} %
\renewcommand{\arraystretch}{0.95} %
\caption{Results on MNIST mixture (784 dim). Time means average time per step in seconds. For methods not in Oracle category best results are in \textbf{bold}, second ones are \underline{underlined}. Under oracle category best results are in \textbf{bold}.}
\label{tbl:mnist-mixture-results}
\begin{tabular}{p{1.3cm}lccccc}
\toprule
			\textbf{Category} & \textbf{Method} & \textbf{MSE} $\downarrow$ & \textbf{Ratio} $\downarrow$ & \textbf{NormMSE} $\downarrow$ & \textbf{NormNLL} $\downarrow$ & \textbf{Time} $\downarrow$ \\
			\midrule
			\multirow{2}{*}{\begin{tabular}[c]{@{}c@{}}Temporal\end{tabular}} & \tnce & $2{\times}10^{5} \pm 4{\times}10^{3}$ & 958.11 $\pm$ 131.63 & $7{\times}10^{3} \pm 1{\times}10^{3}$ & -606.76 $\pm$ 7.83 & \underline{0.39} \\
			 & \ctsm & $3{\times}10^{5} \pm 1{\times}10^{5}$ & $1{\times}10^{3} \pm 608.53$ & $8{\times}10^{3} \pm 2{\times}10^{3}$ & -602.92 $\pm$ 9.59 & 0.68 \\
			\midrule
			\multirow{2}{*}{\begin{tabular}[c]{@{}c@{}}Spatial\end{tabular}} & \tcnce & $2{\times}10^{5} \pm 5{\times}10^{4}$ & $3{\times}10^{3} \pm 584.77$ & $9{\times}10^{3} \pm 2{\times}10^{3}$ & -598.73 $\pm$ 7.82 & 0.4 \\
			 & \dsm & $1{\times}10^{7} \pm 9{\times}10^{5}$ & $1{\times}10^{4} \pm 4{\times}10^{3}$ & $3{\times}10^{4} \pm 1{\times}10^{4}$ & -535.54 $\pm$ 28.81 & 0.7 \\
			\midrule
			\multirow{5}{*}{\begin{tabular}[c]{@{}c@{}}Spatio-\\temporal\end{tabular}} & \tstnceMixture & $3{\times}10^{4} \pm 4{\times}10^{3}$ & 600.77 $\pm$ 276.83 & $2{\times}10^{3} \pm 846.83$ & -644.8 $\pm$ 8.51 & 0.45$^{*}$ \\
			 & \tstnceWhite & $2{\times}10^{4} \pm 1{\times}10^{3}$ & 412.73 $\pm$ 148.16 & $2{\times}10^{3} \pm 458.16$ & -652.43 $\pm$ 5.22 & 0.4 \\
			 & \tstnceSelf & \underline{121.6 $\pm$ 11.77} & \underline{28.31 $\pm$ 1.93} & \underline{65.02 $\pm$ 7.67} & \underline{-680.99 $\pm$ 0.53} & \textbf{0.36} \\
			 & \rneSelf & $3{\times}10^{6} \pm 1{\times}10^{6}$ & $3{\times}10^{3} \pm 945.15$ & $2{\times}10^{4} \pm 702.98$ & -570.51 $\pm$ 1.09 & 0.44 \\
			 & \dualSm & \textbf{42.16 $\pm$ 7.11} & \textbf{17.82 $\pm$ 1.02} & \textbf{26.04 $\pm$ 2.82} & \textbf{-683.96 $\pm$ 0.33} & 0.7 \\
			\midrule
			\multirow{2}{*}{\begin{tabular}[c]{@{}c@{}}Oracle\end{tabular}} & \tstnceOracle & 148.61 $\pm$ 17.09 & 40.25 $\pm$ 2.35 & 141.75 $\pm$ 3.61 & -677.07 $\pm$ 0.12 & \textbf{0.32} \\
			 & \rneOracle & \textbf{18.57 $\pm$ 1.46} & \textbf{7.88 $\pm$ 0.19} & \textbf{9.41 $\pm$ 0.61} & \textbf{-685.75 $\pm$ 0.09} & 0.4 \\
			\bottomrule
\end{tabular}
\end{table*}

\subsection{Images}

In this section, we follow previous related works~\citep{guth2025dual,yu2025dre} in reporting the Negative Log-Likelihood (NLL) in bits per dimension, for image datasets
\begin{align}
\mathrm{NLL}_{\mathrm{bpd}}
:=
\frac{
-\mathbb{E}_{x \sim p_d(\cdot \mid 1)}
\!\left[
\log p_\theta(x \mid 1)
\right]
}{
D \log 2
},
\end{align}
where $D$ is the data dimension (e.g. the number of pixels times channels for images). Note that we do \textit{not} use the NormNLL for two reasons. First, related works~\citep{guth2025dual} do not use it and we wish to replicate a fair comparison. Second, the idea behind NormNLL is to correct the approximate normalization of the model using a sepearate estimate of the normalizing constant. In Section~\ref{sec:mnist-mixture-exp}, we could obtain a precise estimate because we knew the ground-truth distribution of the dataset. Here, we do not know the ground-truth distribution and the standard way of estimate the normalizing constant is using AIS which in our preliminary experiments and in related literature~\citep{yu2025dre} was very high-variance on image datasets. 

\paragraph{MNIST}
\label{sec:mnist-exp}
We next train the different algorithms to learn the data density of images from the MNIST dataset of handwritten digits~\citep{lecun2010mnist}. 
Results are shown in Table~\ref{table:mnist}: our \tstnceSelf is still the best variant among NCE algorithms and reaches state-of-the-art performance while being fast to train. Here our algorithms were trained for $100,000$ steps; we verify in Table~\ref{tbl:mnist-tstnce-long}
that \tstnceSelf continues to improve when optimized longer. Alternatively, on can use our \tstnceSelfDsm which reaches state-of-the-art results, by jointly estimate the spatial scores of the density. Samples drawn from our model in Figure~\ref{fig:mnist_samples} look realistic.

\begin{table*}[!h]
\centering

\begin{minipage}{0.67\textwidth}
\centering
\footnotesize
\setlength{\tabcolsep}{3pt}
\renewcommand{\arraystretch}{0.95}
\caption{Negative log likelihood (in bits/dimension) on MNIST test set. Result for Glow is taken from~\citet{mate2022flowification}.
Results for FFJORD, i-ResNet, and MintNet are taken from from~\citet{song2019mintnet}. TSM result is taken from~\citet{yu2025dre}. Reported times are average times per step in seconds.}
\label{table:mnist}
\begin{tabular}{lcccc}
\toprule
\textbf{Method} & \textbf{Type} & \textbf{Single NFE} & \textbf{Time $\downarrow$} & \textbf{NLL $\downarrow$} \\
\midrule
\tnce & Estimate & \cmark & 0.127 & 7.01 \\
FP-Diffusion~\citep{lai2023fpdiffusion} & Normalized & \xmark & - & 2.98 \\
TSM~\citep{choi2022densityratio} & Estimate & \xmark & - & 1.3 \\
TRE~\citep{rhodes2020telescoping} & Estimate & \xmark & - & 1.09 \\
i-ResNet~\citep{behrmann2019iresnet} & Normalized & \xmark & - & 1.06 \\
Glow~\citep{kinga2018glow} & Normalized & \cmark & - & 1.05 \\
CTSM-v~\citep{yu2025dre} & Estimate & \xmark & - & 1.03 \\
FFJORD~\citep{grathwohl2018scalable} & Normalized & \xmark & - & 0.99 \\
MintNet~\citep{song2019mintnet} & Normalized & \cmark & - & 0.98 \\
\midrule
Ours (\tstnceMixture) & Estimate & \cmark & 0.17$^{*}$ & 4.92 \\
Ours (\tstnceWhite) & Estimate & \cmark & 0.13 & 4.56 \\
Ours (\tstnceSelf) & Estimate & \cmark & 0.13 & 1.61 \\
Ours (\tstnceSelfDsm) & Estimate & \cmark & 0.27 & 1.00 \\
\bottomrule
\end{tabular}
\end{minipage}
\hfill
\begin{minipage}{0.28\textwidth}
    \centering
    \includegraphics[width=0.65\linewidth]{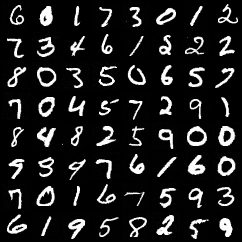}
    \captionof{figure}{Samples drawn using \tstnceSelfDsm.}
    \label{fig:mnist_samples}
\vspace{0.5em}
\captionof{table}{Test BPD of the last and from the step with the best val BPD up to the corresponding step.}
\footnotesize
\begin{tabular}{rcc}
\hline
\textbf{Step} & \textbf{Last} & \textbf{Best} \\
\hline
$100$k & $1.62$ & $1.61$ \\
$200$k & $1.49$ & $1.48$ \\
$300$k & $1.42$ & $1.41$ \\
$400$k & $1.37$ & $1.36$ \\
$500$k & $1.34$ & $1.33$ \\
\hline
\end{tabular}
\label{tbl:mnist-tstnce-long}
\end{minipage}
\end{table*}

\paragraph{ImageNet}
\label{sec:image-imagenet-exp}

\begin{table}[htbp]
\centering
\footnotesize
\caption{Negative log likelihood (in bits/dimension) on ImageNet64 test set as in~\citet{guth2025dual}.}
\label{table:imagenet64}
\begin{tabular}{lcccccc}
\toprule
\textbf{Method} & \textbf{Anti-aliasing} & \textbf{Augmentation} & \textbf{Discreteness} & \textbf{Type} & \textbf{Single NFE} & \textbf{NLL} \\
\midrule
Glow~\citep{kinga2018glow} & \xmark & None & Continuous & Normalized & \cmark & 3.81 \\
PixelCNN~\citep{vandenoord2016pixelcnn} & \xmark & None & Discrete & Normalized & \xmark & 3.57 \\
I-DDPM~\citep{nichol2021denoising} & \xmark & None & Continuous & Upper bound & \xmark & 3.54 \\
VDM~\citep{kingma2021variationaldiffusion} & \xmark & None & Discrete & Upper bound & \xmark & 3.40 \\
FM~\citep{lipman2023fm} & \cmark & None & Uniform* & Normalized & \xmark & 3.31 \\
NFDM~\citep{bartosh2024neural} & \xmark & Horizontal flips & Uniform* & Normalized & \xmark & 3.20 \\
TarFlow~\citep{zhai2025tarflow} & \xmark & Horizontal flips & Uniform & Normalized & \cmark & 2.99 \\
Dual SM ~\citep{guth2025dual} & \cmark & Horizontal flips & Continuous & Estimate & \cmark & 3.36 \\
\midrule
Ours (\tstnceSelfDsm) & \cmark & Horizontal flips & Continuous & Estimate & \cmark & 2.94 \\
\bottomrule
\end{tabular}
\end{table}

\begin{figure}[h]
    \centering
    \includegraphics[width=0.9\linewidth]{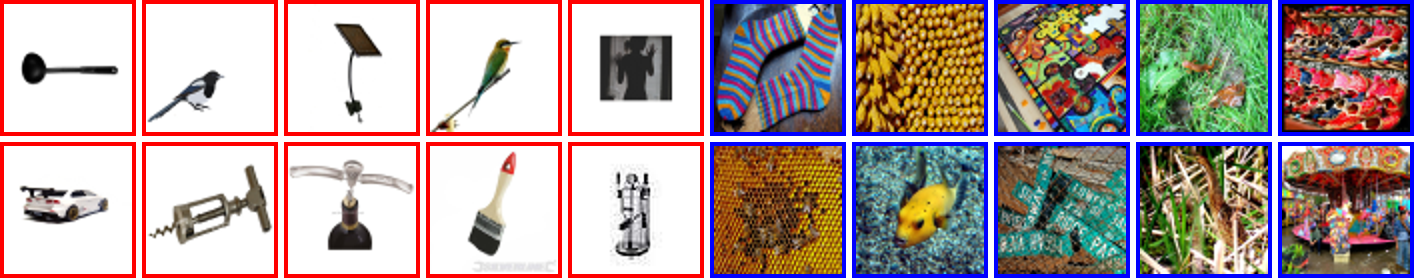}
    \caption{ImageNet test set images with the highest densities (left) and the lowest densities (right) as identified using our model. Higher likelihood images appear to have higher frequencies manifested by sharper contours and shapes, while lower likelihood images appear to have lower frequencies in the form of regular geometric patterns. This is consistent with the literature~\citep{guth2025dual}. 
    }
    \label{fig:imagenet-high-low}
\end{figure}

As a comparison against the state-of-the-art in larger scale, we repeat the experiment on ImageNet64 \citep{chrabaszcz2017imagenet64}. In Table~\ref{table:imagenet64}, we can see that our \tstnceSelfDsm is competitive with Flow Matching~\citep{lipman2023fm} and outperforms approaches such Dual Score-Matching~\citep{guth2025dual}, which similarly models the data density as an energy-based model. We verify in Figure~\ref{fig:imagenet-high-low} that our learnt model assigns high probability to sharper images and lower probability to blurrier images.

\subsection{Molecules}
\label{sec:molecules-exp}

\paragraph{Setup and datasets}
We wish to model a target distribution (\textit{a.k.a.} equilibrium distribution) over configurations of a molecule. Concretely, each data point in that space is the collection of three-dimensional positions of the atoms, or rather groups of atoms called chains, that make up the molecule. In our experiments, we consider two ``coarse-grained" molecular systems: Alanine dipeptide (ALDP) and Chignolin. Specifically, the ALDP is represented by $5$ chains and Chignolin by $10$ chains; because each chain has three-dimensional spatial coordinates, this leads to $15$-dimensional and $30$-dimensional data, respectively. 
Data points in ``high probability" regions correspond to likely spatial configurations of the atoms.
In practice, we have access energy and score function (\textit{a.k.a.} force field) of the target distribution via physics, but they are expensive to evaluate as they require solving a PDE. To avoid evaluating them for every new data point, there are two prevailing approaches in the literature. 

The first approach is Machine Learning Force Fields (MLFFs), which regress a neural network against the energy and score functions on some finite grid of data points~\citep{Husic_2020,batatia2023macehigherorderequivariant}. 
The second approach which we follow here is Energy-Based Models (EBMs) that learn a neural network that matches the energy of the target distribution. Such models are trained on a dataset of samples that approximately follow the target distribution. To form such a dataset, a sampling process such as Langevin Dynamics (\textit{a.k.a.} Molecular Dynamics), which requires evaluating the target energy and score functions, is run long enough so that the final samples are approximately \textit{i.i.d.} samples from the target distribution. In practice, we use the datasets provided \citet{plainer2025consistent} and \citet{doi:10.1126/science.1208351}, following the same architecture and parameterization as \citet{plainer2025consistent}.

\paragraph{Methods}
We compare \stnce to another spatiotemporal method from~\citet{plainer2025consistent} based on the Fokker-Planck Equation (FPE). 
That method consists in learning \textit{spatial} differences of the energy using a loss we detail in Appendix~\ref{app:ssec:ebm_diffusion_loss}, and \textit{temporal} differences of the energy by minimizing a PINN loss that effectively forces the energy to follow a certain Partial Differential Equation.
Although technically sound, FPE is computationally expensive because it requires optimizing through the second-order derivative of the energy network. Even with the estimator proposed by \citet{plainer2025consistent} to bypass explicit second-order derivatives, the FPE approach remains costly due to the multiple additional network evaluations required per loss calculation. In contrast, \stnce is fundamentally more efficient to optimize.

\paragraph{Evaluation}
To evaluate the quality of the estimated model of the data $p_{\theta}(x | 1)$, we use the following metrics from~\citet{plainer2025consistent}. The Potential Mean Field (PMF) is just the mean-squared error between the model energy and true energy, comptued over data samples. The Jensen-Shannon Divergence (JSD) is a statistical divergence between the ground truth and model densities.
\begin{align}
\mathrm{PMF}(\theta)
&=
\E_{x \sim p_d(\cdot | 1)} \big[
\big( E(x, 1) - E_{\theta}(x, 1) \big)^2
\big]
\\
\mathrm{JSD}(\theta)
&=
\log 2
+
\frac12
\mathbb{E}_{x \sim p_d(\cdot \mid 1)}
\bigg[
\log
\frac{p_d(x \mid 1)}
{p_d(x \mid 1)+p_\theta(x \mid 1)}
\bigg]
+
\frac12
\mathbb{E}_{x \sim p_\theta(\cdot \mid 1)}
\bigg[
\log
\frac{p_\theta(x \mid 1)}
{p_d(x \mid 1)+p_\theta(x \mid 1)}
\bigg].
\end{align}
In practice, these metrics are not computed over the full data space but on a physically meaningful $2$-dimensional subspace: the torsional angles for ALDP, and the principle directions of multi-modality obtained by Time-lagged Independent Component Analysis (TICA)~\citep{molgedey1994tica,hyvarinen2004ica} for Chignolin. 

To compute these metrics, we proceed as follows.
First, we start with samples on the full data samples: ground-truth samples $x \sim p_d(x | 1)$ are from the the dataset and the samples from our model $x \sim p_{\theta}(x | 1)$ are obtained by running two score-based samplers, the Langevin sampler which uses the final score $\nabla \log p_{\theta}(x | 1)$~\citep{eastman2024openmm} and the diffusion sampler which uses the intermediate scores $\nabla \log p_{\theta}(x | t)$~\citep{song2021sde}. 
Second, we project the ground-truth and model samples onto the $2$-dimensional subspace. 
Third, we compute histogram-based density estimates of the corresponding $2$-dimensional ground-truth and model distributions. 
Finally, we compute an empirical version of the above metrics, where the expectations are replaced by a finite-sample average.

\paragraph{Results}
\Cref{tab:aldp-full-results,tbl:chignolin-results} 
report the Jensen-Shannon (JS) divergence and Potential of Mean Force (PMF) metrics (lower is better). They show that \stnce effectively generates these consistent samples and achieves performance comparable to FPE. On the other hand, training jointly with DSM consistently improves performance while introducing only marginal computational overhead. In fact, the estimated training time (in GPU hours of a single \texttt{NVDIA A100}) of different methods is as follows: \dsm is the fastest method (8.71) but achieves lower performance. \tstnceSelf (16.87) is the next fastest and adding \dsm slightly increases its runtime (18.85) while still being $2.5$-fold faster than FPE (49.57).

\begin{table}[t]
\centering
\footnotesize
\caption{Alanine dipeptide results.}
\label{tab:aldp-full-results}
\begin{tabular}{lcccc}
\toprule
Method & IID JS $\downarrow$ & Langevin JS $\downarrow$ & IID PMF $\downarrow$ & Langevin PMF $\downarrow$ \\
\midrule
\tnce & $0.0093 \pm 0.0001$ & $0.0111 \pm 0.0003$ & $0.095 \pm 0.001$ & $0.119 \pm 0.003$ \\
\tnceDsm & $0.0063 \pm 0.0000$ & $0.0065 \pm 0.0001$ & $0.063 \pm 0.002$ & $0.066 \pm 0.003$ \\
\midrule
\tstnceSelf & $0.0098 \pm 0.0002$ & $0.0111 \pm 0.0001$ & $0.106 \pm 0.004$ & $0.139 \pm 0.001$ \\
\tstnceSelfDsm & $0.0072 \pm 0.0000$ & $0.0082 \pm 0.0002$ & $0.076 \pm 0.002$ & $0.088 \pm 0.005$ \\
\midrule
FPE \citep{plainer2025consistent} & $0.0082 \pm 0.0002$ & $0.0090 \pm 0.0006$ & $0.098 \pm 0.003$ & $0.104 \pm 0.004$ \\
\bottomrule
\end{tabular}
\end{table}

\begin{table}[t]
\centering
\footnotesize
\caption{Chignolin results.}
\label{tbl:chignolin-results}
\begin{tabular}{lcccc}
\toprule
Method & IID JS $\downarrow$& Langevin JS $\downarrow$& IID PMF $\downarrow$& Langevin PMF $\downarrow$\\
\midrule
\tnce & $0.0593 \pm 0.0001$ & $0.2420 \pm 0.0020$ & $0.654 \pm 0.001$ & $3.143 \pm 0.031$ \\
\tnceDsm & $0.0073 \pm 0.0019$ & $0.0181 \pm 0.0055$ & $0.060 \pm 0.015$ & $0.162 \pm 0.053$ \\
\midrule
\tstnceSelf & $0.0097 \pm 0.0000$ & $0.0199 \pm 0.0015$ & $0.083 \pm 0.000$ & $0.170 \pm 0.014$ \\
\tstnceSelfDsm & $0.0074 \pm 0.0001$ & $0.0082 \pm 0.0006$ & $0.059 \pm 0.001$ & $0.065 \pm 0.005$ \\
\midrule
FPE \citep{plainer2025consistent} & $0.0048 \pm 0.0001$ & $0.0050 \pm 0.0001$ & $0.037 \pm 0.000$ & $0.039 \pm 0.001$ \\
\bottomrule
\end{tabular}
\end{table}

\section{Discussion}

\paragraph{Compute burden of the likelihood: training-time \textit{versus} inference-time}
This work highlights how the literature of learning energy-based models falls within three families of learning objectives: temporal, spatial, and spatiotemporal. Another axis for organizing the literature is two parameterizations of the energy-based model differences, both enabling likelihood estimation but differing primarily in when the computational burden is incurred.

A first class defers this cost to inference time by integrating learned quantities, including temporal methods based on $\partial_t \log p_t(x)$~\citep{choi2022densityratio,yu2025dre,chen2026timescore}, spatial methods based on $\nabla \log p_t(x)$, and spatio-temporal approaches integrating $\frac{\mathrm{d}}{\mathrm{d}t} \log p_t(x_t)$, such as continuous normalizing flows~\citep{chen2019neuralordinarydifferentialequations} and path-wise formulations~\citep{choi2022densityratio}. These methods are relatively easy to train, as they avoid higher-order differentiation, but incur significant inference-time cost due to numerical integration, although recent works look into \textit{learn} to integrate in few steps~\citep{boffi2026flow_maps,ai2026f2d2,chen2025diffusionsecantalignmentscorebased}.

A second class instead parameterizes these differences directly with an energy-based model, enabling one-step likelihood evaluation. This includes temporal methods such as NCE~\citep{gutmann2012nce_jmlr}, spatial methods such as (denoising) score matching~\citep{hyvarinen2005sm,vincent2011dsm}, and recent spatio-temporal approaches, including ours~\citep{he2026rne,guth2025dual,plainer2025consistent,aggarwal2025boltznce,ouyang2026diffusiveclassificationlosslearning}. These methods yield cheap inference—often a single network evaluation—but are harder to train, e.g., due to higher-order autodiff.

Overall, these approaches trade training-time for inference-time compute. We focus on the latter regime, favoring expensive training for efficient inference. A systematic study of this trade-off is left for future work.

\paragraph{Limitations and future directions}
While models trained using NCE loss functions can recover the ground truth energies accurately, the scores may not be learned well when comparing to the score matching variants. Future work could explore better samplers for models trained with NCE losses that either do not rely on gradient information or could better handle noisy gradients.

\paragraph{}
In conclusion, we show that despite their diversity, most energy-based model estimation methods fall into three categories: spatial, temporal, and spatiotemporal. We identify fundamental failure modes of purely spatial and temporal approaches, and show how spatiotemporal methods overcome them. Our unified binary classification framework recovers many existing methods and yields new objectives with near state-of-the-art performance across synthetic, image, and molecular data.

\subsubsection*{Acknowledgements and Disclosure of Funding}
The authors wish to acknowledge Florentin Guth for interesting discussions on Dual Score Matching~\citep{guth2025dual}.

Hanlin Yu and Arto Klami acknowledge the research environment provided by ELLIS Institute Finland, and were supported by the Research Council of Finland Flagship programme: Finnish Center for Artificial Intelligence FCAI.
RuiKang OuYang acknowledges the UK Engineering and Physical Sciences Research Council (EPSRC) grant EP/L016516/1 for the University of Cambridge Centre for Doctoral Training, the Cambridge Centre for Analysis.
Michael U. Gutmann was supported in part by Gen AI - the AI Hub for Generative Models, funded by EPSRC (EP/Y028805/1).

The authors wish to acknowledge CSC - IT Center for Science, Finland, for computational resources, and acknowledge CSC - IT Center for Science, Finland for awarding this project access to the LUMI supercomputer, owned by the EuroHPC Joint Undertaking, hosted by CSC(Finland) and the LUMI consortium through CSC - IT Center for Science, Finland.

\newpage
\vskip 0.2in
\bibliography{references}

\begin{thebibliography}{79}
\providecommand{\natexlab}[1]{#1}
\providecommand{\url}[1]{\texttt{#1}}
\expandafter\ifx\csname urlstyle\endcsname\relax
  \providecommand{\doi}[1]{doi: #1}\else
  \providecommand{\doi}{doi: \begingroup \urlstyle{rm}\Url}\fi

\bibitem[Aggarwal et~al.(2025)Aggarwal, Chen, Boffi, and Koes]{aggarwal2025boltznce}
R.~Aggarwal, J.~Chen, N.~M. Boffi, and D.~Koes.
\newblock {BoltzNCE}: {Learning} likelihoods for boltzmann generation with stochastic interpolants and noise contrastive estimation.
\newblock In \emph{Conference on neural information processing systems}, 2025.

\bibitem[Ai et~al.(2026)Ai, He, Gu, Salakhutdinov, Kolter, Boffi, and Simchowitz]{ai2026f2d2}
X.~Ai, Y.~He, A.~Gu, R.~Salakhutdinov, J.~Kolter, N.~Boffi, and M.~Simchowitz.
\newblock Joint distillation for fast likelihood evaluation and sampling in flow-based models.
\newblock In \emph{International Conference on Learning Representations}, 2026.

\bibitem[Albergo et~al.(2025)Albergo, Boffi, and Vanden-Eijnden]{albergo2025stochasticinterpolant}
M.~Albergo, N.~M. Boffi, and E.~Vanden-Eijnden.
\newblock Stochastic interpolants: A unifying framework for flows and diffusions.
\newblock \emph{Journal of Machine Learning Research}, 26\penalty0 (209):\penalty0 1--80, 2025.

\bibitem[Ansel et~al.(2024)Ansel, Yang, He, Gimelshein, Jain, Voznesensky, Bao, Bell, Berard, Burovski, Chauhan, Chourdia, Constable, Desmaison, DeVito, Ellison, Feng, Gong, Gschwind, Hirsh, Huang, Kalambarkar, Kirsch, Lazos, Lezcano, Liang, Liang, Lu, Luk, Maher, Pan, Puhrsch, Reso, Saroufim, Siraichi, Suk, Suo, Tillet, Wang, Wang, Wen, Zhang, Zhao, Zhou, Zou, Mathews, Chanan, Wu, and Chintala]{ansel2024pytorch}
J.~Ansel, E.~Yang, H.~He, N.~Gimelshein, A.~Jain, M.~Voznesensky, B.~Bao, P.~Bell, D.~Berard, E.~Burovski, G.~Chauhan, A.~Chourdia, W.~Constable, A.~Desmaison, Z.~DeVito, E.~Ellison, W.~Feng, J.~Gong, M.~Gschwind, B.~Hirsh, S.~Huang, K.~Kalambarkar, L.~Kirsch, M.~Lazos, M.~Lezcano, Y.~Liang, J.~Liang, Y.~Lu, C.~Luk, B.~Maher, Y.~Pan, C.~Puhrsch, M.~Reso, M.~Saroufim, M.~Y. Siraichi, H.~Suk, M.~Suo, P.~Tillet, E.~Wang, X.~Wang, W.~Wen, S.~Zhang, X.~Zhao, K.~Zhou, R.~Zou, A.~Mathews, G.~Chanan, P.~Wu, and S.~Chintala.
\newblock {PyTorch} 2: {Faster} machine learning through dynamic python bytecode transformation and graph compilation.
\newblock In \emph{{ACM} international conference on architectural support for programming languages and operating systems, volume 2 ({ASPLOS} '24)}. ACM, Apr. 2024.

\bibitem[Bartosh et~al.(2024)Bartosh, Vetrov, and Naesseth]{bartosh2024neural}
G.~Bartosh, D.~Vetrov, and C.~A. Naesseth.
\newblock Neural flow diffusion models: Learnable forward process for improved diffusion modelling.
\newblock In \emph{Annual Conference on Neural Information Processing Systems}, 2024.

\bibitem[Batatia et~al.(2023)Batatia, Kovács, Simm, Ortner, and Csányi]{batatia2023macehigherorderequivariant}
I.~Batatia, D.~P. Kovács, G.~N.~C. Simm, C.~Ortner, and G.~Csányi.
\newblock Mace: Higher order equivariant message passing neural networks for fast and accurate force fields, 2023.

\bibitem[Behrmann et~al.(2019)Behrmann, Grathwohl, Chen, Duvenaud, and Jacobsen]{behrmann2019iresnet}
J.~Behrmann, W.~Grathwohl, R.~T.~Q. Chen, D.~Duvenaud, and J.-H. Jacobsen.
\newblock Invertible residual networks.
\newblock In \emph{International Conference on Machine Learning}, volume~97, pages 573--582. PMLR, 09--15 Jun 2019.

\bibitem[Biewald(2020)]{wandb2020}
L.~Biewald.
\newblock Experiment tracking with weights and biases, 2020.
\newblock URL \url{https://www.wandb.com/}.

\bibitem[Boffi et~al.(2026)Boffi, Albergo, and Vanden-Eijnden]{boffi2026flow_maps}
N.~M. Boffi, M.~S. Albergo, and E.~Vanden-Eijnden.
\newblock How to build a consistency model: {Learning} flow maps via self-distillation.
\newblock In \emph{The thirty-ninth annual conference on neural information processing systems}, 2026.

\bibitem[Bradbury et~al.(2018)Bradbury, Frostig, Hawkins, Johnson, Katariya, Leary, Maclaurin, Necula, Paszke, VanderPlas, Wanderman-Milne, and Zhang]{jax2018github}
J.~Bradbury, R.~Frostig, P.~Hawkins, M.~J. Johnson, Y.~Katariya, C.~Leary, D.~Maclaurin, G.~Necula, A.~Paszke, J.~VanderPlas, S.~Wanderman-Milne, and Q.~Zhang.
\newblock {JAX}: composable transformations of {Python}+{NumPy} programs, 2018.
\newblock URL \url{http://github.com/jax-ml/jax}.

\bibitem[Ceylan(2017)]{ceylan2017cncethesis}
C.~Ceylan.
\newblock Conditional noise-contrastive estimation: With application to natural image statistics.
\newblock Master’s thesis, KTH, School of Computer Science and Communication (CSC), Stockholm, Sweden, 2017.
\newblock Independent thesis, advanced level (20 credits / 30 HE credits).

\bibitem[Ceylan and Gutmann(2018)]{ceylan2018cnce}
C.~Ceylan and M.~U. Gutmann.
\newblock Conditional {Noise}-{Contrastive} {Estimation} of {Unnormalised} {Models}.
\newblock In \emph{{International} {Conference} on {Machine} {Learning}}, volume~80, pages 726--734. PMLR, July 2018.

\bibitem[Chehab et~al.(2023)Chehab, Hyvarinen, and Risteski]{chehab2023provable}
O.~Chehab, A.~Hyvarinen, and A.~Risteski.
\newblock Provable benefits of annealing for estimating normalizing constants: Importance sampling, noise-contrastive estimation, and beyond.
\newblock In \emph{Conference on Neural Information Processing Systems}, 2023.

\bibitem[Chen et~al.(2019)Chen, Rubanova, Bettencourt, and Duvenaud]{chen2019neuralordinarydifferentialequations}
R.~T.~Q. Chen, Y.~Rubanova, J.~Bettencourt, and D.~Duvenaud.
\newblock Neural ordinary differential equations, 2019.

\bibitem[Chen et~al.(2025{\natexlab{a}})Chen, Li, Li, Xu, Lin, Yang, Zeng, Paisley, and Zhao]{chen2025diffusionsecantalignmentscorebased}
W.~Chen, S.~Li, J.~Li, J.~Xu, Z.~Lin, J.~Yang, D.~Zeng, J.~Paisley, and Q.~Zhao.
\newblock Diffusion secant alignment for score-based density ratio estimation, 2025{\natexlab{a}}.

\bibitem[Chen et~al.(2025{\natexlab{b}})Chen, Li, Li, Yang, Paisley, and Zeng]{chen2025diffusiondre}
W.~Chen, S.~Li, J.~Li, J.~Yang, J.~Paisley, and D.~Zeng.
\newblock Dequantified diffusion-schrödinger bridge for density ratio estimation.
\newblock In \emph{International Conference on Machine Learning}, volume 267, pages 8427--8452. PMLR, 13--19 Jul 2025{\natexlab{b}}.

\bibitem[Chen et~al.(2026)Chen, Li, Li, Lin, Yang, Paisley, and Zeng]{chen2026timescore}
W.~Chen, J.~Li, S.~Li, Z.~Lin, J.~Yang, J.~Paisley, and D.~Zeng.
\newblock A minimum variance path principle for accurate and stable score-based density ratio estimation.
\newblock In \emph{International Conference on Learning Representations}, 2026.

\bibitem[Chewi et~al.(2025)Chewi, Kalavasis, Mehrotra, and Montasser]{chewi2025ebmdiffusion}
S.~Chewi, A.~Kalavasis, A.~Mehrotra, and O.~Montasser.
\newblock {DDPM} score matching is asymptotically efficient.
\newblock In \emph{ICLR 2025 Workshop on Deep Generative Model in Machine Learning: Theory, Principle and Efficacy}, 2025.

\bibitem[Choi et~al.(2022)Choi, Meng, Song, and Ermon]{choi2022densityratio}
K.~Choi, C.~Meng, Y.~Song, and S.~Ermon.
\newblock Density ratio estimation via infinitesimal classification.
\newblock In \emph{International Conference on Artificial Intelligence and Statistics}, volume 151, pages 2552--2573. PMLR, 28--30 Mar 2022.

\bibitem[Chrabaszcz et~al.(2017)Chrabaszcz, Loshchilov, and Hutter]{chrabaszcz2017imagenet64}
P.~Chrabaszcz, I.~Loshchilov, and F.~Hutter.
\newblock A downsampled variant of imagenet as an alternative to the cifar datasets, 2017.

\bibitem[Doucet et~al.(2022)Doucet, Grathwohl, Matthews, and Strathmann]{doucet2022diffusionais}
A.~Doucet, W.~Grathwohl, A.~G. Matthews, and H.~Strathmann.
\newblock Score-based diffusion meets annealed importance sampling.
\newblock In \emph{Advances in Neural Information Processing Systems}, volume~35, pages 21482--21494. Curran Associates, Inc., 2022.

\bibitem[Du and Mordatch(2019)]{du2019implicit}
Y.~Du and I.~Mordatch.
\newblock Implicit generation and modeling with energy-based models.
\newblock In \emph{international conference on neural information processing systems}. Curran Associates Inc., Red Hook, NY, USA, 2019.

\bibitem[Du et~al.(2023)Du, Durkan, Strudel, Tenenbaum, Dieleman, Fergus, Sohl-Dickstein, Doucet, and Grathwohl]{du2023rrr}
Y.~Du, C.~Durkan, R.~Strudel, J.~B. Tenenbaum, S.~Dieleman, R.~Fergus, J.~Sohl-Dickstein, A.~Doucet, and W.~S. Grathwohl.
\newblock Reduce, {Reuse}, {Recycle}: {Compositional} {Generation} with {Energy}-{Based} {Diffusion} {Models} and {MCMC}.
\newblock In \emph{{International} {Conference} on {Machine} {Learning}}, volume 202, pages 8489--8510. PMLR, July 2023.

\bibitem[Eastman et~al.(2024)Eastman, Galvelis, Peláez, Abreu, Farr, Gallicchio, Gorenko, Henry, Hu, Huang, Krämer, Michel, Mitchell, Pande, Rodrigues, Rodriguez-Guerra, Simmonett, Singh, Swails, Turner, Wang, Zhang, Chodera, De~Fabritiis, and Markland]{eastman2024openmm}
P.~Eastman, R.~Galvelis, R.~P. Peláez, C.~R.~A. Abreu, S.~E. Farr, E.~Gallicchio, A.~Gorenko, M.~M. Henry, F.~Hu, J.~Huang, A.~Krämer, J.~Michel, J.~A. Mitchell, V.~S. Pande, J.~P. Rodrigues, J.~Rodriguez-Guerra, A.~C. Simmonett, S.~Singh, J.~Swails, P.~Turner, Y.~Wang, I.~Zhang, J.~D. Chodera, G.~De~Fabritiis, and T.~E. Markland.
\newblock {OpenMM} 8: {Molecular} {Dynamics} {Simulation} with {Machine} {Learning} {Potentials}.
\newblock \emph{The Journal of Physical Chemistry B}, 128\penalty0 (1):\penalty0 109--116, Jan. 2024.
\newblock ISSN 1520-6106.
\newblock \doi{10.1021/acs.jpcb.3c06662}.

\bibitem[Gao et~al.(2020)Gao, Nijkamp, Kingma, Xu, Dai, and Wu]{gao2020flowcontrastive}
R.~Gao, E.~Nijkamp, D.~P. Kingma, Z.~Xu, A.~M. Dai, and Y.~N. Wu.
\newblock Flow contrastive estimation of energy-based models.
\newblock In \emph{2020 {IEEE}/{CVF} conference on computer vision and pattern recognition ({CVPR})}, pages 7515--7525, 2020.

\bibitem[Gao et~al.(2021)Gao, Song, Poole, Wu, and Kingma]{gao2021learning}
R.~Gao, Y.~Song, B.~Poole, Y.~N. Wu, and D.~P. Kingma.
\newblock Learning energy-based models by diffusion recovery likelihood.
\newblock In \emph{International conference on learning representations}, 2021.

\bibitem[Gonzalez et~al.(2023)Gonzalez, Fernandez~Pinto, Tran, Gherbi, Hajri, and Masmoudi]{gonzalez2023seeds}
M.~Gonzalez, N.~Fernandez~Pinto, T.~Tran, e.~Gherbi, H.~Hajri, and N.~Masmoudi.
\newblock {SEEDS}: {Exponential} {SDE} solvers for fast high-quality sampling from diffusion models.
\newblock In \emph{Advances in neural information processing systems}, volume~36, pages 68061--68120. Curran Associates, Inc., 2023.

\bibitem[Goodfellow et~al.(2020)Goodfellow, Pouget-Abadie, Mirza, Xu, Warde-Farley, Ozair, Courville, and Bengio]{goodfellow2020gan}
I.~Goodfellow, J.~Pouget-Abadie, M.~Mirza, B.~Xu, D.~Warde-Farley, S.~Ozair, A.~Courville, and Y.~Bengio.
\newblock Generative {Adversarial} {Networks}.
\newblock \emph{Commun. ACM}, 63\penalty0 (11):\penalty0 139--144, Oct. 2020.
\newblock ISSN 0001-0782.
\newblock Place: New York, NY, USA.

\bibitem[Grathwohl et~al.(2019)Grathwohl, Chen, Bettencourt, and Duvenaud]{grathwohl2018scalable}
W.~Grathwohl, R.~T.~Q. Chen, J.~Bettencourt, and D.~Duvenaud.
\newblock Scalable reversible generative models with free-form continuous dynamics.
\newblock In \emph{International Conference on Learning Representations}, 2019.

\bibitem[Gugger et~al.(2022)Gugger, Debut, Wolf, Schmid, Mueller, Mangrulkar, Sun, and Bossan]{accelerate2022}
S.~Gugger, L.~Debut, T.~Wolf, P.~Schmid, Z.~Mueller, S.~Mangrulkar, M.~Sun, and B.~Bossan.
\newblock Accelerate: {Training} and inference at scale made simple, efficient and adaptable., 2022.
\newblock URL \url{https://github.com/huggingface/accelerate}.

\bibitem[Guth et~al.(2025)Guth, Kadkhodaie, and Simoncelli]{guth2025dual}
F.~Guth, Z.~Kadkhodaie, and E.~P. Simoncelli.
\newblock Learning normalized image densities via dual score matching, 2025.
\newblock \_eprint: 2506.05310.

\bibitem[Gutmann and Hirayama(2011)]{gutmann2011bregman}
M.~U. Gutmann and J.-i. Hirayama.
\newblock Bregman divergence as general framework to estimate unnormalized statistical models.
\newblock In \emph{conference on uncertainty in artificial intelligence}, {UAI}'11, pages 283--290, Barcelona, Spain, 2011. AUAI Press.
\newblock ISBN 978-0-9749039-7-2.

\bibitem[Gutmann and Hyvärinen(2012)]{gutmann2012nce_jmlr}
M.~U. Gutmann and A.~Hyvärinen.
\newblock Noise-{Contrastive} {Estimation} of {Unnormalized} {Statistical} {Models}, with {Applications} to {Natural} {Image} {Statistics}.
\newblock \emph{Journal of Machine Learning Research}, 13\penalty0 (11):\penalty0 307--361, 2012.

\bibitem[He et~al.(2026)He, Hern{\'a}ndez-Lobato, Du, and Vargas]{he2026rne}
J.~He, J.~M. Hern{\'a}ndez-Lobato, Y.~Du, and F.~Vargas.
\newblock {RNE}: plug-and-play diffusion inference-time control and energy-based training.
\newblock In \emph{International Conference on Learning Representations}, 2026.

\bibitem[Hinton(2002{\natexlab{a}})]{hinton2002cd}
G.~E. Hinton.
\newblock Training products of experts by minimizing contrastive divergence.
\newblock \emph{Neural Computation}, 14\penalty0 (8):\penalty0 1771--1800, 2002{\natexlab{a}}.
\newblock \doi{10.1162/089976602760128018}.

\bibitem[Hinton(2002{\natexlab{b}})]{hinton2002contrastivedivergence}
G.~E. Hinton.
\newblock Training products of experts by minimizing contrastive divergence.
\newblock \emph{Neural Comput.}, 14\penalty0 (8):\penalty0 1771–1800, Aug. 2002{\natexlab{b}}.
\newblock ISSN 0899-7667.

\bibitem[Holderrieth et~al.(2026)Holderrieth, Singer, Jaakkola, Chen, Lipman, and Karrer]{holderrieth2026glass}
P.~Holderrieth, U.~Singer, T.~Jaakkola, R.~T.~Q. Chen, Y.~Lipman, and B.~Karrer.
\newblock {GLASS} flows: {Efficient} inference for reward alignment of flow and diffusion models.
\newblock In \emph{International conference on learning representations}, 2026.

\bibitem[Husic et~al.(2020)Husic, Charron, Lemm, Wang, Pérez, Majewski, Krämer, Chen, Olsson, de~Fabritiis, Noé, and Clementi]{Husic_2020}
B.~E. Husic, N.~E. Charron, D.~Lemm, J.~Wang, A.~Pérez, M.~Majewski, A.~Krämer, Y.~Chen, S.~Olsson, G.~de~Fabritiis, F.~Noé, and C.~Clementi.
\newblock Coarse graining molecular dynamics with graph neural networks.
\newblock \emph{The Journal of Chemical Physics}, 153\penalty0 (19), Nov. 2020.
\newblock ISSN 1089-7690.
\newblock \doi{10.1063/5.0026133}.
\newblock URL \url{http://dx.doi.org/10.1063/5.0026133}.

\bibitem[Hyv{\"a}rinen et~al.(2004)Hyv{\"a}rinen, Karhunen, and Oja]{hyvarinen2004ica}
A.~Hyv{\"a}rinen, J.~Karhunen, and E.~Oja.
\newblock \emph{Independent Component Analysis}.
\newblock Adaptive and Cognitive Dynamic Systems: Signal Processing, Learning, Communications and Control. Wiley, 2004.
\newblock ISBN 9780471464198.

\bibitem[Hyvärinen(2005)]{hyvarinen2005sm}
A.~Hyvärinen.
\newblock Estimation of {Non}-{Normalized} {Statistical} {Models} by {Score} {Matching}.
\newblock \emph{Journal of Machine Learning Research}, 6\penalty0 (24):\penalty0 695--709, 2005.

\bibitem[Karras et~al.(2022)Karras, Aittala, Aila, and Laine]{karras2022edm}
T.~Karras, M.~Aittala, T.~Aila, and S.~Laine.
\newblock Elucidating the {Design} {Space} of {Diffusion}-{Based} {Generative} {Models}.
\newblock In \emph{Advances in {Neural} {Information} {Processing} {Systems}}, volume~35, pages 26565--26577. Curran Associates, Inc., 2022.

\bibitem[Kingma et~al.(2021)Kingma, Salimans, Poole, and Ho]{kingma2021variationaldiffusion}
D.~Kingma, T.~Salimans, B.~Poole, and J.~Ho.
\newblock Variational diffusion models.
\newblock In \emph{Advances in Neural Information Processing Systems}, volume~34, pages 21696--21707. Curran Associates, Inc., 2021.

\bibitem[Kingma and Ba(2015)]{kingma2015adam}
D.~P. Kingma and J.~Ba.
\newblock Adam: {A} {Method} for {Stochastic} {Optimization}.
\newblock In \emph{{International} {Conference} on {Learning} {Representations}}, 2015.

\bibitem[Kingma and Dhariwal(2018)]{kinga2018glow}
D.~P. Kingma and P.~Dhariwal.
\newblock Glow: Generative flow with invertible 1x1 convolutions.
\newblock In \emph{Advances in Neural Information Processing Systems}, volume~31. Curran Associates, Inc., 2018.

\bibitem[Koehler et~al.(2023)Koehler, Heckett, and Risteski]{koehler2023scorematching}
F.~Koehler, A.~Heckett, and A.~Risteski.
\newblock Statistical efficiency of score matching: The view from isoperimetry.
\newblock In \emph{The Eleventh International Conference on Learning Representations}, 2023.

\bibitem[Lai et~al.(2023)Lai, Takida, Murata, Uesaka, Mitsufuji, and Ermon]{lai2023fpdiffusion}
C.-H. Lai, Y.~Takida, N.~Murata, T.~Uesaka, Y.~Mitsufuji, and S.~Ermon.
\newblock {FP}-diffusion: Improving score-based diffusion models by enforcing the underlying score fokker-planck equation.
\newblock In \emph{International Conference on Machine Learning}, volume 202, pages 18365--18398. PMLR, 23--29 Jul 2023.

\bibitem[LeCun et~al.(2010)LeCun, Cortes, and Burges]{lecun2010mnist}
Y.~LeCun, C.~Cortes, and C.~Burges.
\newblock {MNIST} handwritten digit database.
\newblock \emph{ATT Labs [Online]. Available: http://yann.lecun.com/exdb/mnist}, 2, 2010.

\bibitem[Li and He(2026)]{li2026basicsletdenoisinggenerative}
T.~Li and K.~He.
\newblock Back to basics: {Let} denoising generative models denoise, 2026.
\newblock arXiv: 2511.13720 [cs.CV].

\bibitem[Lindorff-Larsen et~al.(2011)Lindorff-Larsen, Piana, Dror, and Shaw]{doi:10.1126/science.1208351}
K.~Lindorff-Larsen, S.~Piana, R.~O. Dror, and D.~E. Shaw.
\newblock How fast-folding proteins fold.
\newblock \emph{Science}, 334\penalty0 (6055):\penalty0 517--520, 2011.
\newblock \doi{10.1126/science.1208351}.

\bibitem[Lipman et~al.(2023)Lipman, Chen, Ben-Hamu, Nickel, and Le]{lipman2023fm}
Y.~Lipman, R.~T.~Q. Chen, H.~Ben-Hamu, M.~Nickel, and M.~Le.
\newblock Flow {Matching} for {Generative} {Modeling}.
\newblock In \emph{The {Eleventh} {International} {Conference} on {Learning} {Representations}}, 2023.

\bibitem[Lipman et~al.(2024)Lipman, Havasi, Holderrieth, Shaul, Le, Karrer, Chen, Lopez-Paz, Ben-Hamu, and Gat]{lipman2024flowmatchingguidecode}
Y.~Lipman, M.~Havasi, P.~Holderrieth, N.~Shaul, M.~Le, B.~Karrer, R.~T.~Q. Chen, D.~Lopez-Paz, H.~Ben-Hamu, and I.~Gat.
\newblock Flow matching guide and code, 2024.
\newblock arXiv: 2412.06264 [cs.LG].

\bibitem[Loshchilov and Hutter(2019)]{loshchilov2018adamw}
I.~Loshchilov and F.~Hutter.
\newblock Decoupled weight decay regularization.
\newblock In \emph{International conference on learning representations}, 2019.

\bibitem[Lu et~al.(2022)Lu, Zhou, Bao, Chen, LI, and Zhu]{lu2022dpm-solver}
C.~Lu, Y.~Zhou, F.~Bao, J.~Chen, C.~LI, and J.~Zhu.
\newblock {DPM}-solver: a fast {ODE} solver for diffusion probabilistic model sampling in around 10 steps.
\newblock In \emph{Advances in neural information processing systems}, volume~35, pages 5775--5787. Curran Associates, Inc., 2022.

\bibitem[M{\'a}t{\'e} et~al.(2022)M{\'a}t{\'e}, Klein, Golling, and Fleuret]{mate2022flowification}
B.~M{\'a}t{\'e}, S.~Klein, T.~Golling, and F.~Fleuret.
\newblock Flowification: Everything is a normalizing flow.
\newblock In \emph{Advances in Neural Information Processing Systems}, 2022.

\bibitem[Molgedey and Schuster(1994)]{molgedey1994tica}
L.~Molgedey and H.~G. Schuster.
\newblock Separation of a mixture of independent signals using time delayed correlations.
\newblock \emph{Phys. Rev. Lett.}, 72:\penalty0 3634--3637, Jun 1994.

\bibitem[Nichol and Dhariwal(2021)]{nichol2021denoising}
A.~Q. Nichol and P.~Dhariwal.
\newblock Improved denoising diffusion probabilistic models.
\newblock In \emph{International Conference on Machine Learning}, volume 139, pages 8162--8171. PMLR, 18--24 Jul 2021.

\bibitem[Noé et~al.(2019)Noé, Olsson, Köhler, and Wu]{noe2019bg}
F.~Noé, S.~Olsson, J.~Köhler, and H.~Wu.
\newblock Boltzmann generators: {Sampling} equilibrium states of many-body systems with deep learning.
\newblock \emph{Science}, 365\penalty0 (6457):\penalty0 eaaw1147, 2019.
\newblock tex.eprint: https://www.science.org/doi/pdf/10.1126/science.aaw1147.

\bibitem[OuYang et~al.(2026)OuYang, Grenioux, and Hernández-Lobato]{ouyang2026diffusiveclassificationlosslearning}
R.~OuYang, L.~Grenioux, and J.~M. Hernández-Lobato.
\newblock A diffusive classification loss for learning energy-based generative models, 2026.
\newblock arXiv: 2601.21025 [stat.ML].

\bibitem[Plainer et~al.(2026)Plainer, Wu, Klein, Günnemann, and Noe]{plainer2025consistent}
M.~Plainer, H.~Wu, L.~Klein, S.~Günnemann, and F.~Noe.
\newblock Consistent sampling and simulation: {Molecular} dynamics with energy-based diffusion models.
\newblock In \emph{The thirty-ninth annual conference on neural information processing systems}, 2026.

\bibitem[Qin and Risteski(2024)]{qin2024ebmdiffusion}
Y.~Qin and A.~Risteski.
\newblock Fit like you sample: Sample-efficient generalized score matching from fast mixing diffusions.
\newblock In \emph{Conference on Learning Theory}, volume 247, pages 4413--4457. PMLR, 30 Jun--03 Jul 2024.

\bibitem[Rhodes et~al.(2020)Rhodes, Xu, and Gutmann]{rhodes2020telescoping}
B.~Rhodes, K.~Xu, and M.~U. Gutmann.
\newblock Telescoping density-ratio estimation.
\newblock In \emph{Advances in Neural Information Processing Systems}, volume~33, pages 4905--4916. Curran Associates, Inc., 2020.

\bibitem[Skreta et~al.(2025)Skreta, Akhound-Sadegh, Ohanesian, Bondesan, Aspuru-Guzik, Doucet, Brekelmans, Tong, and Neklyudov]{skreta2025feynmankac}
M.~Skreta, T.~Akhound-Sadegh, V.~Ohanesian, R.~Bondesan, A.~Aspuru-Guzik, A.~Doucet, R.~Brekelmans, A.~Tong, and K.~Neklyudov.
\newblock Feynman-kac correctors in diffusion: Annealing, guidance, and product of experts.
\newblock In \emph{International Conference on Machine Learning}, 2025.

\bibitem[Song and Ermon(2019)]{song2019score}
Y.~Song and S.~Ermon.
\newblock Generative {Modeling} by {Estimating} {Gradients} of the {Data} {Distribution}.
\newblock In \emph{Advances in {Neural} {Information} {Processing} {Systems}}, volume~32. Curran Associates, Inc., 2019.

\bibitem[Song et~al.(2019)Song, Meng, and Ermon]{song2019mintnet}
Y.~Song, C.~Meng, and S.~Ermon.
\newblock Mintnet: Building invertible neural networks with masked convolutions.
\newblock In \emph{Neural Information Processing Systems}, volume~32. Curran Associates, Inc., 2019.

\bibitem[Song et~al.(2020)Song, Garg, Shi, and Ermon]{song2020ssm}
Y.~Song, S.~Garg, J.~Shi, and S.~Ermon.
\newblock Sliced {Score} {Matching}: {A} {Scalable} {Approach} to {Density} and {Score} {Estimation}.
\newblock In \emph{{Uncertainty} in {Artificial} {Intelligence} {Conference}}, volume 115, pages 574--584. PMLR, July 2020.

\bibitem[Song et~al.(2021{\natexlab{a}})Song, Durkan, Murray, and Ermon]{song2021mle}
Y.~Song, C.~Durkan, I.~Murray, and S.~Ermon.
\newblock Maximum {Likelihood} {Training} of {Score}-{Based} {Diffusion} {Models}.
\newblock In \emph{Advances in {Neural} {Information} {Processing} {Systems}}, volume~34, pages 1415--1428. Curran Associates, Inc., 2021{\natexlab{a}}.

\bibitem[Song et~al.(2021{\natexlab{b}})Song, Sohl-Dickstein, Kingma, Kumar, Ermon, and Poole]{song2021sde}
Y.~Song, J.~Sohl-Dickstein, D.~P. Kingma, A.~Kumar, S.~Ermon, and B.~Poole.
\newblock Score-{Based} {Generative} {Modeling} through {Stochastic} {Differential} {Equations}.
\newblock In \emph{International {Conference} on {Learning} {Representations}}, 2021{\natexlab{b}}.

\bibitem[Song et~al.(2023)Song, Dhariwal, Chen, and Sutskever]{song2023consistency}
Y.~Song, P.~Dhariwal, M.~Chen, and I.~Sutskever.
\newblock Consistency {Models}.
\newblock In \emph{{International} {Conference} on {Machine} {Learning}}, {ICML}'23. JMLR.org, 2023.
\newblock Place: Honolulu, Hawaii, USA.

\bibitem[Srivastava et~al.(2023)Srivastava, Han, Xu, Rhodes, and Gutmann]{srivastava2023multi}
A.~Srivastava, S.~Han, K.~Xu, B.~Rhodes, and M.~U. Gutmann.
\newblock Estimating the {Density} {Ratio} between {Distributions} with {High} {Discrepancy} using {Multinomial} {Logistic} {Regression}.
\newblock \emph{Transactions on Machine Learning Research}, 2023.
\newblock ISSN 2835-8856.

\bibitem[Sugiyama et~al.(2012)Sugiyama, Suzuki, and Kanamori]{sugiyama2012density}
M.~Sugiyama, T.~Suzuki, and T.~Kanamori.
\newblock \emph{Density ratio estimation in machine learning}.
\newblock Cambridge University Press, 2012.

\bibitem[Thornton et~al.(2025)Thornton, Bethune, Zhang, Bradley, Nakkiran, and Zhai]{thornton2025ebmdiffusion}
J.~Thornton, L.~Bethune, R.~Zhang, A.~Bradley, P.~Nakkiran, and S.~Zhai.
\newblock Composition and control with distilled energy diffusion models and sequential monte carlo.
\newblock In \emph{{AISTATS} 2025}, 2025.

\bibitem[Uehara et~al.(2018)Uehara, Matsuda, and Komaki]{uehara2018analysisnoisecontrastiveestimation}
M.~Uehara, T.~Matsuda, and F.~Komaki.
\newblock Analysis of noise contrastive estimation from the perspective of asymptotic variance, 2018.

\bibitem[van~den Oord et~al.(2016)van~den Oord, Kalchbrenner, Espeholt, kavukcuoglu, Vinyals, and Graves]{vandenoord2016pixelcnn}
A.~van~den Oord, N.~Kalchbrenner, L.~Espeholt, k.~kavukcuoglu, O.~Vinyals, and A.~Graves.
\newblock Conditional image generation with pixelcnn decoders.
\newblock In \emph{Advances in Neural Information Processing Systems}, volume~29. Curran Associates, Inc., 2016.

\bibitem[Vincent(2011)]{vincent2011dsm}
P.~Vincent.
\newblock A {Connection} {Between} {Score} {Matching} and {Denoising} {Autoencoders}.
\newblock \emph{Neural Computation}, 23\penalty0 (7):\penalty0 1661--1674, 2011.

\bibitem[Wenliang and Kanagawa(2021)]{wenliang2021blindness}
L.~K. Wenliang and H.~Kanagawa.
\newblock Blindness of score-based methods to isolated components and mixing proportions, 2021.
\newblock \_eprint: 2008.10087.

\bibitem[Yair and Michaeli(2021)]{yair2021contrastive}
O.~Yair and T.~Michaeli.
\newblock Contrastive {Divergence} {Learning} is a {Time} {Reversal} {Adversarial} {Game}.
\newblock In \emph{International {Conference} on {Learning} {Representations}}, 2021.

\bibitem[Yu et~al.(2025{\natexlab{a}})Yu, Gutmann, Klami, and Chehab]{yu2025cnce}
H.~Yu, M.~U. Gutmann, A.~Klami, and O.~Chehab.
\newblock Conditional noise-contrastive estimation of energy-based models by jumping between modes.
\newblock In \emph{{EurIPS} 2025 workshop on principles of generative modeling ({PriGM})}, 2025{\natexlab{a}}.

\bibitem[Yu et~al.(2025{\natexlab{b}})Yu, Klami, Hyvarinen, Korba, and Chehab]{yu2025dre}
H.~Yu, A.~Klami, A.~Hyvarinen, A.~Korba, and O.~Chehab.
\newblock Density {Ratio} {Estimation} with {Conditional} {Probability} {Paths}.
\newblock In \emph{Forty-second {International} {Conference} on {Machine} {Learning}}, 2025{\natexlab{b}}.

\bibitem[Zhai et~al.(2025)Zhai, Zhang, Nakkiran, Berthelot, Gu, Zheng, Chen, Bautista, Jaitly, and Susskind]{zhai2025tarflow}
S.~Zhai, R.~Zhang, P.~Nakkiran, D.~Berthelot, J.~Gu, H.~Zheng, T.~Chen, M.~{\'A}. Bautista, N.~Jaitly, and J.~M. Susskind.
\newblock Normalizing flows are capable generative models.
\newblock In \emph{International Conference on Machine Learning}, 2025.

\end{thebibliography}

\appendix

\appendix
\clearpage

\section*{Appendix}

The appendix is organized as follows.

\renewcommand{\contentsname}{}
\vspace{-1cm}
\addtocontents{toc}{\protect\setcounter{tocdepth}{2}}
\tableofcontents

\newpage
\section{Theoretical results}
\label{app:sec:theory}

In this section, we prove the consistency of the estimator obtained by minimizing the empirical \stnce loss. 
We also derive the simplified expressions of \stnce loss under specific perturbation kernels.

\subsection{Useful lemma on the logistic loss}

The following are useful lemma on how to rewrite the logistic classification loss. 

\begin{lemma}[Logistic loss as a Bregman divergence]
\label{lemma:logistic_loss_as_bregman_divergence}
Consider the logistic loss
\begin{align}
\mathcal{L}_{\mathrm{logistic}}(F)
= 
- \mathbb{E}_{x \sim p_{A}} \left[ 
\log \sigma(F(x)) 
\right]
- 
\mathbb{E}_{x \sim p_{B}} \left[ 
\log (1 - \sigma(F(x))) 
\right],
\end{align}
which is minimized by
\[
F^{*}(x) = \log p_A(x) - \log p_B(x).
\]
Then, up to an additive constant independent of $F$, the logistic loss can be written as a Bregman divergence between $e^{F^*}$ and $e^F$.
\end{lemma}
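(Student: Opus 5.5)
The plan is to follow the Bregman-divergence construction of \citet{gutmann2011bregman}. We write $r(x) = e^{F(x)}$ and $r^*(x) = e^{F^*(x)} = p_A(x)/p_B(x)$, and we look for a strictly convex function $f:(0,\infty)\to\mathbb{R}$ such that
\begin{align}
\mathbb{E}_{x\sim p_B}\bigl[d_f(r^*(x), r(x))\bigr] = \mathcal{L}_{\mathrm{logistic}}(F) + \text{const},
\end{align}
where $d_f(a,b) = f(a) - f(b) - f'(b)(a-b)$ is the Bregman divergence generated by $f$. The natural candidate is
\begin{align}
f(r) = r\log r - (1+r)\log(1+r).
\end{align}
First we check convexity: $f'(r) = \log r - \log(1+r) = \log\frac{r}{1+r}$ and $f''(r) = \frac{1}{r} - \frac{1}{1+r} = \frac{1}{r(1+r)} > 0$. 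The key observation is that $f'(e^F) = \log \sigma(F)$. This is what makes the logistic loss appear.

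Next we expand the Bregman divergence pointwise. Collecting terms gives
\begin{align}
-f(r) - f'(r)(r^*-r) = \log(1+r) - r^*\log\tfrac{r}{1+r}.
\end{align}
Since $\log(1+e^F) = -\log(1-\sigma(F))$, we get
\begin{align}
d_f(r^*, r) = f(r^*) - \log(1-\sigma(F)) - r^*\log\sigma(F).
\end{align}

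Then we integrate against $p_B$. The importance-weighting identity $\mathbb{E}_{p_B}[r^* g] = \mathbb{E}_{p_A}[g]$ turns the last term into $-\mathbb{E}_{p_A}[\log\sigma(F)]$. The middle term is $-\mathbb{E}_{p_B}[\log(1-\sigma(F))]$. The remaining term, $\mathbb{E}_{p_B}[f(r^*)]$, does not depend on $F$. Hence
\begin{align}
\mathbb{E}_{p_B}\bigl[d_f(e^{F^*}, e^F)\bigr] = \mathcal{L}_{\mathrm{logistic}}(F) + \mathbb{E}_{p_B}[f(r^*)],
\end{align}
which is the claim.

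As a by-product, $d_f \ge 0$ with equality if and only if $r = r^*$ (by strict convexity). This re-derives that $F^*$ is the minimizer, and it will be the tool for the consistency result.

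The main obstacle is technical rather than conceptual. The identity $\mathbb{E}_{p_B}[r^* g] = \mathbb{E}_{p_A}[g]$ needs $p_A \ll p_B$, and the constant $\mathbb{E}_{p_B}[f(r^*)]$ must be finite. I would state these as mild integrability assumptions; note that $|f(r)| \lesssim \log(1+r) + 1$ up to terms controlled by $r\log r$, so a finite KL-type moment suffices. Under these assumptions the derivation is purely algebraic.
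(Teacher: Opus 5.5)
Your proposal is correct and is essentially the paper's proof: the same generator $\Phi(u)=u\log u-(1+u)\log(1+u)$ and the same pointwise Bregman divergence between $e^{F^*(x)}$ and $e^{F(x)}$ integrated against $p_B$. You carry out explicitly the expansion the paper calls ``a straightforward calculation'', and your constant $\mathbb{E}_{p_B}[f(r^*)]$ is exactly the paper's $-\mathcal{L}_{\mathrm{logistic}}(F^*)$. Your caveat $p_A\ll p_B$ is a genuine assumption the paper leaves implicit, but the finiteness worry is unnecessary, since $|f(r)|\le 1+\log(1+r)\le 1+r$ and $\mathbb{E}_{p_B}[r^*]\le 1$, so the constant is always finite.
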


\begin{proof}
This is inspired by~\citet{gutmann2011bregman,uehara2018analysisnoisecontrastiveestimation}.

Let $\Phi(u) = u \log u - (1+u)\log(1+u)$. Consider the pointwise Bregman divergence
\begin{align}
D_{\Phi}\big(e^{F^*(x)}, e^{F(x)}\big)
=
\Phi\big(e^{F^*(x)}\big)
-
\Phi\big(e^{F(x)}\big)
-
\Phi'\big(e^{F(x)}\big)\big(e^{F^*(x)} - e^{F(x)}\big).
\end{align}
Integrating against $p_B(x)\,dx$ yields a divergence between functions:
\begin{align}
D_{\Phi}\big(e^{F^*}, e^{F}\big)
=
\int D_{\Phi}\big(e^{F^*(x)}, e^{F(x)}\big)\, p_B(x)\,dx.
\end{align}
Substituting $F^*(x) = \log p_A(x) - \log p_B(x)$, a straightforward calculation shows
\begin{align}
D_{\Phi}\big(e^{F^*}, e^{F}\big)
=
\mathcal{L}_{\mathrm{logistic}}(F)
-
\mathcal{L}_{\mathrm{logistic}}(F^*),
\end{align}
which proves the claim.
\end{proof}

\begin{lemma}[Logistic loss simplified when the model is near zero]
Consider the logistic loss
\begin{align}
    \mathcal{L}(F_{\theta})
    &:=
    -
    \E_{p^{+}(x)}
    \bigl[
        \log \sigma(F_{\theta}(x))
    \bigr]
    -
    \E_{p^{-}(x)}
    \bigl[
        \log \bigl(
            1 - \sigma(F_{\theta}(x))
        \bigr)
    \bigr].
\end{align}
If $F_\theta(x)$ is close to zero, then
\begin{align}
    \mathcal{L}(F_\theta)
    &=
    \mathcal{L}(0)
    -
    \frac12
    \E_{p^{+}(x)-p^{-}(x)}
    \left[
        F_\theta(x)
    \right]
    +
    \frac18
    \E_{p^{+}(x)+p^{-}(x)}
    \left[
        F_\theta(x)^2
    \right]
    +
    \E_{p^{+}(x)+p^{-}(x)}
    \left[
        o(F_\theta(x)^2)
    \right].
\end{align}
\label{lemma:logistic_as_squared_around_zero}
\end{lemma}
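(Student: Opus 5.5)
The plan is a pointwise second-order Taylor expansion of the two log-sigmoid terms around $F=0$, followed by integration against $p^{+}$ and $p^{-}$.

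First, record the scalar expansions. Write $u=F_\theta(x)$. Since $\log\sigma(u) = -\log(1+e^{-u})$ and $\log(1-\sigma(u)) = \log\sigma(-u)$, it suffices to expand the single function $g(u) := -\log\sigma(u) = \log(1+e^{-u})$. Its derivatives at zero are
\begin{align}
g(0)=\log 2,\qquad g'(0) = -\sigma(0) = -\tfrac12,\qquad g''(0)=\sigma(0)\bigl(1-\sigma(0)\bigr)=\tfrac14 .
\end{align}
Hence
\begin{align}
g(u) = \log 2 - \tfrac12 u + \tfrac18 u^2 + o(u^2),\qquad g(-u) = \log 2 + \tfrac12 u + \tfrac18 u^2 + o(u^2).
\end{align}

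Second, substitute into the loss. We have
\begin{align}
\mathcal{L}(F_\theta) = \E_{p^{+}}\bigl[g(F_\theta)\bigr] + \E_{p^{-}}\bigl[g(-F_\theta)\bigr].
\end{align}
The constant terms give $2\log 2 = \mathcal{L}(0)$. The linear terms combine to $-\tfrac12\bigl(\E_{p^{+}}[F_\theta]-\E_{p^{-}}[F_\theta]\bigr)$, which is what the notation $\E_{p^{+}-p^{-}}$ denotes. The quadratic terms combine to $\tfrac18\E_{p^{+}+p^{-}}[F_\theta^2]$. The remainders are collected as $\E_{p^{+}+p^{-}}[o(F_\theta^2)]$.

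The only delicate point is interpreting and justifying the remainder term. If "$F_\theta$ close to zero" means $\sup_x|F_\theta(x)|\to 0$, then Taylor's theorem with Lagrange remainder suffices. The third derivative $g'''(u)=\sigma(u)(1-\sigma(u))(1-2\sigma(u))$ is bounded by $1$, so the remainder is at most $|u|^3/6$ pointwise. Integrating this bound shows the remainder is $o\bigl(\E_{p^\pm}[F_\theta^2]\bigr)$, provided $F_\theta$ is square-integrable, or at least integrable, under $p^{\pm}$. Under only pointwise smallness, I would state the remainder as written, namely as the expectation of a pointwise $o(F_\theta^2)$ quantity. I would note that it is uniformly controlled because $g''$ is bounded by $1/4$ everywhere, so $|{\rm remainder}|\le C u^2$ globally, and dominated convergence then applies.
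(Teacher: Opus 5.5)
Your proposal is correct and follows the same route as the paper: a pointwise second-order Taylor expansion of the logistic loss around $F=0$ (with $g(0)=\log 2$, $g'(0)=-\tfrac12$, $g''(0)=\tfrac14$), followed by integration against $p^{+}$ and $p^{-}$. Your extra treatment of the remainder goes beyond the paper but does not change the argument; it consists of the global bound $|R(u)|\le u^2/8$ (from $0\le g''\le\tfrac14$) and the cubic bound under uniform smallness, and it supplies rigor that the paper's one-line ``integrate over $x$'' leaves implicit.
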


\begin{proof}
For fixed $x$, define the pointwise loss
\begin{align}
    \ell(F_{\theta})
    &:=
    -
    p^{+}\log \sigma(F_{\theta})
    -
    p^{-}\log \bigl(1-\sigma(F_{\theta})\bigr).
\end{align}
A second-order Taylor expansion around $F_{\theta}=0$ gives
\begin{align}
    \ell(F_\theta)
    &=
    \ell(0)
    -
    \frac12
    \bigl(p^{+}-p^{-}\bigr)
    F_\theta
    +
    \frac18
    \bigl(p^{+}+p^{-}\bigr)
    F_\theta^2
    +
    \bigl(p^{+}+p^{-}\bigr)
    o(F_\theta^2).
\end{align}
Integrating over $x$ yields the result.
\end{proof}

\subsection{Non-parametric consistency (Theorem~\ref{theorem:fisher_consistency})}
\label{app:ssec:fisher_consistency}

Recall that the \stnce population loss is the logistic classification objective
\begin{align}
\mathcal{L}_{\mathrm{\stnce}}(F)
=
- \mathbb{E}_{x \sim p_{A}} \big[ \log \sigma(F(x)) \big]
- \mathbb{E}_{x \sim p_{B}} \big[ \log (1 - \sigma(F(x))) \big].
\end{align}
By Lemma~\ref{lemma:logistic_loss_as_bregman_divergence}, this can be written (up to an additive constant) as
\begin{align}
\mathcal{L}_{\mathrm{\stnce}}(F)
=
D_{\Phi}\big(e^{F^*}, e^{F}\big) + \mathrm{const}.
\end{align}
Hence any minimizer $F^{\theta_{\min}}$ satisfies
\[
F^{\theta_{\min}} = F^*.
\]

Using the definition of the \stnce classifier~\eqref{eq:stnce_classifier}, this yields, for all $(x,t)$ and $(x',t')$ in the support,
\begin{align}
\log p_{\theta_{\min}}(x \mid t) - \log p_{\theta_{\min}}(x' \mid t')
=
\log p_d(x \mid t) - \log p_d(x' \mid t').
\end{align}
Therefore,
\begin{align}
\log p_{\theta_{\min}}(x \mid t)
=
\log p_d(x \mid t) + C,
\end{align}
where $C$ does not depend on $(x,t)$. Enforcing the constraint $p_\theta(x \mid t=0) = p_0(x)$ fixes $C$, yielding
\begin{align}
p_{\theta_{\min}}(x \mid t) = p_d(x \mid t).
\end{align}

This establishes non-parametric consistency. 
If the model is identifiable, we further obtain parametric Fisher consistency, i.e.,
\[
\theta_{\min} = \theta^*.
\]

\subsection{Forward-reverse kernel: Infinitesimal Limit (Propositions~\ref{th:sm_mle} and Proposition~\ref{th:total_derivative_matching})}
\label{app:sec:proofs}

\paragraph{Simplifying the loss around zero}

Let
\begin{align}
    p_+(x,t,x',t') &:=
    p_d(x,t)p_n(x',t' \mid x,t),
    \\
    p_-(x,t,x',t') &:=
    p_d(x',t')p_n(x,t \mid x',t').
\end{align}
The logistic loss used in \stnce is
\begin{equation}
    \mathcal{L}(F_{\theta})
    =
    -\E_{p_+(x, t, x', t')}
    [\log \sigma(F_{\theta}(x, t, x', t'))]
    -\E_{p_-(x, t, x', t')}
    [\log (1 - \sigma(F_{\theta}(x, t, x', t')))] .
\end{equation}
Using Lemma~\ref{lemma:logistic_as_squared_around_zero}, we have, whenever
$F_\theta$ is close to zero,
\begin{align}
    \mathcal{L}(F_\theta)
    &=
    \mathcal{L}(0)
    -
    \frac12
    \E_{p_+ - p_-}
    \left[
        F_\theta
    \right]
    +
    \frac18
    \E_{p_+ + p_-}
    \left[
        F_\theta^2
    \right]
    +
    \E_{p_+ + p_-}
    \left[
        o(F_\theta^2)
    \right].
\end{align}
In contrast with the expansion around the optimum, this expansion generally
contains a linear term. However, this term vanishes in the forward-reverse
case considered below.

\paragraph{Plugging in the forward-reverse kernel}

Recall that the forward-reverse kernel satisfies
\begin{align}
    p_d(x,t)p_n(x',t'\mid x,t)
    =
    p_d(x',t')p_n(x,t\mid x',t'),
\end{align}
due to the Bayes rule. Equivalently,
\begin{align}
    p_+(x,t,x',t')
    =
    p_-(x,t,x',t').
\end{align}
Therefore, the linear term vanishes:
\begin{align}
    \E_{p_+ - p_-}
    \left[
        F_\theta
    \right]
    =
    0.
\end{align}
Moreover, by taking logs in the forward-reverse identity,
\begin{align}
    \log p_n(x',t'\mid x,t)
    -
    \log p_n(x,t\mid x',t')
    =
    \log p_d(x',t')
    -
    \log p_d(x,t).
\end{align}
Hence
\begin{align}
    F_\theta(x,t,x',t')
    &=
    \log p_\theta(x,t)
    -
    \log p_\theta(x',t')
    \\
    &\quad+
    \log p_n(x',t'\mid x,t)
    -
    \log p_n(x,t\mid x',t')
    \\
    &=
    \log p_\theta(x,t)
    -
    \log p_\theta(x',t')
    +
    \log p_d(x',t')
    -
    \log p_d(x,t)
    \\
    &=
    \bigl(
        \log p_d(x',t')
        -
        \log p_\theta(x',t')
    \bigr)
    -
    \bigl(
        \log p_d(x,t)
        -
        \log p_\theta(x,t)
    \bigr).
\end{align}
Defining
\begin{align}
    e_\theta(x,t)
    :=
    \log p_d(x,t)-\log p_\theta(x,t),
\end{align}
we obtain
\begin{align}
    F_\theta(x,t,x',t')
    =
    e_\theta(x',t')-e_\theta(x,t).
\end{align}
Since $p_+=p_-=p_d(x,t)p_n(x',t'\mid x,t)$, the expansion becomes
\begin{align}
    \mathcal{L}(F_\theta)
    &=
    \mathcal{L}(0)
    +
    \frac14
    \E_{p_d(x,t)p_n(x',t'\mid x,t)}
    \left[
        \bigl(
            e_\theta(x',t')-e_\theta(x,t)
        \bigr)^2
    \right]
    \\
    &\quad+
    \E_{p_d(x,t)p_n(x',t'\mid x,t)}
    \left[
        o\!\left(
            \bigl(
                e_\theta(x',t')-e_\theta(x,t)
            \bigr)^2
        \right)
    \right].
\end{align}
This expansion is valid when $F_\theta$ is close to zero. In the
infinitesimal-time regime, this follows from the fact that $(x',t')$ is close
to $(x,t)$.

\paragraph{Case of the SDE}

Assume that $x_t$ follows the SDE
\begin{align}
    \dd x_t
    =
    b(x_t,t)\dd t
    +
    g(t)\dd W_t.
\end{align}
Let $t'=t+\Delta t$. Conditional on $(x_t,t)=(x,t)$, the next point satisfies
\begin{align}
    x'
    =
    x
    +
    b(x,t)\Delta t
    +
    g(t)\Delta W
    +
    R_t,
    \qquad
    \Delta W\sim \mathcal{N}(0,\Delta t\,I),
\end{align}
where
\begin{align}
    R_t
    =
    \int_t^{t+\Delta t}
    \bigl(
        b(x_s,s)-b(x,t)
    \bigr)\,ds
    +
    \int_t^{t+\Delta t}
    \bigl(
        g(s)-g(t)
    \bigr)\,dW_s .
\end{align}
Therefore, the expectation with respect to the transition kernel
$p_n(x',t'\mid x,t)$ may be written as an expectation with respect to the
Brownian increment $\Delta W$.

By It\^o's lemma,
\begin{align}
    e_\theta(x',t')-e_\theta(x,t)
    &=
    \partial_t e_\theta(x,t)\Delta t
    +
    \nabla_x e_\theta(x,t)^\top
    b(x,t)\Delta t
    \\
    &\quad+
    g(t)
    \nabla_x e_\theta(x,t)^\top
    \Delta W
    +
    \frac12
    g(t)^2
    \operatorname{Tr}
    \left(
        \nabla_x^2 e_\theta(x,t)
    \right)
    \Delta t
    +
    o(\Delta t).
\end{align}
Since $\Delta W=O(\sqrt{\Delta t})$, the leading term is
\begin{align}
    e_\theta(x',t')-e_\theta(x,t)
    =
    g(t)
    \nabla_x e_\theta(x,t)^\top
    \Delta W
    +
    O(\Delta t).
\end{align}
Thus, conditionally on $(x,t)$,
\begin{align}
    \E_{\Delta W}
    \left[
        \bigl(
            e_\theta(x',t')-e_\theta(x,t)
        \bigr)^2
        \mid x,t
    \right]
    &=
    g(t)^2
    \E_{\Delta W}
    \left[
        \bigl(
            \nabla_x e_\theta(x,t)^\top \Delta W
        \bigr)^2
    \right]
    +
    o(\Delta t)
    \\
    &=
    g(t)^2
    \left\|
        \nabla_x e_\theta(x,t)
    \right\|^2
    \Delta t
    +
    o(\Delta t).
\end{align}
Substituting into the loss expansion gives
\begin{align}
    \mathcal{L}(F_\theta)
    -
    \mathcal{L}(0)
    &=
    \frac14
    \E_{p_d(x,t)}
    \E_{\Delta W}
    \left[
        \bigl(
            e_\theta(x',t')-e_\theta(x,t)
        \bigr)^2
        \mid x,t
    \right]
    +
    o(\Delta t)
    \\
    &=
    \frac{\Delta t}{4}
    \E_{p_d(x,t)}
    \left[
        g(t)^2
        \left\|
            \nabla_x e_\theta(x,t)
        \right\|^2
    \right]
    +
    o(\Delta t).
\end{align}
Finally, since
\begin{align}
    \nabla_x e_\theta(x,t)
    =
    \nabla_x \log p_d(x,t)
    -
    \nabla_x \log p_\theta(x,t),
\end{align}
we obtain
\begin{align}
    \mathcal{L}(F_\theta)
    -
    \mathcal{L}(0)
    =
    \frac{\Delta t}{4}
    \E_{p_d(x,t)}
    \left[
        g(t)^2
        \left\|
            \nabla_x\log p_d(x,t)
            -
            \nabla_x\log p_\theta(x,t)
        \right\|^2
    \right]
    +
    o(\Delta t).
\end{align}

\paragraph{Case of the ODE}

Assume now that $x_t$ follows the ODE
\begin{align}
    \dd x_t=b(x_t,t)\dd t.
\end{align}
Let $t'=t+\Delta t$. Then, conditional on $(x_t,t)=(x,t)$,
\begin{align}
    x'
    =
    x
    +
    b(x,t)\Delta t
    +
    o(\Delta t).
\end{align}
Therefore,
\begin{align}
    e_\theta(x',t')-e_\theta(x,t)
    &=
    \left(
        \partial_t e_\theta(x,t)
        +
        b(x,t)^\top \nabla_x e_\theta(x,t)
    \right)
    \Delta t
    +
    o(\Delta t).
\end{align}
Defining the total derivative
\begin{align}
    \dd_t e_\theta(x,t)
    :=
    \partial_t e_\theta(x,t)
    +
    b(x,t)^\top \nabla_x e_\theta(x,t),
\end{align}
we get
\begin{align}
    e_\theta(x',t')-e_\theta(x,t)
    =
    \dd_t e_\theta(x,t)\Delta t
    +
    o(\Delta t).
\end{align}
Using the Taylor expansion of the logistic loss around zero, and using the
forward-reverse identity $p_+=p_-$, the linear term vanishes and
\begin{align}
    \mathcal{L}_{\stnce}(F_\theta)
    -
    \mathcal{L}_{\stnce}(0)
    &=
    \frac14
    \mathbb{E}_{p_d(x,t)}
    \left[
        \bigl(
            e_\theta(x',t')-e_\theta(x,t)
        \bigr)^2
    \right]
    +
    \mathbb{E}_{p_d(x,t)}
    \left[
        o\!\left(
            \bigl(
                e_\theta(x',t')-e_\theta(x,t)
            \bigr)^2
        \right)
    \right].
\end{align}
Substituting the ODE expansion gives
\begin{align}
    \mathcal{L}_{\stnce}(F_\theta)
    -
    \mathcal{L}_{\stnce}(0)
    &=
    \frac{\Delta t^2}{4}
    \mathbb{E}_{p_d(x,t)}
    \left[
        \bigl(
            \dd_t e_\theta(x,t)
        \bigr)^2
    \right]
    +
    o(\Delta t^2).
\end{align}
Finally, since
\begin{align}
    e_\theta(x,t)
    =
    \log p_d(x,t)-\log p_\theta(x,t),
\end{align}
we have
\begin{align}
    \dd_t e_\theta(x,t)
    =
    \dd_t \log p_d(x,t)
    -
    \dd_t \log p_\theta(x,t).
\end{align}
Therefore,
\begin{align}
    \mathcal{L}_{\stnce}(F_\theta)
    -
    \mathcal{L}_{\stnce}(0)
    =
    \frac{\Delta t^2}{4}
    \mathbb{E}_{p_d(x,t)}
    \left[
        \left(
            \dd_t\log p_d(x,t)
            -
            \dd_t\log p_\theta(x,t)
        \right)^2
    \right]
    +
    o(\Delta t^2).
\end{align}

We remark that the SDE interpretation is closer to the practical implementation of \stnce with forward-reverse kernel. With the ODE interpretation, one would need to know the velocity field $b(x,t)$ in advance, and need to keep track of the ground truth $\dd_{t}\log p_{t}(x)$.

\subsection{White noise kernel: infinitesimal limit}
\label{app:sec:dual_score_matching}

\paragraph{Simplifying the loss around zero}

Let
\begin{align}
    p_+(x,t,x',t')
    &:=
    p_d(x,t)p_n(x',t'\mid x,t),
    \\
    p_-(x,t,x',t')
    &:=
    p_d(x',t')p_n(x,t\mid x',t').
\end{align}
The logistic loss used in \stnce is
\begin{equation}
    \mathcal L(F_\theta)
    =
    -\E_{p_+}
    \left[
        \log \sigma(F_\theta(x,t,x',t'))
    \right]
    -\E_{p_-}
    \left[
        \log\bigl(
            1-\sigma(F_\theta(x,t,x',t'))
        \bigr)
    \right].
\end{equation}
Using Lemma~\ref{lemma:logistic_as_squared_around_zero}, whenever
$F_\theta$ is close to zero,
\begin{align}
    \mathcal L(F_\theta)
    &=
    \mathcal L(0)
    -
    \frac12
    \E_{p_+-p_-}
    \left[
        F_\theta
    \right]
    +
    \frac18
    \E_{p_++p_-}
    \left[
        F_\theta^2
    \right]
    +
    \E_{p_++p_-}
    \left[
        o(F_\theta^2)
    \right].
\end{align}

\paragraph{Plugging in the white noise kernel}

Assume that the perturbation kernel is an infinitesimal Gaussian kernel in
both space and time:
\begin{align}
    x'
    &=
    x+\sqrt{\varepsilon_x}\,\xi,
    \qquad
    \xi\sim \mathcal N(0,I),
    \\
    t'
    &=
    t+\sqrt{\varepsilon_t}\,\tau,
    \qquad
    \tau\sim \mathcal N(0,1),
\end{align}
where $\xi$ and $\tau$ are independent. Since the Gaussian kernel is
symmetric,
\begin{align}
    p_n(x',t'\mid x,t)
    =
    p_n(x,t\mid x',t').
\end{align}
Thus
\begin{align}
    F_\theta(x,t,x',t')
    &=
    \log p_\theta(x,t)
    -
    \log p_\theta(x',t').
\end{align}
A first-order Taylor expansion gives
\begin{align}
    F_\theta(x,t,x',t')
    &=
    -
    \sqrt{\varepsilon_x}
    \nabla_x\log p_\theta(x,t)^\top \xi
    -
    \sqrt{\varepsilon_t}
    \partial_t\log p_\theta(x,t)\tau
    \\
    &\quad+
    o(
        \sqrt{\varepsilon_x}
        +
        \sqrt{\varepsilon_t}
    ).
\end{align}
Therefore $F_\theta$ is close to zero in the infinitesimal perturbation
limit, justifying the Taylor expansion around zero.

\paragraph{The linear term}

Using symmetry of $p_c$, we have
\begin{align}
    p_+(x,t,x',t')-p_-(x,t,x',t')
    =
    \bigl(
        p_d(x,t)-p_d(x',t')
    \bigr)
    p_n(x',t'\mid x,t).
\end{align}
Moreover,
\begin{align}
    p_d(x',t')
    &=
    p_d(x,t)
    +
    \sqrt{\varepsilon_x}
    \nabla_x p_d(x,t)^\top \xi
    +
    \sqrt{\varepsilon_t}
    \partial_t p_d(x,t)\tau
    \\
    &\quad+
    o(
        \sqrt{\varepsilon_x}
        +
        \sqrt{\varepsilon_t}
    ).
\end{align}
Hence
\begin{align}
    p_d(x,t)-p_d(x',t')
    &=
    -
    p_d(x,t)
    \left[
        \sqrt{\varepsilon_x}
        \nabla_x\log p_d(x,t)^\top \xi
        +
        \sqrt{\varepsilon_t}
        \partial_t\log p_d(x,t)\tau
    \right]
    \\
    &\quad+
    o(
        \sqrt{\varepsilon_x}
        +
        \sqrt{\varepsilon_t}
    ).
\end{align}
Combining this with the expansion of $F_\theta$, and averaging over
$\xi,\tau$, gives
\begin{align}
    \E_{p_+-p_-}
    \left[
        F_\theta
    \right]
    &=
    \E_{p_d(x,t)}
    \Bigl[
        \varepsilon_x
        \nabla_x\log p_d(x,t)^\top
        \nabla_x\log p_\theta(x,t)
        \\
        &\qquad\qquad+
        \varepsilon_t
        \partial_t\log p_d(x,t)
        \partial_t\log p_\theta(x,t)
    \Bigr]
    \\
    &\quad+
    o(\varepsilon_x+\varepsilon_t),
\end{align}
where the mixed term vanishes because $\xi$ and $\tau$ are independent and
centered.

\paragraph{The quadratic term}

Similarly,
\begin{align}
    F_\theta^2
    &=
    \varepsilon_x
    \left(
        \nabla_x\log p_\theta(x,t)^\top \xi
    \right)^2
    +
    \varepsilon_t
    \left(
        \partial_t\log p_\theta(x,t)\tau
    \right)^2
    \\
    &\quad+
    2\sqrt{\varepsilon_x\varepsilon_t}
    \left(
        \nabla_x\log p_\theta(x,t)^\top \xi
    \right)
    \left(
        \partial_t\log p_\theta(x,t)\tau
    \right)
    \\
    &\quad+
    o(\varepsilon_x+\varepsilon_t).
\end{align}
Averaging over $\xi,\tau$ yields
\begin{align}
    \E_{\xi,\tau}
    \left[
        F_\theta^2
    \right]
    &=
    \varepsilon_x
    \left\|
        \nabla_x\log p_\theta(x,t)
    \right\|^2
    +
    \varepsilon_t
    \left(
        \partial_t\log p_\theta(x,t)
    \right)^2
    \\
    &\quad+
    o(\varepsilon_x+\varepsilon_t).
\end{align}
Since
\begin{align}
    p_+(x,t,x',t')+p_-(x,t,x',t')
    =
    2p_d(x,t)p_n(x',t'\mid x,t)
    +
    o(1),
\end{align}
we get
\begin{align}
    \frac18
    \E_{p_++p_-}
    \left[
        F_\theta^2
    \right]
    &=
    \frac14
    \E_{p_d(x,t)}
    \Biggl[
        \varepsilon_x
        \left\|
            \nabla_x\log p_\theta(x,t)
        \right\|^2
        \\
        &\qquad\qquad+
        \varepsilon_t
        \left(
            \partial_t\log p_\theta(x,t)
        \right)^2
    \Biggr]
    \\
    &\quad+
    o(\varepsilon_x+\varepsilon_t).
\end{align}

\paragraph{Combining the terms}

Substituting the linear and quadratic terms into the expansion gives
\begin{align}
    \mathcal L_{\stnce}(F_\theta)
    -
    \mathcal L_{\stnce}(0)
    &=
    \frac14
    \E_{p_d(x,t)}
    \Biggl[
        \varepsilon_x
        \left\|
            \nabla_x\log p_\theta(x,t)
        \right\|^2
        +
        \varepsilon_t
        \left(
            \partial_t\log p_\theta(x,t)
        \right)^2
    \Biggr]
    \\
    &\quad
    -
    \frac12
    \E_{p_d(x,t)}
    \Biggl[
        \varepsilon_x
        \nabla_x\log p_d(x,t)^\top
        \nabla_x\log p_\theta(x,t)
        \\
        &\qquad\qquad+
        \varepsilon_t
        \partial_t\log p_d(x,t)
        \partial_t\log p_\theta(x,t)
    \Biggr]
    \\
    &\quad+
    o(\varepsilon_x+\varepsilon_t).
\end{align}
Completing the square, we obtain
\begin{align}
    \mathcal L_{\stnce}(F_\theta)
    -
    \mathcal L_{\stnce}(0)
    &=
    \frac14
    \E_{p_d(x,t)}
    \Biggl[
        \varepsilon_x
        \left\|
            \nabla_x\log p_\theta(x,t)
            -
            \nabla_x\log p_d(x,t)
        \right\|^2
        \\
        &\qquad\qquad+
        \varepsilon_t
        \left(
            \partial_t\log p_\theta(x,t)
            -
            \partial_t\log p_d(x,t)
        \right)^2
    \Biggr]
    \\
    &\quad
    -
    \frac14
    \E_{p_d(x,t)}
    \Biggl[
        \varepsilon_x
        \left\|
            \nabla_x\log p_d(x,t)
        \right\|^2
        +
        \varepsilon_t
        \left(
            \partial_t\log p_d(x,t)
        \right)^2
    \Biggr]
    \\
    &\quad+
    o(\varepsilon_x+\varepsilon_t).
\end{align}
Therefore, up to additive terms independent of $\theta$,
\begin{align}
    \mathcal L_{\stnce}(F_\theta)
    &\equiv
    \frac14
    \E_{p_d(x,t)}
    \Biggl[
        \varepsilon_x
        \left\|
            \nabla_x\log p_\theta(x,t)
            -
            \nabla_x\log p_d(x,t)
        \right\|^2
        \\
        &\qquad\qquad+
        \varepsilon_t
        \left(
            \partial_t\log p_\theta(x,t)
            -
            \partial_t\log p_d(x,t)
        \right)^2
    \Biggr]
    +
    o(\varepsilon_x+\varepsilon_t).
\end{align}
Thus, infinitesimal Gaussian perturbations recover the dual score matching
objective~\citep{guth2025dual}: one term matches the spatial score
$\nabla_x\log p_d(x,t)$, while the other matches the temporal score
$\partial_t\log p_d(x,t)$.

\newpage
\section{Special cases of our method}
\label{app:sec:connection_to_other_methods}

In this section, we detail how many popular losses for learning energy-based models are in fact special cases of our loss.

First, we express the \stnce loss function in a similar form as \citet{ceylan2018cnce}. Recall the \stnce loss function:
\begin{equation}
    \mathcal L_{\stnce}(F_\theta)
    =
    -\E_{p_+}
    \left[
        \log \sigma(F_\theta(x,t,x',t'))
    \right]
    -\E_{p_-}
    \left[
        \log\bigl(
            1-\sigma(F_\theta(x,t,x',t'))
        \bigr)
    \right].
\end{equation}
Since $p_{+}(x,t,x',t') = p_{d}(x,t) p_{n}(x',t'|x,t)$ and $p_{-}(x,t,x',t') = p_{d}(x',t')p_{n}(x,t|x',t')$, due to symmetry one can switch $x, t$ and $x',t'$ without affecting the correctness. As such, we can simply reuse $x, t, x', t'$ as $x', t', x, t$ and write it using a single term, given by
\begin{align}
    \mathcal L_{\stnce}(F_\theta)
    &=
    -2\E_{p_{d}(x,t) p_{n}(x',t'|x,t)}
    \left[
        \log \sigma(F_\theta(x,t,x',t'))
    \right] \\
    &= -2\E_{p_{d}(x|t)p(t) p_{n}(x',t'|x,t)}
    \left[
        \log \sigma(F_\theta(x,t,x',t'))
    \right],\\
F_\theta(x,t,x',t') &= \log p_{\theta}(x|t) + \log p(t) + \log p_{n}(x',t'|x,t)\\
&\quad - \log p_{\theta}(x'|t') - \log p(t') - \log p_{n}(x,t|x',t').
\end{align}
Observe that it concerns two choices: the time prior $p(t)$ and the perturbation kernel $p_{n}(x',t'|x,t)$. We next show how to recover exisiting methods using the above formulation.

\subsection{Summary}

We summarize in Table~\ref{table:stnce_special_cases} how many methods in the literature can be viewed as special cases under our \stnce framework. In particular, the proof for Trajectory Score Matching can be found in Section~\ref{app:sec:proofs}, and the proof for Dual Score Matching can be found in Section~\ref{app:sec:dual_score_matching}. In the following we show how the previous methods fall under our \stnce framework.

\begin{table}[!h]
\small
\centering
\renewcommand{\arraystretch}{1.3}
\begin{tabular}{p{6cm} p{2.3cm} p{5cm}}
\toprule
\textbf{Method} 
& \textbf{Time prior} 
& \textbf{Perturbation kernel} $x', t' | x, t$ \\
\midrule
\multicolumn{3}{c}{\textit{Temporal variation}} 
\\
\midrule
Noise-Contrastive Estimation (NCE) \citep{gutmann2012nce_jmlr}
&
$\tfrac12 \delta_{0}(t) + \tfrac12 \delta_{1}(t)$
&
$\delta_{x}(x')\,\delta_{1-t}(t')$
\\
Flow-Contrastive Estimation (FCE) \citep{gao2020flowcontrastive}
&
$\tfrac12 \delta_{0}(t) + \tfrac12 \delta_{1}(t)$
&
$\delta_{x}(x')\,\delta_{1-t}(t')$
\\
Temporal NCE (\tnce)\citep{rhodes2020telescoping,ouyang2026diffusiveclassificationlosslearning,aggarwal2025boltznce}
&
$p_t(t)$
&
$\delta_{x}(x')\, p_n(t' \mid t)$
\\

Time Score Matching \citep{choi2022densityratio}
&
$p_t(t)$
&
$\delta_{x}(x')\, \delta_{t + \epsilon t}(t'), \quad \epsilon \to 0$
\\
\midrule
\multicolumn{3}{c}{\textit{Spatial variation}} 
\\
\midrule
Conditional NCE (CNCE) \citep{ceylan2018cnce}
&
$\delta_{1}(t)$
&
$p_n(x' \mid x)\,\delta_{t}(t')$
\\
Contrastive Divergence (CD) \citep{hinton2002cd}
&
$\delta_{1}(t)$
&
$p_{\mathrm{MCMC}}(x' \mid x)\,\delta_{t}(t')$
\\
Temporal CNCE (\tcnce)
&
$p_t(t)$
&
$p_n(x' \mid x)\,\delta_{t}(t')$
\\

Space Score Matching~\citep{hyvarinen2005sm,vincent2011dsm}
&
$\delta_1(t)$
&
$\mathcal{N}(x'; x, \epsilon I)\,\delta_{t}(t'), \quad \epsilon \to 0$
\\

Energy-Based Diffusion \citep{song2021sde,thornton2025ebmdiffusion}
&
$p_t(t)$
&
$\mathcal{N}(x'; x, \epsilon I)\,\delta_{t}(t'), \quad \epsilon \to 0$
\\
\midrule
\multicolumn{3}{c}{\textit{Spatiotemporal variation}} 
\\
\midrule

Spatiotemporal NCE (\stnce)
&
$p_t(t)$
&
$p_n(x' \mid x, t, t')\, p_n(t' \mid t)$
\\
\tnce + \tcnce
&
$p_t(t)$
&
$\frac12 \delta_t(t')p_n(x'|x) + \frac12 \delta_x(x')p_n(t'|t)$
\\
\tstnceOracle (forward-reverse, infinitesimal, SDE)
&
$p_t(t)$
&
see Theorem~\ref{th:sm_mle}
\\
\tstnceOracle (forward-reverse, infinitesimal, ODE) 
&
$p_t(t)$
&
see Theorem~\ref{th:total_derivative_matching}
\\
Dual Score Matching \citep{guth2025dual}
&
$p_t(t)$
&
$\mathcal{N}(x', t'; x, t, \epsilon I) \quad \epsilon \to 0$
\\
\bottomrule
\end{tabular}
\caption{\stnce recovers known methods, obtained by specific choices of the time prior and perturbation kernel.}
\label{table:stnce_special_cases}
\end{table}

We next write some additional precisions for the above table. The FCE parameterizes the  the reference distribution as a flow-based model $p_d(x | 0) = p_{\beta}(x)$ that is updated during training. Similarly, the  CD  updates the perturbation kernel during training as $p_{\mathrm{MCMC}}(x' | x)$, that  is a MCMC kernel with stationary distribution $\mathrm{stopgrad}(p_{\theta})$.

\subsection{Noise-Contrastive Estimation (NCE)}

\paragraph{Original loss}

The standard Noise-Contrastive Estimation (NCE)
objective~\cite{gutmann2012nce_jmlr} is given by
\begin{align*}
\mathcal{L}_{\mathrm{NCE}}(\theta)
=
-
\mathbb{E}_{p_d(x | 1)}
\left[
\log
\sigma
\!\left(
\log p_\theta(x)-\log p_d(x | 0)
\right)
\right]
-
\mathbb{E}_{p_d(x | 0)}
\left[
\log
\left(
1-
\sigma
\!\left(
\log p_\theta(x)-\log p_d(x | 0)
\right)
\right)
\right].
\end{align*}
where $\sigma$ is the sigmoid function.

\paragraph{Our loss}

Recall that the generic stNCE loss is
\begin{align*}
\mathcal{L}_{\mathrm{stNCE}}(F_\theta)
=
-
\mathbb{E}_{p_d(x, t) p_n(x', t' | x, t)}
\left[
\log \sigma(F_\theta(x,t,x',t'))
\right]
-
\mathbb{E}_{p_d(x', t') p_n(x, t | x', t')}
\left[
\log
\left(
1-\sigma(F_\theta(x,t,x',t'))
\right)
\right],
\end{align*}
where the logit is
\begin{align*}
F_\theta(x,t,x',t')
=
\log p_\theta(x|t)
+
\log p(t)
+
\log p_n(x',t'|x,t)
-
\log p_\theta(x'|t')
-
\log p(t')
-
\log p_n(x,t|x',t').
\end{align*}
and the model distribution at time zero is known, $p_{\theta}(x, 0) := p_0(x)$. Recall that $p_d(x, t) = p(t) p_d(x | t)$. Choosing the prior on time as $p(t)
=
\frac12 \delta_1(t)
+
\frac12 \delta_0(t),
$
and the perturbation kernel as 
$
p_n(x',t'|x,t)
=
\delta_x(x')
\delta_{1-t}(t')
$ simplifies the loss. In particular, the perturbation leaves the spatial variable unchanged and swaps
the temporal index. We now obtain
\begin{align*}
\mathcal{L}_{\mathrm{stNCE}}(F_\theta)
&=
-
\mathbb{E}_{p_1(x)}
\left[
\log
\sigma
\!\left(
F_\theta(x,1,x,0)
\right)
\right]
-
\mathbb{E}_{p_0(x)}
\left[
\log
\left(
1-
\sigma
\!\left(
F_\theta(x,0,x,1)
\right)
\right)
\right].
\end{align*}
where
\begin{align*}
F_\theta(x,1,x,0)
&=
\log p_\theta(x|1)
+
\log p(1)
+
\log \delta_x(x)
+
\log \delta_0(0)
-
\log p_\theta(x|0)
-
\log p(0)
-
\log \delta_x(x)
-
\log \delta_1(1)
\\
&=
\log p_\theta(x|1)
-
\log p_0(x)
\\
F_\theta(x,0,x,1)
&=
\log p_0(x)
-
\log p_\theta(x|1)
\end{align*}
Substituting these expressions back into the \stnce loss yields
\begin{align}
\mathcal{L}_{\mathrm{stNCE}}
=
-
\mathbb{E}_{p_d(x | 1)}
\left[
\log
\sigma
\!\left(
\log p_\theta(x)-\log p_0(x)
\right)
\right]
-
\mathbb{E}_{p_d(x | 0)}
\left[
\log
\left(
1-
\sigma
\!\left(
\log p_\theta(x)-\log p_0(x)
\right)
\right)
\right],
\end{align}
which exactly recovers the standard NCE objective.

\subsection{Flow-Contrastive Estimation (FCE)}

This is a special case of NCE where the reference distribution $p_d(x|0)$ is not
fixed beforehand but is instead updated adaptively during training. In
particular, FCE parameterizes the reference distribution as a flow-based model $p_d(x | 0) = p_{\beta}(x)$ that is updated during the optimization. 

\subsection{Temporal Noise-Contrastive Estimation (\tnce)}

\paragraph{Original loss}

The temporal Noise-Contrastive Estimation (\tnce)
objective which is heavily inspired by~\citet{rhodes2020telescoping} is given by
\begin{align}
\mathcal{L}_{\mathrm{tNCE}}(\theta)
&=
-
\mathbb{E}_{p_d(x,t)p_n(t'|t)}
\left[
\log
\sigma
\!\left(
\log p_\theta(x|t)-\log p_\theta(x|t')
+\Delta(t,t')
\right)
\right]
\\
&\quad
-
\mathbb{E}_{p_d(x,t')p_n(t|t')}
\left[
\log
\left(
1-
\sigma
\!\left(
\log p_\theta(x|t)-\log p_\theta(x|t')
+\Delta(t,t')
\right)
\right)
\right],
\end{align}
where
\begin{align*}
\Delta(t,t')
=
\log p(t)
+
\log p_n(t'|t)
-
\log p(t')
-
\log p_n(t|t')
\end{align*}
is known to the user. For example, the user can choose $p(t)$ to be uniform over the interval $[0, 1]$, and $p_n(t' | t)$ to be a Gaussian perturbation. In that case, $\Delta(t, t') = 0$. 

\paragraph{Our loss}

Recall that the generic \stnce loss is
\begin{align*}
\mathcal{L}_{\mathrm{\stnce}}(F_\theta)
=
-
\mathbb{E}_{p_d(x, t) p_n(x', t' | x, t)}
\left[
\log \sigma(F_\theta(x,t,x',t'))
\right]
-
\mathbb{E}_{p_d(x', t') p_n(x, t | x', t')}
\left[
\log
\left(
1-\sigma(F_\theta(x,t,x',t'))
\right)
\right],
\end{align*}
where the logit is
\begin{align*}
F_\theta(x,t,x',t')
=
\log p_\theta(x|t)
+
\log p(t)
+
\log p_n(x',t'|x,t)
-
\log p_\theta(x'|t')
-
\log p(t')
-
\log p_n(x,t|x',t').
\end{align*}
Choosing the perturbation kernel as
$
p_n(x',t'|x,t)
=
\delta_x(x')p_n(t'|t)
$
removes all spatial perturbations. In particular, the perturbation leaves
the spatial variable unchanged while perturbing only the temporal index.
We now obtain
\begin{align*}
\mathcal{L}_{\mathrm{\stnce}}(F_\theta)
&=
-
\mathbb{E}_{p_d(x,t)p_n(t'|t)}
\left[
\log
\sigma
\!\left(
F_\theta(x,t,x,t')
\right)
\right]
-
\mathbb{E}_{p_d(x,t')p_n(t|t')}
\left[
\log
\left(
1-
\sigma
\!\left(
F_\theta(x,t,x,t')
\right)
\right)
\right].
\end{align*}
Moreover,
\begin{align*}
F_\theta(x,t,x,t')
&=
\log p_\theta(x|t)
+
\log p(t)
+
\log p_n(t'|t)
-
\log p_\theta(x|t')
-
\log p(t')
-
\log p_n(t|t')
\\
&=
\log p_\theta(x|t)
-
\log p_\theta(x|t')
+
\Delta(t,t').
\end{align*}
Substituting these expressions back into the \stnce loss yields
\begin{align}
\mathcal{L}_{\mathrm{\stnce}}
&=
-
\mathbb{E}_{p_d(x,t)p_n(t'|t)}
\left[
\log
\sigma
\!\left(
\log p_\theta(x|t)-\log p_\theta(x|t')
+\Delta(t,t')
\right)
\right]
\\
&\quad
-
\mathbb{E}_{p_d(x,t')p_n(t|t')}
\left[
\log
\left(
1-
\sigma
\!\left(
\log p_\theta(x|t)-\log p_\theta(x|t')
+\Delta(t,t')
\right)
\right)
\right],
\end{align}
which exactly recovers the \tnce objective.

\subsection{Conditional Noise-Contrastive Estimation (CNCE)}

\paragraph{Original loss}

The Conditional Noise-Contrastive Estimation
(CNCE) objective~\cite{ceylan2018cnce} is given by
\begin{align}
\mathcal{L}_{\mathrm{CNCE}}(\theta)
&=
-
\mathbb{E}_{p_d(x | 1)p_n(x'|x)}
\left[
\log
\sigma
\!\left(
\log p_\theta(x)
+
\log p_n(x'|x)
-
\log p_\theta(x')
-
\log p_n(x|x')
\right)
\right]
\\
&\quad
-
\mathbb{E}_{p_d(x' | 1)p_n(x|x')}
\left[
\log
\left(
1-
\sigma
\!\left(
\log p_\theta(x)
+
\log p_n(x'|x)
-
\log p_\theta(x')
-
\log p_n(x|x')
\right)
\right)
\right].
\end{align}
where $\sigma$ is the sigmoid function.

\paragraph{Our loss}

Recall that the generic stNCE loss is
\begin{align*}
\mathcal{L}_{\mathrm{stNCE}}(F_\theta)
=
-
\mathbb{E}_{p_d(x, t) p_n(x', t' | x, t)}
\left[
\log \sigma(F_\theta(x,t,x',t'))
\right]
-
\mathbb{E}_{p_d(x', t') p_n(x, t | x', t')}
\left[
\log
\left(
1-\sigma(F_\theta(x,t,x',t'))
\right)
\right],
\end{align*}
where the logit is
\begin{align*}
F_\theta(x,t,x',t')
=
\log p_\theta(x|t)
+
\log p(t)
+
\log p_n(x',t'|x,t)
-
\log p_\theta(x'|t')
-
\log p(t')
-
\log p_n(x,t|x',t').
\end{align*}
Choosing the prior on time as $p(t)=\delta_1(t)$ and the perturbation
kernel as
$
p_n(x',t'|x,t)=p_n(x'|x)\delta_t(t')
$
removes all temporal perturbations. In particular, the temporal index
remains fixed while the perturbation acts only on the spatial variable.
We now obtain
\begin{align*}
\mathcal{L}_{\mathrm{stNCE}}(F_\theta)
&=
-
\mathbb{E}_{p_d(x)p_n(x'|x)}
\left[
\log
\sigma
\!\left(
F_\theta(x,1,x',1)
\right)
\right]
-
\mathbb{E}_{p_d(x')p_n(x|x')}
\left[
\log
\left(
1-
\sigma
\!\left(
F_\theta(x,1,x',1)
\right)
\right)
\right].
\end{align*}
Moreover,
\begin{align*}
F_\theta(x,1,x',1)
&=
\log p_\theta(x|1)
+
\log p(1)
+
\log p_n(x'|x)
-
\log p_\theta(x'|1)
-
\log p(1)
-
\log p_n(x|x')
\\
&=
\log p_\theta(x|1)
+
\log p_n(x'|x)
-
\log p_\theta(x'|1)
-
\log p_n(x|x').
\end{align*}
Identifying $p_\theta(x|1)$ with $p_\theta(x)$, we obtain
\begin{align*}
F_\theta(x,1,x',1)
=
\log p_\theta(x)
+
\log p_n(x'|x)
-
\log p_\theta(x')
-
\log p_n(x|x').
\end{align*}
Substituting these expressions back into the \stnce loss yields
\begin{align}
\mathcal{L}_{\mathrm{stNCE}}
&=
-
\mathbb{E}_{p_d(x | 1)p_n(x'|x)}
\left[
\log
\sigma
\!\left(
\log p_\theta(x)
+
\log p_n(x'|x)
-
\log p_\theta(x')
-
\log p_n(x|x')
\right)
\right]
\\
&\quad
-
\mathbb{E}_{p_d(x' | 1)p_n(x|x')}
\left[
\log
\left(
1-
\sigma
\!\left(
\log p_\theta(x)
+
\log p_n(x'|x)
-
\log p_\theta(x')
-
\log p_n(x|x')
\right)
\right)
\right],
\end{align}
which exactly recovers the CNCE objective.

\subsection{Contrastive Divergence (CD)}

\citet{yair2021contrastive} observed that this is a special case of CNCE where the perturbation kernel $p_{n}(x' | x)$ is an MCMC kernel with stationary distribution $\mathrm{stopgrad}(p_{\theta})$ constantly updated during training.

\subsection{Temporal Conditional Noise-Contrastive Estimation (\tcnce)}

\paragraph{Original loss}

The temporal Conditional Noise-Contrastive
Estimation (\tcnce) objective, which is a natural extension of the CNCE objective, is given by
\begin{align}
\mathcal{L}_{\mathrm{tCNCE}}(\theta)
&=
-
\mathbb{E}_{p_d(x,t)p_n(x'|x)}
\left[
\log
\sigma
\!\left(
\log p_\theta(x|t)
+
\log p_n(x'|x)
-
\log p_\theta(x'|t)
-
\log p_n(x|x')
\right)
\right]
\\
&\quad
-
\mathbb{E}_{p_d(x',t)p_n(x|x')}
\left[
\log
\left(
1-
\sigma
\!\left(
\log p_\theta(x|t)
+
\log p_n(x'|x)
-
\log p_\theta(x'|t)
-
\log p_n(x|x')
\right)
\right)
\right].
\end{align}
where $\sigma$ is the sigmoid function.

\paragraph{Our loss}

Recall that the generic stNCE loss is
\begin{align*}
\mathcal{L}_{\mathrm{stNCE}}(F_\theta)
=
-
\mathbb{E}_{p_d(x, t) p_n(x', t' | x, t)}
\left[
\log \sigma(F_\theta(x,t,x',t'))
\right]
-
\mathbb{E}_{p_d(x', t') p_n(x, t | x', t')}
\left[
\log
\left(
1-\sigma(F_\theta(x,t,x',t'))
\right)
\right],
\end{align*}
where the logit is
\begin{align*}
F_\theta(x,t,x',t')
=
\log p_\theta(x|t)
+
\log p(t)
+
\log p_n(x',t'|x,t)
-
\log p_\theta(x'|t')
-
\log p(t')
-
\log p_n(x,t|x',t').
\end{align*}
Choosing the perturbation kernel as
$
p_n(x',t'|x,t)
=
p_n(x'|x)\delta_t(t')
$
leaves the temporal index unchanged while perturbing only the spatial
variable. We now obtain
\begin{align*}
\mathcal{L}_{\mathrm{stNCE}}(F_\theta)
&=
-
\mathbb{E}_{p_d(x,t)p_n(x'|x)}
\left[
\log
\sigma
\!\left(
F_\theta(x,t,x',t)
\right)
\right]
-
\mathbb{E}_{p_d(x',t)p_n(x|x')}
\left[
\log
\left(
1-
\sigma
\!\left(
F_\theta(x,t,x',t)
\right)
\right)
\right].
\end{align*}
Moreover,
\begin{align*}
F_\theta(x,t,x',t)
&=
\log p_\theta(x|t)
+
\log p(t)
+
\log p_n(x'|x)
-
\log p_\theta(x'|t)
-
\log p(t)
-
\log p_n(x|x')
\\
&=
\log p_\theta(x|t)
+
\log p_n(x'|x)
-
\log p_\theta(x'|t)
-
\log p_n(x|x').
\end{align*}
Substituting these expressions back into the \stnce loss yields
\begin{align}
\mathcal{L}_{\mathrm{stNCE}}
&=
-
\mathbb{E}_{p_d(x,t)p_n(x'|x)}
\left[
\log
\sigma
\!\left(
\log p_\theta(x|t)
+
\log p_n(x'|x)
-
\log p_\theta(x'|t)
-
\log p_n(x|x')
\right)
\right]
\\
&\quad
-
\mathbb{E}_{p_d(x',t)p_n(x|x')}
\left[
\log
\left(
1-
\sigma
\!\left(
\log p_\theta(x|t)
+
\log p_n(x'|x)
-
\log p_\theta(x'|t)
-
\log p_n(x|x')
\right)
\right)
\right],
\end{align}
which exactly recovers the tCNCE objective.

\subsection{Sum of tNCE and tCNCE}

\paragraph{Original loss}

Consider the sum of the tNCE and tCNCE objectives:
\begin{align}
\mathcal{L}
=
\mathcal{L}_{\mathrm{tNCE}}
+
\mathcal{L}_{\mathrm{tCNCE}}.
\end{align}

\paragraph{Our loss}

Recall that the generic stNCE loss is
\begin{align*}
\mathcal{L}_{\mathrm{stNCE}}(F_\theta)
=
-
\mathbb{E}_{p_d(x, t) p_n(x', t' | x, t)}
\left[
\log \sigma(F_\theta(x,t,x',t'))
\right]
-
\mathbb{E}_{p_d(x', t') p_n(x, t | x', t')}
\left[
\log
\left(
1-\sigma(F_\theta(x,t,x',t'))
\right)
\right],
\end{align*}
where the logit is
\begin{align*}
F_\theta(x,t,x',t')
=
\log p_\theta(x|t)
+
\log p(t)
+
\log p_n(x',t'|x,t)
-
\log p_\theta(x'|t')
-
\log p(t')
-
\log p_n(x,t|x',t').
\end{align*}
Choosing the perturbation kernel as
\begin{align*}
p_n(x',t'|x,t)
=
\frac12
\delta_t(t')p_n(x'|x)
+
\frac12
\delta_x(x')p_n(t'|t)
\end{align*}
creates two perturbation branches. The first branch leaves the temporal
index unchanged while perturbing the spatial variable, whereas the
second branch leaves the spatial variable unchanged while perturbing the
temporal index. We now split the \stnce loss according to these two branches:
\begin{align*}
\mathcal{L}_{\mathrm{stNCE}}(F_\theta)
&=
-\frac12
\mathbb{E}_{p_d(x,t)p_n(x'|x)}
\left[
\log
\sigma
\!\left(
F_\theta(x,t,x',t)
\right)
\right]
\\
&\quad
-\frac12
\mathbb{E}_{p_d(x',t)p_n(x|x')}
\left[
\log
\left(
1-
\sigma
\!\left(
F_\theta(x,t,x',t)
\right)
\right)
\right]
\\
&\quad
-\frac12
\mathbb{E}_{p_d(x,t)p_n(t'|t)}
\left[
\log
\sigma
\!\left(
F_\theta(x,t,x,t')
\right)
\right]
\\
&\quad
-\frac12
\mathbb{E}_{p_d(x,t')p_n(t|t')}
\left[
\log
\left(
1-
\sigma
\!\left(
F_\theta(x,t,x,t')
\right)
\right)
\right].
\end{align*}
For the spatial perturbation branch, we obtain
\begin{align*}
F_\theta(x,t,x',t)
&=
\log p_\theta(x|t)
+
\log p(t)
+
\log p_n(x'|x)
\\
&\quad
-
\log p_\theta(x'|t)
-
\log p(t)
-
\log p_n(x|x')
\\
&=
\log p_\theta(x|t)
+
\log p_n(x'|x)
-
\log p_\theta(x'|t)
-
\log p_n(x|x'),
\end{align*}
which is exactly the tCNCE logit. Similarly, for the temporal perturbation branch,
\begin{align*}
F_\theta(x,t,x,t')
&=
\log p_\theta(x|t)
+
\log p(t)
+
\log p_n(t'|t)
\\
&\quad
-
\log p_\theta(x|t')
-
\log p(t')
-
\log p_n(t|t')
\\
&=
\log p_\theta(x|t)
-
\log p_\theta(x|t')
+
\Delta(t,t'),
\end{align*}
where
\begin{align*}
\Delta(t,t')
=
\log p(t)
+
\log p_n(t'|t)
-
\log p(t')
-
\log p_n(t|t').
\end{align*}
This is exactly the tNCE logit. Substituting these expressions back into the \stnce loss yields
\begin{align}
\mathcal{L}_{\mathrm{stNCE}}
=
\frac12
\mathcal{L}_{\mathrm{tCNCE}}
+
\frac12
\mathcal{L}_{\mathrm{tNCE}},
\end{align}
which recovers the sum of the tNCE and tCNCE objectives up to a
constant rescaling factor independent of $\theta$.

\subsection{Time Score Matching}

\paragraph{Original loss}

The Time Score Matching objective is given by
\begin{align}
\mathcal{L}_{\mathrm{TSM}}(\theta)
=
\mathbb{E}_{p_d(x,t)}
\left[
\left(
\partial_t \log p_\theta(x|t)
-
\partial_t \log p_d(x|t)
\right)^2
\right].
\end{align}
where $\partial_t \log p_d(x|t)$ denotes the ground-truth temporal score. In practice, the ground-truth temporal score is not available in closed-form so many have proposed tractable ways to rewrite the objective, using integration by parts~\citep{choi2022densityratio} or by introducing a conditioning variable~\citep{yu2025dre}. 

\paragraph{Our loss}

Recall that the generic stNCE loss is
\begin{align*}
\mathcal{L}_{\mathrm{stNCE}}(F_\theta)
=
-
\mathbb{E}_{p_d(x, t) p_n(x', t' | x, t)}
\left[
\log \sigma(F_\theta(x,t,x',t'))
\right]
-
\mathbb{E}_{p_d(x', t') p_n(x, t | x', t')}
\left[
\log
\left(
1-\sigma(F_\theta(x,t,x',t'))
\right)
\right],
\end{align*}
where the logit is
\begin{align*}
F_\theta(x,t,x',t')
=
\log p_\theta(x|t)
+
\log p(t)
+
\log p_n(x',t'|x,t)
-
\log p_\theta(x'|t')
-
\log p(t')
-
\log p_n(x,t|x',t').
\end{align*}
To recover Time Score Matching, we choose a purely temporal infinitesimal
perturbation. That is, the perturbation kernel leaves the spatial
variable unchanged and perturbs time according to
$
p_n(x',t'|x,t)=\delta_x(x')\mathcal{N}(t';t,\varepsilon)
$,
with $\varepsilon \to 0$. We also assume that the temporal perturbation
is symmetric, so $p_n(t'|t)=p_n(t|t')$ up to boundary effects. We now
obtain
\begin{align*}
\mathcal{L}_{\mathrm{stNCE}}(F_\theta)
=
-
\mathbb{E}_{p_d(x,t)p_n(t'|t)}
\left[
\log \sigma(F_\theta(x,t,x,t'))
\right]
-
\mathbb{E}_{p_d(x,t')p_n(t|t')}
\left[
\log
\left(
1-\sigma(F_\theta(x,t,x,t'))
\right)
\right].
\end{align*}
Moreover,
\begin{align*}
F_\theta(x,t,x,t')
&=
\log p_\theta(x|t)
+
\log p(t)
+
\log p_n(t'|t)
-
\log p_\theta(x|t')
-
\log p(t')
-
\log p_n(t|t')
\\
&=
\log p_\theta(x|t)
-
\log p_\theta(x|t')
+
\log p(t)
-
\log p(t'),
\end{align*}
where the perturbation-kernel terms cancel by symmetry. If the time prior
is locally uniform, or if it is absorbed into the time-dependent density,
this reduces to
\begin{align*}
F_\theta(x,t,x,t')
=
\log p_\theta(x|t)
-
\log p_\theta(x|t').
\end{align*}
Writing $t'=t+\sqrt{\varepsilon}\xi$ with $\xi\sim\mathcal{N}(0,1)$, a
first-order Taylor expansion gives
\begin{align*}
F_\theta(x,t,x,t')
=
-\sqrt{\varepsilon}\,
\partial_t \log p_\theta(x|t)\xi
+
o(\sqrt{\varepsilon}).
\end{align*}
Using the second-order expansion of the logistic loss around
$F_\theta=0$, the infinitesimal temporal perturbation limit gives
\begin{align}
\mathcal{L}_{\mathrm{stNCE}}
=
\frac{\varepsilon}{4}
\mathbb{E}_{p_d(x,t)}
\left[
\left(
\partial_t \log p_\theta(x|t)
-
\partial_t \log p_d(x|t)
\right)^2
\right]
+
C
+
o(\varepsilon),
\end{align}
where $C$ is independent of $\theta$. Therefore, up to the constant
factor $\varepsilon/4$ and additive terms independent of $\theta$, the
infinitesimal temporal-only limit of \stnce recovers the Time Score
Matching objective. 

\subsection{Space Score Matching}

\paragraph{Original loss}

The Space Score Matching objective is given by
\begin{align}
\mathcal{L}_{\mathrm{SSM}}(\theta)
=
\mathbb{E}_{p_d(x|1)}
\left[
\left\|
\nabla_x \log p_\theta(x|1)
-
\nabla_x \log p_d(x|1)
\right\|^2
\right].
\end{align}
where $\nabla_x \log p_d(x|1)$ denotes the ground-truth spatial score. This quantity is unknown in closed-form so many have rewritten the objective in a tractable way using integration by parts~\citep{hyvarinen2005sm} or by introducing a conditioning variable~\citep{vincent2011dsm}.

\paragraph{Our loss}

Recall that the Energy-Based Diffusion loss derived above is obtained
from the infinitesimal spatial-only limit of \stnce:
\begin{align}
\mathcal{L}_{\mathrm{stNCE}}
=
\frac{\varepsilon}{4}
\mathbb{E}_{p_d(x,t)}
\left[
\left\|
\nabla_x \log p_\theta(x|t)
-
\nabla_x \log p_d(x|t)
\right\|^2
\right]
+
C
+
o(\varepsilon).
\end{align}
Restricting the time distribution to the single time point $t=1$,
that is, choosing
$
p(t)=\delta_1(t),
$
yields
\begin{align}
\mathcal{L}_{\mathrm{stNCE}}
=
\frac{\varepsilon}{4}
\mathbb{E}_{p_d(x|1)}
\left[
\left\|
\nabla_x \log p_\theta(x|1)
-
\nabla_x \log p_d(x|1)
\right\|^2
\right]
+
C
+
o(\varepsilon),
\end{align}
which exactly recovers the standard Space Score Matching objective up to
a constant rescaling factor and additive terms independent of $\theta$.

\subsection{Energy-Based Diffusion}
\label{app:ssec:ebm_diffusion_loss}

\paragraph{Original loss}

The Energy-Based Diffusion (EBD) loss~\citep{song2021sde,thornton2025ebmdiffusion} is given by
\begin{align}
\mathcal{L}_{\mathrm{EBD}}(\theta)
=
\mathbb{E}_{p_d(x,t)}
\left[
\left\|
\nabla_x \log p_\theta(x|t)
-
\nabla_x \log p_d(x|t)
\right\|^2
\right].
\end{align}
where $\nabla_x \log p_d(x|t)$ denotes the ground-truth spatial score.

\paragraph{Our loss}

Recall that the generic stNCE loss is
\begin{align*}
\mathcal{L}_{\mathrm{stNCE}}(F_\theta)
=
-
\mathbb{E}_{p_d(x, t) p_n(x', t' | x, t)}
\left[
\log \sigma(F_\theta(x,t,x',t'))
\right]
-
\mathbb{E}_{p_d(x', t') p_n(x, t | x', t')}
\left[
\log
\left(
1-\sigma(F_\theta(x,t,x',t'))
\right)
\right],
\end{align*}
where the logit is
\begin{align*}
F_\theta(x,t,x',t')
=
\log p_\theta(x|t)
+
\log p(t)
+
\log p_n(x',t'|x,t)
-
\log p_\theta(x'|t')
-
\log p(t')
-
\log p_n(x,t|x',t').
\end{align*}
To recover the Energy-Based Diffusion loss, we choose a purely spatial
infinitesimal perturbation. That is, the perturbation kernel leaves the
temporal index unchanged and perturbs the spatial variable according to
$
p_n(x',t'|x,t)=\mathcal{N}(x';x,\varepsilon I)\delta_t(t')
$,
with $\varepsilon \to 0$. Since the Gaussian perturbation is symmetric,
$p_n(x'|x)=p_n(x|x')$. We now obtain
\begin{align*}
\mathcal{L}_{\mathrm{stNCE}}
=
-
\mathbb{E}_{p_d(x,t)p_n(x'|x)}
\left[
\log \sigma(F_\theta(x,t,x',t))
\right]
-
\mathbb{E}_{p_d(x',t)p_n(x|x')}
\left[
\log
\left(
1-\sigma(F_\theta(x,t,x',t))
\right)
\right].
\end{align*}
Moreover,
\begin{align*}
F_\theta(x,t,x',t)
&=
\log p_\theta(x|t)
+
\log p(t)
+
\log p_n(x'|x)
-
\log p_\theta(x'|t)
-
\log p(t)
-
\log p_n(x|x')
\\
&=
\log p_\theta(x|t)
-
\log p_\theta(x'|t),
\end{align*}
where the time-prior terms cancel because $t'=t$, and the perturbation
kernel terms cancel by symmetry. Writing
$x'=x+\sqrt{\varepsilon}\xi$ with $\xi\sim\mathcal{N}(0,I)$, a
first-order Taylor expansion gives
\begin{align*}
F_\theta(x,t,x',t)
=
-\sqrt{\varepsilon}\,
\nabla_x \log p_\theta(x|t)^\top \xi
+
o(\sqrt{\varepsilon}).
\end{align*}
Using the second-order expansion of the logistic loss around
$F_\theta=0$, the infinitesimal spatial perturbation limit gives
\begin{align}
\mathcal{L}_{\mathrm{stNCE}}
=
\frac{\varepsilon}{4}
\mathbb{E}_{p_d(x,t)}
\left[
\left\|
\nabla_x \log p_\theta(x|t)
-
\nabla_x \log p_d(x|t)
\right\|^2
\right]
+
C
+
o(\varepsilon),
\end{align}
where $C$ is independent of $\theta$. Therefore, up to the constant
factor $\varepsilon/4$ and additive terms independent of $\theta$, the
infinitesimal spatial-only limit of \stnce recovers the Energy-Based
Diffusion loss.

\subsection{Discussion on RNE \citep{he2026rne}}

Next we consider RNE as proposed in \citet{he2026rne}, which was shown to be capable of regularizing energy-based diffusion models. The RNE loss is roughly given by
\begin{equation}
\mathcal{L} = \E_{x_{t}, t, x_{t+\Delta t}, t+ \Delta t} \norm{\text{sg}\left( \log p(x_{t}|x_{t+\Delta t}) - \log p(x_{t+\Delta t} | x_{t}) \right) + \log p_{t+\Delta t}(x_{t+\Delta t}) - \log p_{t}(x_{t})}^{2},
\end{equation}
where the diffusion convention of time is used, and $\text{sg}$ denotes stop gradient. \citet{he2026rne} showed that the regularizer, when combined with denoising score matching loss, allows training normalized energy-based models.

However, when used on its own and using the score estimates as given by the model itself, the RNE loss suffers from a notable pathology. 
Specifically, the optimum satisfying the Bayes rule constraint induced by RNE is not unique, in contrast to \tstnceSelf: while the ground-truth time-varying density $p_{t}$ is one optimum, there exist infinitely many other optimums that may differ arbitrarily from the data distribution. For example, the degenerate solution where $p_{t} \equiv c$ for some constant $c$ is an optimum under the Variance-Exploding noising process with Euler-Maruyama discretization, and any time-varying probability density induced by a clean data distribution that differ from the empirical distribution is also a optimum.

\newpage
\section{Implementation considerations}

\subsection{Sampling mechanisms}
\label{app:sec:sampling}

\stnce logits are determined by $\log p(x, t) + \log p_{n}(x',t'|x,t) = \log p(t) + \log p(x | t) + \log p_{n}(x'|x, t | t') + \log p_{n}(t'|t)$. As such, one would need to take into account $\log p(t) + \log p_{n}(t'|t) = \log p_{n}(t,t')$ when forming the logit, unless $\log p_{n}(t,t') = \log p_{n}(t',t)$, in which case they cancel out with each other. We next introduce the folding mechanism, which will be beneficial afterwards.

\subsubsection{Useful identity: folding mechanism}
\label{app:sec:folding-mechanism}

We would like to sample $t'$ given $t$, such that when $t$ is uniformly distributed between $[0, 1]$, $t'$ follows the same distribution while remaining reasonably close to $t$. The following mechanism achieves it: one samples $\tilde{t}$ by adding Gaussian noise with zero mean and some standard deviation $\sigma$, and folds $\tilde{t}$ back to lie between $[0, 1]$ and obtain $t'$ using $t' = 1 - | (\tilde{t} \mod 2) - 1 |$.

\begin{proof}
Denote $\epsilon$ to be the Gaussian noise with standard deviation $\sigma$. Observe that a given value of $t'$ can be obtained from different $\tilde{t}$. Specifically, they could be $\tilde{t} = 2k + t'$ or $\tilde{t} = 2k - t'$, where $k \in Z$. We thus have
\begin{equation}
p_{t'}(t') = \sum_{k = -\infty}^{\infty} \left[ p_{\tilde{t}}(2k + t') + p_{\tilde{t}}(2k - t') \right],
\end{equation}
i.e. the sum of two terms. Note that
\begin{equation}
p_{\tilde{t}}(\tilde{t}) = \int_{0}^{1}p_{\epsilon}(x-t)\dd t,
\end{equation}
and
\begin{equation}
p_{t'}(t') = \sum_{k = -\infty}^{\infty} \left[ \int_{0}^{1} p_{\epsilon}(2k + t' - t) \dd t + \int_{0}^{1} p_{\epsilon}(2k - t' - t) \dd t \right].
\end{equation}
For the first term, using a change of variable $u = t - 2k$, we have
\begin{equation}
\sum_{k = -\infty}^{\infty} p_{\tilde{t}}(2k + t') = \sum_{k=-\infty}^{\infty} \int_{-2k}^{-2k + 1} p_{\epsilon} (t' - u) \dd u,
\end{equation}
i.e. it is the sum over all intervals of the form $[2k, 2k+1]$. For the seocnd term, using a change of variable $u = 2k - t$, we have
\begin{equation}
\sum_{k = -\infty}^{\infty} p_{\tilde{t}}(2k - t') = \sum_{k=-\infty}^{\infty} \int_{2k-1}^{2k} p_{\epsilon}(t' - u) \dd u,
\end{equation}
i.e. it is the sum over all intervals of the form $[2k -1, 2k]$.

Summarizing, these intervals cover the entire real line. As such,
\begin{equation}
p_{t'}(t') = \int_{-\infty}^{\infty}p_{\epsilon}(t' - u) \dd u = 1,
\end{equation}
i.e. $t'$ follows uniform distribution.
\end{proof}

Furthermore, given two values $t$ and $t'$, $p_{n}(t'|t) = p _{n}(t|t')$, as these probabilities only depend on the distances of the path between $t$ and $t'$, while a Gaussian is symmetric around its mean.

Note that there are different approaches to sample $t$, $t'$, $x$ and $x'$. In the following we describe two main sampling schemes, which we term \texttt{default} and \texttt{reuse}. We use $z$ to denote the clean data, and consider as perturbation kernel $p_{n}(x',t'|x,t) = p_{n}(t'|t) p_{n}(x'|x,t,t')$.

\subsubsection{default}

Perhaps the most natural implementation is to follow the data generation process of \stnce; we note that this is also the sampling mechanism as employed by \citet{aggarwal2025boltznce}. Given one $z$, one $(x, t, x', t')$ is constructed. It can be summarized as follows:
\begin{align}
t \sim p(t), &\quad  x \sim p(x | z, t),\\
t' \sim p_{n}(t' | t), &\quad x' \sim p_{n}(x' | x, t, t').
\end{align}

However, special care is needed on the logits involving $t$ and $t'$. Perhaps the simplest case is $p(t)$ is uniform and $p_{n}(t'|t) = p_{n}(t|t')$, as all logits naturally cancel out. Interestingly, when $p(t)$ is uniform, the folding mechanism from Section~\ref{app:sec:folding-mechanism} guarantees that $p_{n}(t'|t) = p_{n}(t|t')$, while ensuring that the marginal distribution $p(t') = \int p_{n}(t'|t) p(t) \dd t$ remains uniform.

\subsubsection{reuse}

Alternatively, one can consider the following mechanism:
\begin{align}
t_{0} \sim p(t_{0}), &\quad t_{1} \sim p_{n}(t_{1}|t_{0}),\\
t = [t_{0}, t_{1}], &\quad t' = [t_{1}, t_{0}],\\
x_{0} \sim p(x_{0} | z, t_{0}), &\quad x_{1} \sim p(x_{1} | z, t_{1}),\\
x'_{0} \sim p_{n}(x'_{0} | x_{0}, t_{0}, t_{1}), &\quad x'_{1} \sim p_{n}(x'_{1} | x_{1}, t_{1}, t_{0}),\\
x = [x_{0}, x_{1}], &\quad x' = [x'_{0}, x'_{1}].
\end{align}
Given one $z$, two $(x,t,x',t')$ instances are constructed, using $(t,t')$ and $(t',t)$, respectively. In this case, regardless of the choice of $p(t_{0})$ and $p_{n}(t_{1},t_{0})$, the sampling mechanism ensures that $p_{n}(t,t') = p_{n}(t',t)$. Nevertheless, one may still prefer to employ the folding mechanism from Section~\ref{app:sec:folding-mechanism} so as to make $t$ and $t'$ remain uniform. Furthermore, this enables mechanisms to reduce variances and even save computations, as detailed below.

\paragraph{General case}

\citet{rhodes2020telescoping} noted that reusing the same noise can reduce variance. As such, we could use the same noise when simulating $p(x | z, t)$ or $p_{n}(x' | x, t, t')$. Given one $z$, we would like to obtain samples from $t_{0}$ and $t_{1}$, and we can achieve this by interpolating between $z$ and the same noise using $t_{0}$ and $_{1}$. Certain $p_{n}(x' | x, t, t')$ adds Gaussian noise of some standard deviations, and we can add the same Gaussian noise to $x_{0}$ and $x_{1}$ when simulating $x'_{0}$ and $x'_{1}$.  

\paragraph{\stnce with forward-reverse}

In the case of \stnce with forward-reverse kernel, $p_{n}$ could naturally correspond to diffusion forward process. As such, we could exploit this identity to reduce computations. Specifically, without generality, one can design it so, such that $t_{0}$ is always smaller than $t_{1}$. For numerical stability, one additionally ensure that the difference between them is not too small. Observe that one could reuse $x_{0}$ for $x'_{1}$, suppose one simulate $x_{1}$ by following the diffusion forward process for time $t_{1} - t_{0}$. Furthermore, \stnce with forward-reverse kernel requires the scores always for the smaller $t$, and we only need the scores for $t_{0}$. When we combine \stnce loss with DSM loss, these scores are naturally needed for DSM, as $x_{0}$ are indeed samples from the corresponding distributions.

In order for the forward kernel to perfectly coincide with the forward process while avoiding numerical issues, we additionally employ a mechanism to guarantee that $t_{1}$ and $t_{0}$ are apart by at least $\epsilon$. We follow the convention that this mechanism is employed when we use the \texttt{reuse} scheme, even for algorithms apart from \stnce with forward-reverse.

\subsection{Implementing forward-reverse kernel}

Denote the forward denoising process as $x_{t} = \alpha_{t} z + \beta_{t} n$, where $z$ are data samples and $n$ are noise samples from standard normal distribution.

From the perspective of diffusion and flows, the choice of the kernel in the case where $t' < t$ is obvious (note that we use the flow matching convention of time), as it can be given by the forward process of diffusion models. In this case, we can choose
\begin{equation}
p(x'|x,t',t) = \mathcal{N}\left(x'\middle|\frac{\alpha_{t'}}{\alpha_{t}}x,\beta_{t'}^{2} - \frac{\alpha_{t'}^{2}\beta_{t}^{2}}{\alpha_{t}^{2}}\right),
\end{equation}
which essentially generates samples from $p(x'|t')$ given samples from $p(x|t)$.

Under the scenario that $t' > t$, with the above forward process one might be tempted to use as transition kernel the one that leads to the reverse kernel of diffusions, in which case we would sample from the posterior distribution. However, this posterior is highly non-trivial and is available only under specific data distributions.

\subsubsection{Recovery likelihood}

Inspired by diffusion recovery likelihood \citep{gao2021learning}, we can construct approximations to the reverse kernel. In the following, to simplify notations, we denote $p(x'|x,t',t)$ as $p(x_{\tau}|x_{t})$, where $x_{t} = \alpha x_{\tau} + \beta n$, with $\alpha = \frac{\alpha_{t}}{\alpha_{\tau}}$ and $\beta = \sqrt{\beta_{t}^{2} - \frac{\alpha_{t}^{2} \beta_{t'}^{2}}{\alpha_{t'}^{2}}}$. We have

\begin{align}
\log p(x_{\tau}|x_{t}) &= \log p(x_{\tau}) + \log p(x_{t}|x_{\tau}) + c \\
&\approx \log p(x_{t}) + \langle \nabla_{x}\log p(x_{t}),x_{\tau} - x_{t} \rangle - \frac{1}{2\beta^{2}}\lVert x_{t} - \alpha x_{\tau} \rVert^{2} + c\\
&= \langle \nabla_{x}\log p(x_{t}),x_{\tau} \rangle - \frac{\alpha^{2}}{2\beta^{2}}\left\lVert \frac{x_{t}}{\alpha} - x_{\tau} \right\rVert^{2} + c\\
&= \langle \nabla_{x}\log p(x_{t}),x_{\tau} \rangle - \frac{\alpha^{2}}{2\beta^{2}} \lVert x_{\tau} \rVert^{2} + \frac{\alpha^{2}}{2\beta^{2}}2\left\langle x_{\tau} ,\frac{x_{t}}{\alpha} \right\rangle + c\\
&= -\frac{\alpha^{2}}{2\beta^{2}}\left( \langle x_{\tau},x_{\tau} \rangle - 2 \left\langle x_{\tau},\frac{x_{t}}{\alpha} \right\rangle - \frac{2\beta^{2}}{\alpha^{2}}\left\langle x_{\tau}, \nabla_{x}\log p(x_{t}) \right\rangle \right) + c\\
&= -\frac{\alpha^{2}}{2\beta^{2}}\left\lVert x_{\tau} - \left(\frac{x_{t}}{\alpha} + \frac{\beta^{2}}{\alpha^{2}}\nabla_{x}\log p(x_{t})\right) \right\rVert^{2} + c,
\end{align}
which motivates using the following Gaussian approximation:
\begin{equation}
p(x_{\tau}|x_{t}) = \mathcal{N}\left( x_{\tau} \middle| \frac{x_{t}}{\alpha} + \frac{\beta^{2}}{\alpha^{2}}\nabla_{x}\log p(x_{t}) ,\frac{\beta^{2}}{\alpha^{2}} I \right).
\end{equation}

\subsubsection{Exponential integrator}

The sampling scheme based on diffusion recovery likelihood builds on the assumption that the score is locally constant across $x$. However, this assumption may not be ideal. Indeed, works on accelerated diffusion samplers \citep{lu2022dpm-solver,gonzalez2023seeds} typically assume that the score is locally constant across $t$.

Here we focus on the exponential integrator for the reverse SDE as presented in \citet{gonzalez2023seeds}, specifically the SEEDS-1 variant, which is a first order SDE solver. The update rule is given by
\begin{equation}
x_{\tau} = \frac{\alpha_{\tau}}{\alpha_{t}}x_{t} - 2\beta_{\tau}\left(\exp(h) - 1\right) F(x_{t},t) - \beta_{\tau}\sqrt{\exp(2h) - 1} \epsilon,
\end{equation}
where $F$ is the output of noise prediction model, and $h = \lambda_{\tau} - \lambda_{t} = \log\left(\frac{\alpha_\tau}{\beta_\tau}\right) - \log\left(\frac{\alpha_t}{\beta_t}\right)$. As such, $\exp(h) = \frac{\alpha_{\tau}\beta_{t}}{\beta_{\tau}\alpha_{t}}$ and $\exp(2h) = \left(\frac{\alpha_{\tau}\beta_{t}}{\beta_{\tau}\alpha_{t}}\right)^{2}$.

Observe that the update rule of SEEDS-1 naturally induces a Gaussian distribution. The mean is given by
\begin{align}
&\quad \frac{\alpha_{\tau}}{\alpha_{t}} x_{t} + 2 \beta_{\tau}\left( \frac{\alpha_{\tau}\beta_{t}}{\beta_{\tau}\alpha_{t}} - 1 \right) \beta_{t}\nabla_{x}\log p_{t}(x)\\
&= \frac{\alpha_{\tau}}{\alpha_{t}}x_{t} + 2\left( \frac{\alpha_{\tau}\beta_{t}^{2}}{\alpha_{t}} - \beta_{\tau}\beta_{t} \right)\nabla_{x}\log p_{t}(x)\\
&= \frac{\alpha_{\tau}}{\alpha_{t}}x_{t} + 2\beta_{t}\left( \frac{\alpha_{\tau}}{\alpha_{t}} \beta_{t} - \beta_{\tau} \right)\nabla_{x}\log p_{t}(x),
\end{align}
and the covariance is given by
\begin{equation}
\beta_{\tau}^{2}\left( \left(\frac{\alpha_{\tau}\beta_{t}}{\beta_{\tau}\alpha_{t}}\right)^{2}-1 \right) I = \left(\frac{\alpha_{\tau}^{2}\beta_{t}^{2}}{\alpha_{t}^{2}} - \beta_{\tau}^{2}\right) I.
\end{equation}
Compared with the one obtained through recovery likelihood, the only difference is the scaling term in front of the score.

Consider the scenario where $\alpha_{t} = t$ and $\beta_{t} = 1-t$. The above expression simplifies, resulting in a Gaussian distribution with mean given by
\begin{equation}
\frac{\tau}{t} x_{t} + \frac{2(1-t)(\tau - t)}{t}\nabla_{x}\log p_{t}(x_{t}),
\end{equation}
and covariance given by
\begin{equation}
\frac{(\tau - t) (t + \tau - 2 t \tau)}{t^{2}} I.
\end{equation}

\subsection{Parameterization of energy-based diffusion models}

A proper parameterization of energy-based diffusion models, $E_{\theta}(x,t)$, could be crucial in practice, where the training stability of the high variance of the Denoising Score Matching objective at low noise levels required to be cured \citep{song2019score}. Let $x_t=\alpha_tx_1+\sigma_tz$. In standard Diffusion models, \citet{karras2022edm} proposes to parameterize a denoiser with skip connection:
\begin{align}
D_\theta(x_t, t)=c_\mathrm{skip}(t)x_t+c_\mathrm{out}(t)f_\theta(c_\mathrm{in}(t)x_t, c_{t}(t)),
\end{align}
where $c_\mathrm{skip}(t), c_\mathrm{out}(t), c_\mathrm{in}(t)$ are predefined parameters that depend on the variance of target data and noise schedules. Assumed an optimal denoiser, the denoising score, $\nabla\log p_t$, is then recovered through Tweedie's formula:
\begin{align}
c_\mathrm{skip}(t)x_t+c_\mathrm{out}(t)f_\theta(c_\mathrm{in}(t)x_t, c_{t}(t)) = \alpha_t^{-1}\bigl(x_t + \sigma_t^2\nabla\log p_t(x_t)\bigr).
\end{align}
Therefore, one could parameterize an energy network as \citep{thornton2025ebmdiffusion}:
\begin{align}
U_\theta(x_t, t)=\frac{1-\alpha_tc_\mathrm{skip}(t)}{2\sigma_{t}^{2}}\|x_t\|^2-\frac{\alpha_tc_\mathrm{out}(t)}{c_\mathrm{in}(t)\sigma_{t}^{2}}F_\theta(c_\mathrm{in}(t)x_t, c_{t}(t)),
\end{align}
where $F_\theta$ is an arbitrary scalar network.
Then applying Tweedie's formula, it recovers the EDM preconditioning:
\begin{align}
D_\theta(x_t, t) &= \alpha_t^{-1}\bigl(x_t - \sigma_t^2\nabla U_\theta(x_t, t)\bigr)= c_\mathrm{skip}(t)x_t+c_\mathrm{out}(t)\nabla_{c_\mathrm{in}(t)x_t}F_\theta(c_\mathrm{in}(t)x_t, c_{t}(t)).
\end{align}

\subsubsection{Pathology of model without preconditioning}

In the following, we focus on the case of using Cond OT probability path \citep{lipman2023fm}.

One might be tempted to directly parameterize the EBMs using some neural architectures and obtain its gradient as the score. However, recall that the velocity can be expressed using the score as
\begin{equation}
v = \frac{x}{t} + \frac{1-t}{t}\nabla\log p_{t}(x_t).
\end{equation}
As such, when $t$ is small, any small error in the estimated $\nabla\log p_{t}(x_t)$ is significantly amplified.

\subsubsection{Choice of preconditioning}

Next we develop our exact choice of preconditioning. In the following we use $\sigma$ to denote the standard deviation of data.

\begin{align}
c_{in}(t) &= \frac{1}{\sqrt{t^{2}\sigma^{2} + (1-t)^{2}}},\\
c_{skip}(t) &= \frac{t\sigma^{2}}{t^{2}\sigma^{2} + (1-t)^{2}},\\
c_{out}(t) &= \frac{(1-t)\sigma}{\sqrt{t^{2}\sigma^{2} + (1-t)^{2}}},\\
c_{t}(t) &= \log \left(\frac{1-t}{t}\right).
\end{align}

With the above choices, the velocity can be expressed using the model as
\begin{equation}
v = \frac{c_{skip}(t) - 1}{1 - t}x + \sigma\nabla_{x}F(c_{in}x,c_{t}(t)),
\end{equation}
which avoids the singularity issue under the vanilla model.

$c_{in}(t)$ serves to keep the inputs to the network with unit standard deviation. We have
\begin{align}
\text{Var}(x_{t}) &= t^{2}\sigma^{2} + (1-t)^{2},\\
\text{Std}(x_{t}) &= \sqrt{t^{2}\sigma^{2} + (1-t)^{2}},\\
c_{in}(t) &= \frac{1}{\sqrt{t^{2}\sigma^{2} + (1-t)^{2}}}.
\end{align}

$c_{skip}(t)$ serves as the optimal linear coefficient to predict $x_{1}$ based on $x_{t}$, and its optimal value is given by
\begin{align}
c_{skip}(t) &= \frac{\text{Cov}(x_{1},x_{t})}{\text{Var}(x_{t})}\\
&= \frac{\text{Cov}(x_{1},t x_{1} + (1-t) x_{0}}{t^{2}\sigma^{2} + (1-t)^{2}}\\
&= \frac{t \sigma^{2}}{t^{2}\sigma^{2} + (1-t)^{2}}.
\end{align}

$c_{out}(t)$ serves to keep the network output have variance $1$, and can be derived by considering the variance of $x_{1} - c_{skip}(t) x_{t}$. Observe that
\begin{align}
&\quad x_{1} - c_{skip}(t) x_{t}\\
&= x_{1} - c_{skip}(t) (t x_{1} + (1-t) x_{0})\\
&= (1 - t c_{skip}(t))x_{1} - (1-t) c_{skip}(t) x_{0},
\end{align}
and
\begin{align}
\text{Var}(x_{1} - c_{skip}(t) x_{t}) &= \left(1-t c_{skip}(t)\right)^{2}\sigma^{2} + \left((1-t)c_{skip}(t)\right)^{2}.
\end{align}
Denote the denominator of $x_{1} - c_{skip}(t) x_{t}$ to be $D = t^{2}\sigma^{2} + (1-t)^{2}$, as such $c_{skip}(t) = \frac{t\sigma^{2}}{D}$, and
\begin{equation}
1 - t c_{skip}(t) = 1 - \frac{t^{2}\sigma^{2}}{D} = \frac{(1-t)^{2}}{D}.
\end{equation}
We have
\begin{align}
\text{Var}(x_{1} - c_{skip}(t) x_{t}) &= \left( \frac{(1-t)^{2}}{D} \right)^{2}\sigma^{2} + \left(\frac{(1-t)t\sigma^{2}}{D}\right)^{2}\\
&= \frac{(1-t)^{4}\sigma^{2} + (1-t)^{2}t^{2}\sigma^{4}}{D^{2}}\\
&= \frac{(1-t)^{2}\sigma^{2}\left((1-t)^{2} + t^{2}\sigma^{2}\right)}{D^{2}}\\
&= \frac{(1-t)^{2}\sigma^{2}}{D}.
\end{align}

As such,
\begin{equation}
c_{out}(t) = \text{Std}(x_{1} - c_{skip}(t) x_{t}) = \frac{(1-t)\sigma}{\sqrt{t^{2}\sigma^{2} + (1-t)^{2}}}.
\end{equation}

$c_{noise}(t)$ provides time embedding that reflects the effective noise level. With Cond OT path, the ratio of noise std to signal std is given by $\frac{1-t}{t\sigma}$, and we have
\begin{equation}
c_{noise}(t) = \ln\left(\frac{1-t}{t\sigma}\right).
\end{equation}

Flow matching model is known to work well without needing $c_{in}(t)$ and $c_{t}(t)$. As such, we can drop them from $F$, while keeping them elsewhere.

It might be tempting to directly employ the above choices to parameterize the energy and velocity. Observe that $\frac{c_{out}(t)}{c_{in}(t)} = (1-t)\sigma$. We have
\begin{align}
U(x_{t},t) &= \frac{1-t c_{skip}(t)}{2(1-t)^{2}}\norm{x_{t}}^{2} - \frac{\sigma t}{1-t} F,\\
v &= \frac{c_{skip}(t) - 1}{1 - t} x + \sigma\nabla_{x}F.
\end{align}
Note that $c_{skip}(1) = 1$. As such, when $t=1$, $U(x_{t},t)$ has a singularity.

In order to resolve the singularity, we can either sample time not until $1$ or replace $1-t$ with $1-at$ for some $a < 1$. In this work, we set $a = 0.75$.

\subsection{Score and velocity}
\label{app:sec:score-velocity}

Flow Matching \citep{lipman2023fm} provides an alternative viewpoint for diffusion models. When the noise distribution is Gaussian, we can convert between the score and the velocity. Similar identities have been shown in previous works, e.g. \citet{holderrieth2026glass}.

Consider $x_{t} = \alpha_{t} x_{0} + \beta_{t} n$. We have
\begin{align}
v_{t}(x) &= \frac{\dot{\alpha}}{\alpha} x_{t} + \left( \frac{\dot{\alpha}\beta^{2}}{\alpha} - \dot{\beta}\beta \right)\nabla \log p_{t}(x),\\
\nabla\log p_{t}(x) &= \frac{v_{t}(x) - \frac{\dot{\alpha}}{\alpha}x}{\frac{\dot{\alpha}\beta^{2}}{\alpha} - \dot{\beta}\beta}.
\end{align}

\begin{proof}
\begin{align}
\nabla \log p_{t}(x|x_{1}) &= -\frac{1}{\beta^{2}}\left( x - \alpha x_{1} \right),\\
v_{t}(x|x_{1}) &= \dot{\alpha}x_{1} + \dot{\beta}\frac{x - \alpha x_{1}}{\beta} = \dot{\alpha}x_{1} - \dot{\beta}\beta\nabla\log p_{t}(x|x_{1}),\\
\nabla \log p_{t}(x) &= \E\left[-\frac{x-\alpha x_{1}}{\beta^{2}}\right] = -\frac{x-\alpha \E\left[x_{1}\right]}{\beta^{2}},\\
\E\left[x_{1}\right] &= \frac{\beta^{2}\nabla\log p_{t}(x) + x}{\alpha},\\
v_{t}(x) &= \E\left[\dot{\alpha}x_{1} - \dot{\beta}\beta\nabla\log p_{t}(x|x_{1})\right]
= \dot{\alpha}\E\left[x_{1}\right] - \dot{\beta}\beta\E\left[\nabla\log p_{t}(x|x_{1})\right]\\
&= \frac{\dot{\alpha}}{\alpha}\left(\beta^{2}\nabla\log p_{t}(x) + x\right) - \dot{\beta}\beta \nabla\log p_{t}(x)= \frac{\dot{\alpha}}{\alpha}x + \left( \frac{\dot{\alpha}\beta^{2}}{\alpha} - \dot{\beta}\beta \right) \nabla\log p_{t}(x),\\
\nabla\log p_{t}(x) &= \frac{v_{t}(x) - \frac{\dot{\alpha}}{\alpha}x}{\frac{\dot{\alpha}\beta^{2}}{\alpha} - \dot{\beta}\beta}.
\end{align}
\end{proof}

The above connection between the score and the velocity also enables one to use flow matching loss to train models under different parameterizations \citep{li2026basicsletdenoisinggenerative}. Therefore, while \dualSm, following \citet{guth2025dual}, uses the original denoising score matching loss, for \tstnceSelfDsm we adapt flow matching loss to train the space score. As it still trains the space score, we refer to it still as a space score matching loss.

\section{Experimental details}

Recall that default scheme results in one training instance for each input data, and reuse scheme results in two training instances for each input data. As such, when comparing between reuse and default, we use two times larger batch size for default. Unless otherwise stateed, we train the model for $100000$ steps and perform evaluation every $2000$ steps.

The algorithms involve different hyperparameters. Especially, when the algorithm involves comparisons across different times, we need to determine the sampling scheme for $t'$ given $t$. Inspired by \citet{aggarwal2025boltznce}, we consider either using a local Gaussian proposal before wrapping things back to the range of $t$ or sampling $t'$ uniformly at random. We denote the standard deviation of the Gaussian proposal as $\sigma_{time}$, and use $\sigma_{time} = -1.0$ to denote the case where $t'$ is sampled uniformly at random. Some algorithms involve adding local Gaussian perturbations to the positions, in which case we tune the standard deviation of the perturbation $\sigma_{white}$ between $[0.01, 0.1, 1.0]$.

For \tnce, we tune $\sigma_{time}$. For \tcnce, we tune $\sigma_{white}$; under reuse scheme, as an ablation we additionally tune $\sigma_{time}$. We tune \tstnceWhite in terms of both $\sigma_{time}$ and $\sigma_{white}$. \tstnceSelf and \tstnceOracle are tuned in terms of $\sigma_{time}$. 

We generally select the best model based on val set results under a single random seed, and report test set results. For each job, we select the step that yields the best val set results and use the test set results at that step as the final metrics. Sometimes we would like to obtain uncertainty estimates, in which case we additionally train the models using the best configs with two additional random seeds.

Unless otherwise stated, we use Adam optimizer \citep{kingma2015adam} with no weight decay.

\subsection{Failure modes}
\label{app:sec:failure-modes-details}

\paragraph{Experimental settings}

We consider a simple setup where the reference distribution is a standard Gaussian and the data distribution is a Gaussian mixture:

\begin{align}
    p_0 = \mathcal{N}(0, 1),
    \quad
    p_1 = 0.5\, \mathcal{N}(\mu_1, 0.1^2) + 0.5\, \mathcal{N}(\mu_2, 0.1^2).
\end{align}
By varying $(\mu_1, \mu_2)$, we control both the multimodality of $p_1$ and the mismatch between the high-density regions of $p_0$ and $p_1$, quantified in $[0,1]$. We measure the error as $1 - R^2 \in [0, 1]$. Here, the Pearson $R^2$ measures the correlation between the ground truth and learnt energies of the data points. 
We evaluate three representative methods: \tnce (temporal differences), \tcnce (spatial differences), and \tstnceMixture (spatio-temporal). The trends in Figure~\ref{fig:teaser_figure} are robust across specific algorithms and depend primarily on the method family. 

\paragraph{Multimodality and mismatch scores}
We used heuristic scores to measure two properties of the data distribution $p_1$: its multi-modality and its mismatch with the reference distribution $p_0$. The multimodality score is computed as
\begin{align}
    \mathrm{score}_{\mathrm{multimodality}}
    =
\frac{\mu_2 - \mu_1}{\mu_2 - \mu_1 + 2\sigma}, \in [0,1].
\end{align}
The mismatch score is computed as
\begin{align}
    \mathrm{score}_{\mathrm{mismatch}}
    = 
    1 - \frac{\max\bigl(0,\, \min(b,1) - \max(a,-1)\bigr)}{b - a},
\end{align}
where
\begin{align}
a = \min(\mu_1 - \sigma,\ \mu_2 - \sigma),
\quad
b = \max(\mu_1 + \sigma,\ \mu_2 + \sigma).
\end{align}
To vary both these scores relatively on a approximately uniform grid on $[0, 1] \times [0, 1]$, we used the following parameters for the data distribution: $\sigma = 0.01$ and 
\begin{table}[h]
\centering
\small
\setlength{\tabcolsep}{3pt}
\begin{tabular}{c*{13}{c}}
\toprule
$\mu_1$ 
& -0.90 &  0.79 &  0.01 &  0.01 & -0.91
& -0.95 & -0.97 &  0.80 & -1.10 & -1.07
&  1.00 & -1.03 & -1.10 \\
$\mu_2$
& -0.90 &  0.85 &  0.21 &  0.61 &  0.91
& -0.95 & -0.90 &  1.00 & -0.50 &  1.42
&  1.00 & -0.97 & -0.89 \\
\bottomrule
\end{tabular}
\end{table}

and

\begin{table}[h]
\centering
\small
\setlength{\tabcolsep}{3pt}
\begin{tabular}{c*{12}{c}}
\toprule
$\mu_1$ 
& -1.30 & -2.29 & -1.05 & -1.10 &  1.00
& -1.50 & -3.00 & -3.00 & -1.98 & -1.98
& -1.98 & -3.00 \\
$\mu_2$
& -0.70 &  1.55 & -1.05 & -1.03 &  1.20
& -0.90 & -0.37 & -3.00 & -1.92 & -1.78
& -1.38 & -1.07 \\
\bottomrule
\end{tabular}
\end{table}

\paragraph{Model and optimization} 
The energy function is parameterized by a neural network that is a simple Multi-Layer Perceptron with $2$ hidden layers and SiLU activations. Its weights are updated using the AdamW optimizer \citep{loshchilov2018adamw} with learning rate $1e-3$, weight decay $1e-4$, using a batch size of $256$ over $1e4$ epochs. An exponential moving average of the weights is used, with a high parameter of $0.9999$, to remove the residual variance in the loss. We empirically observed that the residual variance was high when the methods were expected to have a high estimation error, as is the case for example for \tcnce when the data distribution is multi-modal. 

\subsection{Toy mixture}

We use a batch size of $250$ for default scheme and $125$ for reuse scheme. The network structure is based on \citet{du2023rrr} and shown in Table~\ref{tbl:toy-mixture-network}. We tune the learning rates between $[1e-4, 3e-4, 1e-3]$, and select the best model based on NormMSE on the val set.

\begin{table}[t]
\centering
\small
\setlength{\tabcolsep}{4pt}
\renewcommand{\arraystretch}{1.12}
\begin{tabular}{p{0.18\linewidth}p{0.16\linewidth}p{0.56\linewidth}}
\hline
Stage & Dimension & Operation \\
\hline
Input & $10$ & random GMM sample $x_t \in \mathbb{R}^{10}$ \\
Time emb. & $1 \to 32$ & sinusoidal embedding of continuous time $t$ \\
Input proj. & $10 \to 128$ & Linear layer \\
ResBlocks & $128$ &
$4\times$ repeated block:
LN, SiLU, Linear $128\to256$, add time Linear $32\to256$,
SiLU, Linear $256\to256$, SiLU, zero Linear $256\to128$, residual \\
Output & $128 \to 1$ & Linear layer \\
Energy & scalar & $E_\theta(x,t)=f_\theta(x,t)$ \\
\hline
\end{tabular}
\caption{Compact structure of the \texttt{small\_resnet} used for the random-GMM run.}
\label{tbl:toy-mixture-network}
\end{table}

\subsection{MNIST mixture}
\label{app:sec:mnist-mixture-details}

We use a batch size of $512$ for default scheme and $256$ for reuse scheme. We show the network structure in Table~\ref{tbl:mnist-mixture-network}. We fix the learning rate of all methods to $1e-4$, and select the best config based on NormMSE. For RNE, we tune $\Delta t$ between $[1e-5, 1e-4, 1e-3]$, while setting $\lambda$, the weighting of the RNE regularization loss, to $1000$, following \citet{he2026rne}.

\begin{table}[t]
\centering
\small
\setlength{\tabcolsep}{4pt}
\renewcommand{\arraystretch}{1.08}
\begin{tabular}{lll}
\hline
Stage & Resolution / Channels & Operation \\
\hline
Input & $28{\times}28,\;1$ & MNIST-mixture image $x_t$ \\
Time emb. & $32 \to 128$ & sinusoidal PE, Linear, SiLU, Linear \\
Input proj. & $28{\times}28,\;1\to32$ & $3{\times}3$ Conv \\
Enc. 0 & $28{\times}28,\;32$ & $2\times$ ResBlock, dropout $0$ \\
Down 0 & $28\to14,\;32$ & $3{\times}3$ Conv, stride 2 \\
Enc. 1 & $14{\times}14,\;32\to64$ & ResBlock $32\to64$, ResBlock $64\to64$ \\
Down 1 & $14\to7,\;64$ & $3{\times}3$ Conv, stride 2 \\
Enc. 2 & $7{\times}7,\;64$ & $2\times$ ResBlock \\
Middle & $7{\times}7,\;64$ & ResBlock, self-attention, ResBlock \\
Dec. 2 & $7{\times}7,\;64$ & $3\times$ ResBlock with skip concatenation \\
Up 2 & $7\to14,\;64$ & nearest upsample, $3{\times}3$ Conv \\
Dec. 1 & $14{\times}14,\;64$ & $3\times$ ResBlock with skip concatenation \\
Up 1 & $14\to28,\;64$ & nearest upsample, $3{\times}3$ Conv \\
Dec. 0 & $28{\times}28,\;64\to32$ & $3\times$ ResBlock with skip concatenation \\
Output & $28{\times}28,\;32\to1$ & GroupNorm, SiLU, zero $3{\times}3$ Conv \\
Energy & scalar & preconditioned sum-output energy \\
\hline
\end{tabular}
\caption{Compact structure of the U-Net used for MNIST mixture. Base channels are $32$, channel multipliers are $(1,2,2)$.}
\label{tbl:mnist-mixture-network}
\end{table}

\subsection{MNIST}

We always use \texttt{reuse} scheme, and employ a batch size of $256$. The model architecture can be found in Table~\ref{tbl:mnist-network}. We fix the learning rates of all methods to $1e-4$, selecting the best configurations using bits-per-dimension on the val set.

\begin{table}[t]
\centering
\small
\setlength{\tabcolsep}{4pt}
\renewcommand{\arraystretch}{1.08}
\begin{tabular}{lll}
\hline
Stage & Resolution / Channels & Operation \\
\hline
Input & $28{\times}28,\;1$ & MNIST image $x_t$ \\
Time emb. & $64 \to 256$ & sinusoidal PE, Linear, SiLU, Linear \\
Input proj. & $28{\times}28,\;1\to64$ & $3{\times}3$ Conv \\
Enc. 0 & $28{\times}28,\;64$ & $2\times$ ResBlock, dropout $0.1$ \\
Down 0 & $28\to14,\;64$ & $3{\times}3$ Conv, stride 2 \\
Enc. 1 & $14{\times}14,\;64\to128$ & ResBlock $64\to128$, ResBlock $128\to128$ \\
Down 1 & $14\to7,\;128$ & $3{\times}3$ Conv, stride 2 \\
Enc. 2 & $7{\times}7,\;128$ & $2\times$ ResBlock \\
Middle & $7{\times}7,\;128$ & ResBlock, self-attention, ResBlock \\
Dec. 2 & $7{\times}7,\;128$ & $3\times$ ResBlock with skip concatenation \\
Up 2 & $7\to14,\;128$ & nearest upsample, $3{\times}3$ Conv \\
Dec. 1 & $14{\times}14,\;128$ & $3\times$ ResBlock with skip concatenation \\
Up 1 & $14\to28,\;128$ & nearest upsample, $3{\times}3$ Conv \\
Dec. 0 & $28{\times}28,\;128\to64$ & $3\times$ ResBlock with skip concatenation \\
Output & $28{\times}28,\;64\to1$ & GroupNorm, SiLU, zero $3{\times}3$ Conv \\
Energy & scalar & preconditioned inner-product energy \\
Log norm. & scalar & learned time-only MLP $z_\phi(t)$, subtracted from energy \\
\hline
\end{tabular}
\caption{Compact structure of the U-Net used for MNIST. Base channels are $64$, channel multipliers are $(1,2,2)$.}
\label{tbl:mnist-network}
\end{table}

\subsection{ImageNet64}

We use $50000$ samples from the original training set as the val set. Same as \citet{guth2025dual}, we use the original val set as our test set.

We use reuse scheme with batch size $256$. We additionally employ mixed precision with \textit{bf16} to speed up training. We use learning rate of $2e-4$ with $\sigma_{time} = 0.01$ in the first $122000$ steps, while changing the learning rate to $1e-4$ and $\sigma_{time}$ to $0.001$ afterwards.

\begin{table}[t]
\centering
\small
\setlength{\tabcolsep}{4pt}
\renewcommand{\arraystretch}{1.08}
\begin{tabular}{lll}
\hline
Stage & Resolution / Channels & Operation \\
\hline
Input & $64{\times}64,\;3$ & ImageNet64 image $x_t$ \\
Time emb. & $64 \to 256$ & sinusoidal PE, Linear, SiLU, Linear \\
Input proj. & $64{\times}64,\;3\to64$ & $3{\times}3$ Conv \\
Enc. 0 & $64{\times}64,\;64$ & $3\times$ ResBlock, dropout $0$ \\
Down 0 & $64\to32,\;64$ & $3{\times}3$ Conv, stride 2 \\
Enc. 1 & $32{\times}32,\;64\to128$ & ResBlock $64\to128$, $2\times$ ResBlock $128\to128$ \\
Down 1 & $32\to16,\;128$ & $3{\times}3$ Conv, stride 2 \\
Enc. 2 & $16{\times}16,\;128$ & $3\times$ ResBlock \\
Middle & $16{\times}16,\;128$ & ResBlock, self-attention, ResBlock \\
Dec. 2 & $16{\times}16,\;128$ & $4\times$ ResBlock with skip concatenation \\
Up 2 & $16\to32,\;128$ & nearest upsample, $3{\times}3$ Conv \\
Dec. 1 & $32{\times}32,\;128$ & $4\times$ ResBlock with skip concatenation \\
Up 1 & $32\to64,\;128$ & nearest upsample, $3{\times}3$ Conv \\
Dec. 0 & $64{\times}64,\;128\to64$ & $4\times$ ResBlock with skip concatenation \\
Output & $64{\times}64,\;64\to3$ & GroupNorm, SiLU, zero $3{\times}3$ Conv \\
Energy & scalar & preconditioned inner-product energy \\
Log norm. & scalar & learned time-only MLP $z_\phi(t)$, subtracted from energy \\
\hline
\end{tabular}
\caption{Compact structure of the U-Net used for ImageNet64. Base channels are $64$, channel multipliers are $(1,2,2)$, and the middle attention block has $128$ channels with $2$ heads.}
\label{tbl:imagenet64-network}
\end{table}

\subsection{Molecules}

Following \citet{plainer2025consistent}, we optimize the models using AdamW \citep{loshchilov2018adamw}, where we directly use the models as employed by \citet{plainer2025consistent}. Note that we do not employ the mixture version as in \citet{plainer2025consistent}.

We fix the weights of \tnce and \tstnceSelf loss to DSM loss as $1$, and tune $\sigma_{time}$ between $[-1.0, 0.1, 0.01]$; in some cases we experimented with the weights of NCE losses and $\sigma_{time} = 0.001$, but did not observe notable differences.

\subsection{Computational resources}

For Failure modes experiments in Section~\ref{sec:failure-modes-exp}, we run the experiments for roughly $2$ days and a half using $6$ CPU cores of an Apple M4 CPU.

For synthetic Gaussian mixture experiments in Section~\ref{sec:app:syn-mixture-results}, we run the experiments using Intel Xeon Gold 6230. Apart from the jobs to generate the datasets which are generally fast, each job generally uses $4$ CPU cores and takes roughly $1 - 2$ hours.

MNIST Gaussian mixture experiments in Section~\ref{sec:mnist-mixture-exp} and MNIST experiments in Section~\ref{sec:mnist-exp} are run using AMD MI250x GPU and AMD EPYC 7A53 CPU. For MNIST Gaussian mixture experiments, apart from some lightweight preprocessing jobs run using CPUs, each model is trained using $1$ GPUs, which is composed of $2$ GCDs. Depending on the method, running times vary between $9$ hours and $20$ hours. For MNIST experiments, each model is trained using $2$ GPUs, containing $4$ GCDs. Across different methods, the running times vary between around $3$ hours and round $8$ hours. The long run for \tstnceSelf took around $19$ hours.

ImageNet64 experiments are run using a single NVIDIA H200 GPU for $200000$ steps, which takes roughly $5$ days.

For molecule experiments on Chignolin dataset, the models are trained using a single NVIDIA $A100$ GPU. Each job takes around $17 - 21$ hours.

\subsection{License information}

For experiments apart from Failure modes and molecules, our code is inspired by \citet{song2021sde} (Apache-2.0 license), and, notably, some parts are based on \citet{lipman2024flowmatchingguidecode} (CC BY-NC license) and \citet{guth2025dual} (BSD-3-Clause license). Our code for molecule experiment is based on \citet{plainer2025consistent} (MIT license), which uses OpenMM~\citep{eastman2024openmm} (MIT license and LGPL license). MNIST dataset (MIT) license, ImageNet64 dataset (non-commercial research and education) and Chignolin dataset (D.E. Shaw Research Trajectory Data License Agreement) are also employed. Our code is based on PyTorch \citep{ansel2024pytorch} (custom license) or JAX \citep{jax2018github} (Apache-2.0 license). Huggingface accelerate \citep{accelerate2022} (Apache-2.0 license) is used to perform multi-GPU training, and Weights \& Biases \citep{wandb2020} (MIT license) is used to monitor trainings.

\section{Additional experimental results}

\subsection{Synthetic Gaussian mixtures}
\label{sec:app:syn-mixture-results}

We consider the problem of modeling a $20$ mode Gaussian mixture in $10$ dimensions, where the means of the Gaussian mixture are sampled from a standard normal distribution, and the covariances of the individual mixture components are given by $0.1^{2} I$. The models are evaluated using the same evaluation metrics as discussed in Section~\ref{app:sec:mnist-mixture-details}. The experimental results using the reuse scheme are shown in Table~\ref{tbl:synthetic-gmm}.

\begin{table*}[!h]
\footnotesize
\setlength{\tabcolsep}{3pt} %
\renewcommand{\arraystretch}{0.95} %
	\begin{center}
	\caption{Results on random gmm 20 0.1 1.0 with dim 10}
    \label{tbl:synthetic-gmm}
		\begin{tabular}{llccccc}
			\toprule
			\textbf{Category} & \textbf{Method} & \textbf{MSE} $\downarrow$ & \textbf{Ratio} $\downarrow$ & \textbf{NormMSE} $\downarrow$ & \textbf{NormNLL} $\downarrow$ & \textbf{Time} $\downarrow$ \\
			\midrule
			\multirow{1}{*}{\begin{tabular}[c]{@{}c@{}}Temporal\end{tabular}} & \tnce & 72.1 $\pm$ 48.78 & 28.67 $\pm$ 25.24 & 62.16 $\pm$ 70.56 & -0.29 $\pm$ 4.15 & 1.78 \\
			\midrule
			\multirow{1}{*}{\begin{tabular}[c]{@{}c@{}}Spatial\end{tabular}} & \tcnce & 34.46 $\pm$ 18.92 & 10.7 $\pm$ 0.96 & 33.89 $\pm$ 3.16 & -0.5 $\pm$ 0.27 & \textbf{1.45} \\
			\midrule
			\multirow{3}{*}{\begin{tabular}[c]{@{}c@{}}Spatial-\\temporal\end{tabular}} & \tstnceMixture & \underline{8.83 $\pm$ 2.4} & 5.74 $\pm$ 0.79 & 5.69 $\pm$ 1.08 & -4.16 $\pm$ 0.23 & 2.08$^{*}$ \\
			 & \tstnceWhite & 27.72 $\pm$ 17.25 & \underline{5.47 $\pm$ 0.98} & \underline{4.79 $\pm$ 0.92} & \underline{-4.4 $\pm$ 0.16} & \underline{1.62} \\
			 & \tstnceSelf & \textbf{4.29 $\pm$ 2.2} & \textbf{3.82 $\pm$ 0.45} & \textbf{3.16 $\pm$ 0.51} & \textbf{-4.73 $\pm$ 0.15} & 1.87 \\
			\midrule
			\multirow{1}{*}{\begin{tabular}[c]{@{}c@{}}Oracle\end{tabular}} & \tstnceOracle & \textbf{2.02 $\pm$ 0.34} & \textbf{3.51 $\pm$ 0.42} & \textbf{2.82 $\pm$ 0.36} & \textbf{-4.8 $\pm$ 0.1} & \textbf{1.47} \\
			\bottomrule
		\end{tabular}
	\end{center}
\end{table*}

In Table~\ref{tbl:synthetic-gmm-default} we additionally present results using the default scheme as a verification that the results are consistent across different sampling schemes.

\begin{table*}[!h]
\footnotesize
\setlength{\tabcolsep}{3pt} %
\renewcommand{\arraystretch}{0.95} %
	\begin{center}
	\caption{Results on random gmm 20 0.1 1.0 with dim 10}
    \label{tbl:synthetic-gmm-default}
		\begin{tabular}{llccccc}
			\toprule
			\textbf{Category} & \textbf{Method} & \textbf{MSE} $\downarrow$ & \textbf{Ratio} $\downarrow$ & \textbf{NormMSE} $\downarrow$ & \textbf{NormNLL} $\downarrow$ & \textbf{Time} $\downarrow$ \\
			\midrule
			\multirow{1}{*}{\begin{tabular}[c]{@{}c@{}}Temporal\end{tabular}} & \tnce & 51.11 $\pm$ 21.93 & 18.33 $\pm$ 1.9 & 32.74 $\pm$ 6.27 & -1.03 $\pm$ 0.62 & \underline{1.72} \\
			\midrule
			\multirow{1}{*}{\begin{tabular}[c]{@{}c@{}}Spatial\end{tabular}} & \tcnce & 35.24 $\pm$ 2.26 & 10.63 $\pm$ 0.84 & 34.55 $\pm$ 3.17 & -0.44 $\pm$ 0.28 & \textbf{1.42} \\
			\midrule
			\multirow{3}{*}{\begin{tabular}[c]{@{}c@{}}Spatial-\\temporal\end{tabular}} & \tstnceMixture & \underline{4.2 $\pm$ 0.33} & 7.29 $\pm$ 0.78 & 6.79 $\pm$ 0.84 & -4.06 $\pm$ 0.16 & 1.95$^{*}$ \\
			 & \tstnceWhite & 4.32 $\pm$ 1.29 & \underline{5.67 $\pm$ 0.61} & \underline{4.89 $\pm$ 0.38} & \underline{-4.4 $\pm$ 0.13} & 1.86 \\
			 & \tstnceSelf & \textbf{2.15 $\pm$ 0.38} & \textbf{3.63 $\pm$ 0.37} & \textbf{2.78 $\pm$ 0.45} & \textbf{-4.86 $\pm$ 0.12} & 2.52 \\
			\midrule
			\multirow{1}{*}{\begin{tabular}[c]{@{}c@{}}Oracle\end{tabular}} & \tstnceOracle & \textbf{2.05 $\pm$ 0.31} & \textbf{3.68 $\pm$ 0.74} & \textbf{2.85 $\pm$ 0.59} & \textbf{-4.84 $\pm$ 0.13} & \textbf{1.82} \\
			\bottomrule
		\end{tabular}
	\end{center}
\end{table*}

We additionally verified that the results are highly consistent across different configurations of the Gaussian mixture, e.g. when the means and / or the covariances are scaled.

\subsection{MNIST mixture}
\label{sec:app:mnist-mixture-results}

We report samples drawn using the model trained using \tstnceSelf and DSM in Figure~\ref{fig:mnist-mixture-samples}. While DSM results in better samples, as shown in Table~\ref{tbl:mnist-mixture-results} of main paper and further verified in Figure~\ref{fig:mnist-mixture-ddplots}, DSM does not learn the correct energies.

In the main paper, we reported results where the NCE algorithms use the reuse scheme. We also report the results where the NCE algorithms use the default scheme in Table~\ref{tbl:mnist-mixture-default}. The results are largely consistent regardless of the sampling scheme.

\begin{figure}
    \centering
    \begin{tabular}{cc}
        \includegraphics[width=0.3\linewidth]{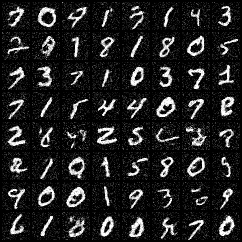} & \includegraphics[width=0.3\linewidth]{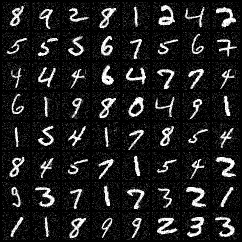}
    \end{tabular}
    \caption{Left: samples drawn using model trained using \tstnceSelf; right: samples drawn using model trained using DSM.}
    \label{fig:mnist-mixture-samples}
\end{figure}

\begin{figure}
    \centering
    \begin{tabular}{c}
        \includegraphics[width=0.9\linewidth]{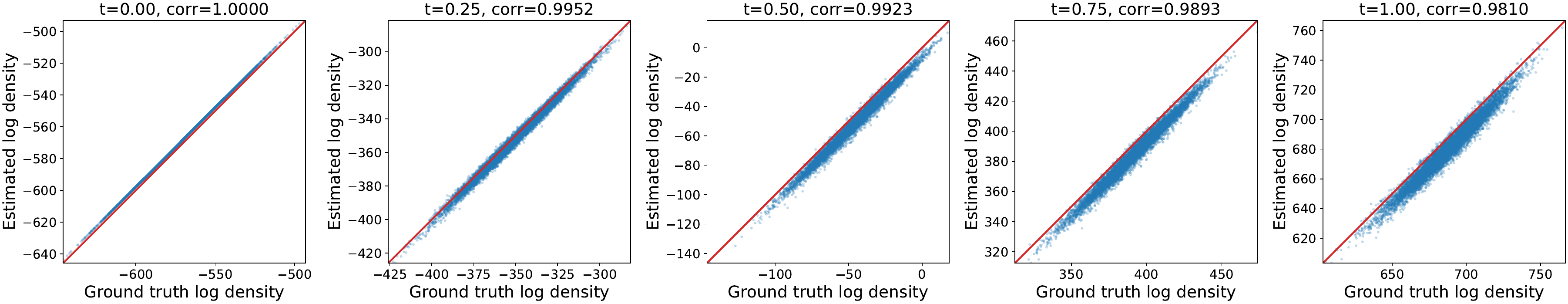} \\
        \includegraphics[width=0.9\linewidth]{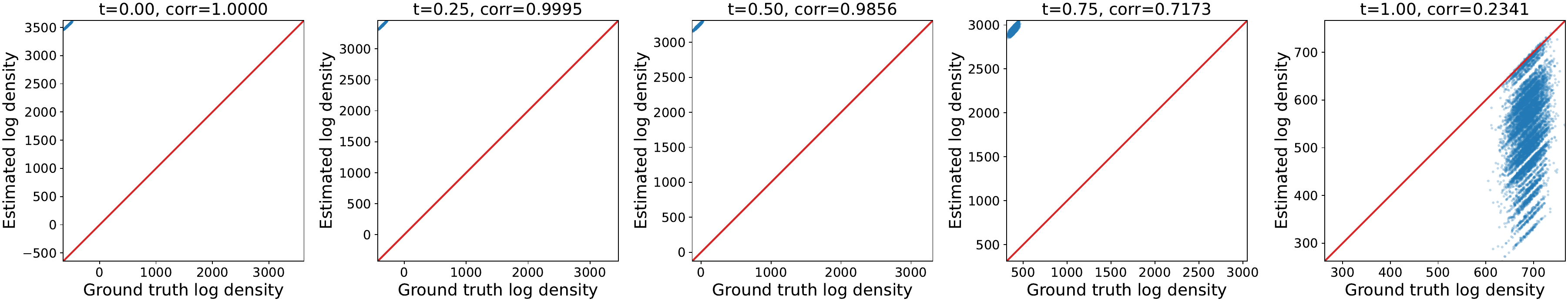}
    \end{tabular}
    
    \caption{We generate density-density plots, where the model's learned energies are normalized using log constant estimated at time corresponding to data. $t=0$ and $t=1$ are approximated using the closest possible $t$ that the models were trained on. Top: Density-density plots of model trained using \tstnceSelf across times. The learned densities are highly accurate. Bottom: Density-density plots of model trained using \dsm across times. The trained model has time-varying log-constant, and the learned energies become highly inaccurate as $t$ approaches $1$, in which case the multimodality of the data distribution increases.}
    \label{fig:mnist-mixture-ddplots}
\end{figure}

\begin{table*}[!h]
\centering
\footnotesize
\setlength{\tabcolsep}{3pt} %
\renewcommand{\arraystretch}{0.95} %
\caption{Results on mnist mixture with dim 784 using default scheme. Reported times are average times per step in seconds. Results are largely consistent with reuse scheme.}
\label{tbl:mnist-mixture-default}
\begin{tabular}{p{1.3cm}lccccc}
\toprule
			\textbf{Category} & \textbf{Method} & \textbf{MSE} $\downarrow$ & \textbf{Ratio} $\downarrow$ & \textbf{NormMSE} $\downarrow$ & \textbf{NormNLL} $\downarrow$ & \textbf{Time} $\downarrow$ \\
			\midrule
			\multirow{2}{*}{\begin{tabular}[c]{@{}c@{}}Temporal\end{tabular}} & \tnce & $6{\times}10^{5} \pm 2{\times}10^{4}$ & 837.42 $\pm$ 86.74 & $8{\times}10^{3} \pm 2{\times}10^{3}$ & -604.45 $\pm$ 13.76 & \textbf{0.4} \\
			 & \ctsm & $3{\times}10^{5} \pm 1{\times}10^{5}$ & $1{\times}10^{3} \pm 608.53$ & $8{\times}10^{3} \pm 2{\times}10^{3}$ & -602.92 $\pm$ 9.59 & 0.68 \\
			\midrule
			\multirow{2}{*}{\begin{tabular}[c]{@{}c@{}}Spatial\end{tabular}} & \tcnce & $1{\times}10^{5} \pm 7{\times}10^{4}$ & $2{\times}10^{3} \pm 346.59$ & $7{\times}10^{3} \pm 568.52$ & -608.12 $\pm$ 2.46 & \textbf{0.4} \\
			 & \dsm & $1{\times}10^{7} \pm 9{\times}10^{5}$ & $1{\times}10^{4} \pm 4{\times}10^{3}$ & $3{\times}10^{4} \pm 1{\times}10^{4}$ & -535.54 $\pm$ 28.81 & 0.7 \\
			\midrule
			\multirow{5}{*}{\begin{tabular}[c]{@{}c@{}}Spatial-\\temporal\end{tabular}} & \tstnceMixture & $4{\times}10^{4} \pm 7{\times}10^{3}$ & 658.96 $\pm$ 255.93 & $2{\times}10^{3} \pm 215.46$ & -644.25 $\pm$ 1.36 & \underline{0.44}$^{*}$ \\
			 & \tstnceWhite & $2{\times}10^{4} \pm 3{\times}10^{3}$ & 599.05 $\pm$ 250.02 & $2{\times}10^{3} \pm 773.45$ & -646.07 $\pm$ 8.23 & \textbf{0.4} \\
			 & \tstnceSelf & \underline{179.86 $\pm$ 21.21} & \underline{38.23 $\pm$ 1.11} & \underline{91.29 $\pm$ 9.14} & \underline{-679.61 $\pm$ 0.47} & 0.47 \\
			 & \rneSelf & $3{\times}10^{6} \pm 1{\times}10^{6}$ & $3{\times}10^{3} \pm 945.15$ & $2{\times}10^{4} \pm 702.98$ & -570.51 $\pm$ 1.09 & \underline{0.44} \\
			 & \dualSm & \textbf{42.16 $\pm$ 7.11} & \textbf{17.82 $\pm$ 1.02} & \textbf{26.04 $\pm$ 2.82} & \textbf{-683.96 $\pm$ 0.33} & 0.7 \\
			\midrule
			\multirow{2}{*}{\begin{tabular}[c]{@{}c@{}}Oracle\end{tabular}} & \tstnceOracle & 148.61 $\pm$ 17.09 & 40.25 $\pm$ 2.35 & 141.75 $\pm$ 3.61 & -677.07 $\pm$ 0.12 & \textbf{0.32} \\
			 & \rneOracle & \textbf{18.57 $\pm$ 1.46} & \textbf{7.88 $\pm$ 0.19} & \textbf{9.41 $\pm$ 0.61} & \textbf{-685.75 $\pm$ 0.09} & 0.4 \\
			\bottomrule
\end{tabular}
\end{table*}

\subsection{Molecules}
\label{sec:app:molecule-results}

We report the samples drawn using diffusion sampler and molecular dynamic simulations based on model trained using \tstnceSelfDsm on Chignolin in Figure~\ref{fig:chignolin-stnce-samples}. The samples agree well with each other, highlighting the quality of the learned energies.

\begin{figure}
    \centering
    \begin{tabular}{cc}
        \includegraphics[width=0.3\textwidth]{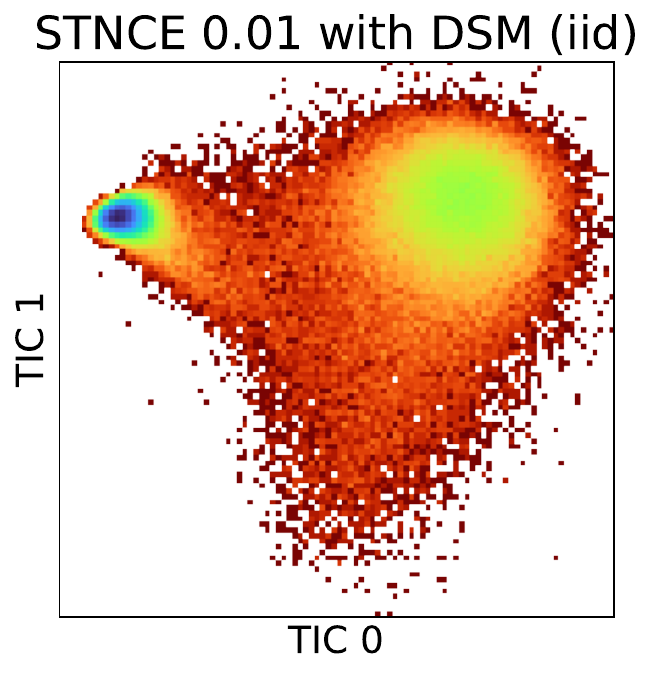} & \includegraphics[width=0.3\textwidth]{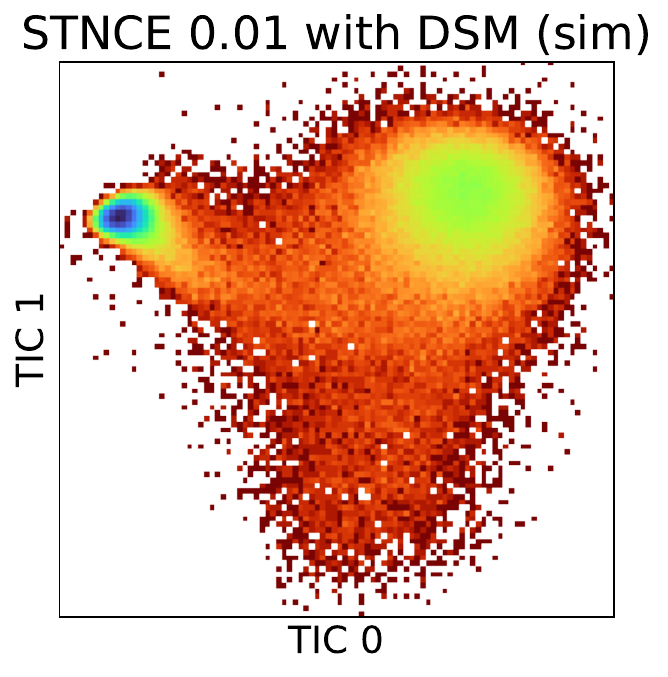} 
    \end{tabular}
    \caption{Samples drawn using diffusion sampler (left) and molecular dynamic simulations (right) with the model trained using \tstnceSelfDsm. The samples are highly consistent, verifying the correctness of the learned energy-based diffusion model.}
    \label{fig:chignolin-stnce-samples}
\end{figure}

\section{Broader impacts}

Our work is relatively theoretical and studies how to train EBMs. Nevertheless, due to the widespread usage of EBMs, it could be used by practioners in malicious manners. We attempt to mitigate such negative impacts by advocating for responsible usage of such trained models.

\end{document}